\definecolor{algoColorKeyword}{named}{blue}
\definecolor{algoColorComment}{named}{olive}
\newtheorem{Theorem}{Theorem}[part]
\newtheorem{Definition}{Definition}[part]
\newtheorem{Proposition}{Proposition}[part]
\newtheorem{Lemma}{Lemma}[part]
\newtheorem{Corollary}{Corollary}[part]
\newtheorem{Remark}{Remark}[part]
\def \N{\mathbb{N}}
\def \R{\mathbb{R}}
\def \E{\mathbb{E}}
\def \F{\mathbb{F}}
\def \P{\mathbb{P}}
\def \Ac{{\cal A}}
\def \Cc{{\cal C}}
\def \Dc{{\cal D}}
\def \Fc{{\cal F}}
\def \Lc{{\cal L}}
\def \Yc{{\cal Y}}
\def \Zc{{\cal Z}}
\def \Rc{{\cal R}}
\def \1{{\mathds 1}}
\def \eps{\varepsilon}
\def \ni{\noindent}
\def\reff#1{{\rm(\ref{#1})}}
\newcommand{\nc}{\newcommand}
\nc{\esssup}{\mathop{\mathrm{ess\,sup}}}
\nc{\essinf}{\mathop{\mathrm{ess\,inf}}}
\def\beqs{\begin{eqnarray*}}
\def\enqs{\end{eqnarray*}}
\def\beq{\begin{eqnarray}}
\def\enq{\end{eqnarray}}
 \title{Discretization and Machine Learning  Approximation  \\
of BSDEs with a Constraint on the Gains-Process}
 \author{Idris Kharroubi\footnote{Sorbonne Universit\'e, Universit\'e de Paris, CNRS, Laboratoire de Probabilit\'es, Statistiques et Mod\'elisations (LPSM), Paris, France,  \texttt{idris.kharroubi  at upmc.fr}.} \and Thomas Lim\footnote{ENSIIE, Laboratoire de Math\'ematiques et Mod\'elisation d'Evry,
 CNRS UMR 8071, \texttt{lim at ensiie.fr}.}\and Xavier Warin \footnote{EDF R$\&$D $\&$ FiME \texttt{xavier.warin at edf.fr}}}  
\begin{document}

\maketitle
\begin{abstract} We study the approximation of backward stochastic differential equations (BSDEs for short) with a constraint on the gains process. We first discretize the constraint by applying a so-called facelift operator at times of a grid. We show that this discretely constrained BSDE converges to the continuously constrained one as the mesh grid converges to zero.   We then focus on the approximation of the discretely constrained BSDE. For that we adopt a machine learning approach. We show that the facelift can be approximated by an optimization problem over a class of neural networks under constraints on the neural network and its derivative. We then derive an algorithm converging to the discretely constrained BSDE 
 as the number of neurons goes to infinity. We end by numerical experiments. 
\end{abstract}


\bigskip
\noindent\textbf{Mathematics Subject Classification (2010):} 65C30, 65M75, 60H35, 93E20, 49L25.

\bigskip
\noindent\textbf{Keywords:} Constrainted BSDEs, discrete-time approximation, neural networks approximation, facelift transformation.

\section{Introduction}
In this paper, we propose an algorithm for the numerical resolution of  BSDEs with a constraint on the gains process.
Namely, we consider the approximation of the minimal solution to the BSDE
\beqs
Y_t & = & g(X_T)+\int_t^Tf(X_s,Y_s,Z_s)ds-\int_t^TZ_s.dB_s+K_T-K_t\;,\quad t\leq T
\enqs 
with constraint
\beqs
Z & \in & \sigma^{\top}(X) \Cc \;,\quad  dt\otimes d\P-a.e.
\enqs
Here,  $\Cc$ is a closed convex set, $K$ is a nondecreasing process, $B$ is a $d$-dimensional Brownian motion and $X$ solves the SDE
\beqs
dX_t & = & b(X_t)dt+\sigma(X_t)dB_t\;.
\enqs
This kind of equation is related to the super-replication under portfolio constraints in mathematical finance (see  e.g. \cite{EKPQ97}). 
A first approach to show existence of minimal solutions was done in \cite{CKS98} using a duality approach. As far as we know the most general result is given in \cite{Peng99} where the existence of a minimal solution is a byproduct of a general limit theorem for supersolutions of Lipschitz BSDE. In particular, the minimal solution is characterized as the limit of penalized BSDEs. 

 As far as we know, this characterization of the constrained solution as limit of penalized BSDEs is the wider one. In particular, we cannot express in a simple way how the constraint on the $Z$ component acts on the process $Y$. 
Therefore, the construction of numerical scheme remains a challenging issue.  A possible approach can be to use the penalized BSDEs to approximate the constrained solution. However, this leads to approximate BSDEs with exploding Lipschitz constant for the generator which  gives a very slow and sometimes unstable converging scheme \cite{gobet2008numerical}. Therefore, one needs to focus on the structure of the constrained solution to set a stable numerical scheme.  

Recently,  \cite{BEM18} gives more insights on the minimal solutions of constrained BSDEs. 
 The minimal solution is proved to satisfy a classical $L^2$-type regularity -as for BSDEs without constraint- but only until $T^-$. At the terminal time $T$, the constraint leads to a boundary effect which consists in replacing the terminal value $g$ by a functional transformation $F_\Cc[g]$ called facelift. This facelift transformation can be interpreted as the smallest function dominating the original function such that its derivative satisfies the constraint.

Taking advantage of those recent advances, we derive a converging approximation algorithm for constrained BSDEs. 

To this end we proceed in two steps. We first provide a discrete time approximation of the constraint.
 Taking into account the boundary effect mentioned in \cite{BEM18}, we apply the facelift operator to the Markov function relating $Y$ to the underlying diffusion $X$, at the points of a given discrete grid. This leads to a new BSDE with a discrete-time constraint.
Using the regularity property provided by \cite{BEM18}, we prove a convergence result as the mesh of the constraint grid goes to zero. Let us mention the article \cite{CEK20} where a similar discretization is obtained for the super-replication price. However the approach used in  \cite{CEK20} is different and consists in the approximation of the dual formulation by restricting it to stepwise processes. 
  
  We then provide a computable algorithm to approximate the BSDE with discrete-time constraint. The main issue here comes from the facelift transformation as it involves all the values of the Markov function linking $Y$ to the underlying diffusion $X$. In particular, we cannot proceed as in the reflected case where the transformation on $Y$ depends only on its value.

To overcome this issue we adopt a machine learning approach. More precisely, we compute the facelift by neural network approximators.
Using the interpretation of the facelift as the smallest dominating function whose derivatives belong to the constraint set $\Cc$, we propose an approximation as a neural network minimizing the square error under the constraint of having derivatives in $\Cc$ and dominating the original function. We notice that this approximation turns the problem into a parametric one, which is numerically valuable.  

Using the universal approximation property of neural networks up to order one,  we show that this approximation converges to the facelift as the number of neurons goes to infinity. 
Combining our machine learning approximation of the facelift with recent machine learning approximations for BSDEs/PDEs described in \cite{HPW19}, we are able to derive a fully computable algorithm for the approximation of BSDEs with constraints on the gain process.

The remainder of paper is organized as follows. In Section \ref{sec2}, we recall the main assumptions, definitions and results on BSDEs with constraints on the gains process. In Section \ref{sec3}, we introduce the discretely constraints and prove the convergence to the continuously constrained BSDEs as the mesh of the discrete constraint grid goes to zero. In Section \ref{sec4}, we present the neural network approximation of the facelift and propose a converging approximation scheme for discretely constrained BSDEs.   \\
Finally, Section \ref{sec5} is devoted to numerical experiments. At first, we show that the numerical approximation of the facelift by a neural network is not obvious using a simple minimization with penalization of the constraints. This simple approach numerically gives an upper bound of the facelift. We then derive an original iterative algorithm that we show on examples to converge to the facelift  till dimension 10.\\
At last the whole algorithm including the facelift approximation and the BSDE resolution using the methodology in \cite{HPW19} is tested on some option pricing problems with differential interest rates.

\section{BSDEs with a convex constraint on the gains-process}\label{sec2}
\subsection{The constrained BSDE}
Given a finite time horizon $T>0$ and a finite dimension $d\geq 1$, we denote by $\Omega$ the space $C([0,T],\R^d)$ of continuous functions from $[0,T]$ to $ \R^d$. We endow this space with the Wiener measure $\P$. We denote by $B$ the coordinate process defined on $\Omega$ by $B_t(\omega)=\omega(t)$ for $\omega\in \Omega$. We then define on $\Omega$ the filtration $(\Fc_t)_{t\in[0,T]}$ defined as the $\P$-completion of the filtration generated by $B$. 

We are given two mesurable functions $b,\sigma:~[0,T]\times \R^d\rightarrow\R^d,~\R^{d\times d}$ on which we make the following assumption.

\vspace{2mm}

\textbf{(H$b,\sigma$)}\begin{enumerate}[(i)]
\item The values of the function $\sigma$ are invertible.
\item The functions $b$, $\sigma$ and $\sigma^{-1}$ are bounded: there exists a constant $M_{b,\sigma}$ such that
\beqs
|b(t,x)|+|\sigma(t,x)|+|\sigma^{-1}(t,x)| & \leq & M_{b,\sigma}
\enqs
for all $t\in[0,T]$ and $x\in\R^d$.
\item The functions $b$ and $\sigma$ are Lipschitz continuous in their space variable uniformly in their time variable: there exists a constant $L_{b,\sigma}$ such that
\beqs
|b(t,x)-b(t,x')|+|\sigma(t,x)-\sigma(t,x')| & \leq & L_{b,\sigma}|x-x'|
\enqs
for all $t\in[0,T]$ and $x,x'\in\R^d$.
\end{enumerate}

\vspace{2mm}

Under Assumption \textbf{(H$b,\sigma$)}, we can define the process 
$X^{t,x}$ as the solution to the SDE
\beqs
X^{t,x}_s & = & x+\int_t^sb(r,X^{t,x}_r)dr+\int_t^s\sigma(r,X^{t,x}_r)dB_r\;,\quad s\in[t,T],
\enqs
and by classical estimates, there exists a constant $C$ such that
\beqs
\E\Big[\sup_{s\in[t,T]}|X^{t,x}_s|^2\Big] & \leq & C 
\enqs
for all $(t,x)\in[0,T]\times\R^d$ and
\beq\label{estim-reg-diff}
\E\Big[\sup_{s\in[t\vee t',T]}|X^{t,x}_s-X^{t',x'}_s|^2\Big] & \leq & C\big(|t-t'|+|x-x'|^2\big) 
\enq
for all $t,t'\in[0,T]$ and $x,x' \in \R^d$.\\

We now define the backward equation. To this end, we consider two functions 
$f:~[0,T]\times\R^d\times\R\times\R^d\rightarrow\R$ and $g:~\R^d\rightarrow\R$ on which we make the following assumption.  

\vspace{2mm}

 \textbf{(H$f,g$)}\begin{enumerate}[(i)]
\item The function $g$ is bounded: there exists a constant $M_g$ such that
\beqs
|g(x)| \leq M_g
\enqs
for all $x\in\R^d$.
\item The function $f$  is continuous and satisfies the following growth property: there exists a constant $M_f$ such that
\beqs
|f(t,x,y,z))| & \leq & M_f\big(1+|y|+|z|\big)
\enqs
for all $t\in[0,T]$, $x\in\R^d$, $y\in\R$ and $z\in\R^d$.
\item The functions $f$ and $g$ are Lipschitz continuous in their space variables uniformly in their time variable: there exists two constants $L_f$ and $L_g$ such that
\beqs
|f(t,x,y,z)-f(t,x',y',z')| & \leq &
 L_f\big(|x-x'|+|y-y'|+|z-z'|\big) \\
 |g(x)-g(x')| & \leq &
 L_g|x-x'|\enqs
for all $t\in[0,T]$, $x,x'\in\R^d$,  $y,y'\in\R$ and $z,z'\in\R^d$.
\end{enumerate}

\vspace{2mm}

We then fix a bounded convex subset $\Cc$ of  $\R^d$ such that $0\in \Cc$. For $t\in[0,T]$, we denote by $\F^t=(\Fc^t_s)_{s\in[t,T]}$ the completion of the filtration generated by $(B_s-B_t)_{s\in[t,T]}$.
We define $
\mathbf{S}^2_{[t,T]}$ (resp. $\mathbf{H}^2_{[t,T]}$) as the set of $\R$-valued \textit{c\`adl\`ag} $\F^t$-adapted (resp. $\R^d$-valued $\F^t$-predictable) processes $U$ (resp. $V$) such that ${\|U\|}_{\mathbf{S}^2_{[t,T]}}:=\E[\sup_{[t,T]}|U_s|^2]<+\infty$ (resp.  ${\|V\|}_{\mathbf{H}^2_{[t,T]}}:=\E[\int_{t}^T|V_s|^2]ds<+\infty$). We also define $\mathbf{A}^2_{[t,T]}$ as the set of $\R$-valued nondecreasing \textit{c\`adl\`ag} $\F^t$-adapted processes $K$ such that $K_t=0$ and $\E[|K_T|^2]<+\infty$. 

A solution to the constrained BSDE with parameters $(t,x,f,g, \Cc)$ is defined as  a triplet of processes $(U,V,A)\in \mathbf{S}^2_{[t,T]}\times\mathbf{H}^2_{[t,T]}\times \mathbf{A}^2_{[t,T]}$ such that
\beq\label{EDSRC1}
U_s & = & g(X^{t,x}_T)+\int_s^Tf(u,X^{t,x}_u,U_u,V_u)du-\int_s^T V_ud B_u+A_T-A_s\quad \\
V_s & \in & \sigma(s, X_s^{t,x})^\top \Cc \label{EDSRC2}
\enq
for $s\in[t,T]$\;. 

 Under Assumptions \textbf{(H$b,\sigma$)} and \textbf{(H$f,g$)} and since $0\in \Cc$, there exists a solution to \reff{EDSRC1}-\reff{EDSRC2} given by
 \beq\label{defsursolU}
 U_s  ~ = ~& (M_g+1)e^{M_f(T-s)}-1\;,\quad s\in[t,T)\;,\quad U_T ~ = ~ g(X^{t,x}_T)
 \enq
 and 
 \beq\label{defsursolV}
  V_s & = & 0\;,\quad s\in[t,T]\;,
 \enq
 for $(t,x)\in [0,T]\times\R^d$. We therefore deduce from Theorem 4.2 in \cite{Peng99} that there exists a unique minimal solution $(Y^{t,x},Z^{t,x},K^{t,x})$ to \reff{EDSRC1}-\reff{EDSRC2} : for any other solution  $(U,V,A)$ to  \reff{EDSRC1}-\reff{EDSRC2}  we have
 \beqs
 Y^{t,x}_s & \leq & U_s\;,\quad s\in [t,T]\;.
 \enqs 
 The aim of this paper is to provide a numerical approximation of this minimal solution  $(Y^{t,x},Z^{t,x},K^{t,x})$. 

 \subsection{Related value function}
Since $Y^{t,x}$ is $\F^t$-adapted, $Y^{t,x}_t$ is almost surely constant and we can define the function 
   $v:~[0,T]\times\R^d\rightarrow\R$ by
\beqs
v(t,x) & = & Y_t^{t,x}\;,\quad (t,x)\in [0,T]\times \R^d\;.
\enqs 
From the uniqueness of the minimal solution to \reff{EDSRC1}-\reff{EDSRC2}, we have 
\beqs
Y_s^{t,x} & = & v(s,X^{t,x}_s)
\enqs 
for all $(t,x)\in [0,T]\times \R^d$ and $s\in[t,T]$.\\
 
 The aim of this paper is to provide a numerical approximation of this minimal solution  $(Y^{t,x},Z^{t,x},K^{t,x})$ or equivalently an approximation of the function $v$.
 
 We end this section by providing some properties of the function $v$. To this end, we define the facelift operator $F_\Cc$ defined by
\beqs
F_\Cc[\varphi] (x) & = & \sup_{y\in\R^d} \{\varphi(x+y)-\delta_\Cc(y)\} \;,\quad x\in\R^d\;,
\enqs
for any function $\varphi:~\R^d\rightarrow\R$, where $\delta_\Cc$ is the support function of the convex set $\Cc$
 \beqs
\delta_\Cc(y) & = & \sup_{z\in\Cc}z.y\;,\quad y\in\R^d \;.
\enqs 
We recall that $\delta_\Cc$ is positively homogeneous and convex. As a consequence the facelift operator $F_\Cc$ satifies
\beq\label{ident-facelift}
F_\Cc[F_\Cc[\varphi]] & = & F_\Cc[\varphi]
\enq
for any function $\varphi:~\R^d\rightarrow\R$. 

We  have the following properties for the function $v$.
 \begin{Proposition}\label{prop-reg-v}
 The function $v$ is locally bounded and satisfies the following properties.
 \begin{enumerate}[(i)]
 \item Time space regularity: there exists a constant $L$ such that
 \beq\label{reg-v}
 |v(t,x)-v(t',x')| & \leq & L\big(|t-t'|^{1\over 2}+|x-x'|\big)
 \enq
 for all $t,t'\in [0,T)$ and $x,x'\in \R^d$.
\item Facelift identity
\beq\label{facelift-id}
v(t,x) & = & F_\Cc[v(t,.)](x)
\enq
for all $(t,x)\in[0,T)\times \R^d$.
\item Value at $T^-$:
\beq\label{tarlouzeno36}
\lim_{t\rightarrow T^-}v(t,x) & = & F_\Cc[g](x)
\enq
for all $x\in\R^d$.
\end{enumerate}
 \end{Proposition}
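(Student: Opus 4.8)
\noindent We sketch the argument. The three assertions are the Markovian counterparts of the structural results on minimal solutions of gains--constrained BSDEs established in \cite{BEM18}, and the plan is to transfer them to the deterministic function $v$ through the flow identity $Y^{t,x}_s=v(s,X^{t,x}_s)$ and the diffusion estimate \reff{estim-reg-diff}. Local (in fact global) boundedness is elementary: the explicit supersolution \reff{defsursolU}--\reff{defsursolV} together with minimality gives $v(t,x)\le (M_g+1)e^{M_fT}-1$, while taking expectations in \reff{EDSRC1}, discarding the nondecreasing term $K^{t,x}$ and invoking the linear growth \textbf{(H$f,g$)}(ii) with the uniform a priori $L^2$--bounds on $(Y^{t,x},Z^{t,x})$ (cf.\ \cite{Peng99,BEM18}) produces a uniform lower bound.

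\smallskip\noindent\emph{Facelift identity \reff{facelift-id}.} The inequality $v(t,x)\le F_\Cc[v(t,\cdot)](x)$ holds trivially by taking $y=0$ and using $0\in\Cc$, $\delta_\Cc(0)=0$. The converse amounts to $v(t,x+y)\le v(t,x)+\delta_\Cc(y)$ for all $y\in\R^d$ and all $t<T$. Heuristically this expresses that the spatial gradient of $v(t,\cdot)$ belongs to $\Cc$: formally $Z^{t,x}_s=\sigma(s,X^{t,x}_s)^\top\nabla_x v(s,X^{t,x}_s)$, so the constraint \reff{EDSRC2} together with invertibility of $\sigma$ forces $\nabla_x v\in\Cc$ for a.e.\ $(s,\omega)$, whence $v(t,x+y)-v(t,x)=\int_0^1\nabla_x v(t,x+\lambda y)\cdot y\,d\lambda\le\delta_\Cc(y)$. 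Making this rigorous for the merely Lipschitz function $v$ (via mollification/viscosity arguments, and upgrading the a.e.\ information to a pointwise statement valid for all $x$) is exactly the facelift characterization of the minimal solution carried out in \cite{BEM18}, which I would adapt to the present Markovian framework. This is the step I expect to be the main obstacle; note that identity \reff{ident-facelift} ($F_\Cc\circ F_\Cc=F_\Cc$) is precisely what makes the resulting fixed--point structure consistent.

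\smallskip\noindent\emph{Regularity \reff{reg-v}.} Granting \reff{facelift-id}, spatial Lipschitz continuity is immediate and uniform in $t<T$: substituting $x+w=x'+y'$ in $F_\Cc[v(t,\cdot)](x)=\sup_w\{v(t,x+w)-\delta_\Cc(w)\}$ and using the subadditivity of $\delta_\Cc$ (a consequence of its convexity and positive homogeneity) gives $v(t,x)\ge v(t,x')-\delta_\Cc(x'-x)\ge v(t,x')-L_\Cc|x-x'|$, with $L_\Cc:=\sup_{z\in\Cc}|z|<\infty$, and symmetrically; hence $|v(t,x)-v(t,x')|\le L_\Cc|x-x'|$. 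For the $\tfrac12$--Hölder continuity in time, take $t<t'<T$, apply \reff{EDSRC1} between $t$ and $t'$ and take expectations to get $v(t,x)-\E[v(t',X^{t,x}_{t'})]=\E\big[\int_t^{t'}f(u,X^{t,x}_u,Y^{t,x}_u,Z^{t,x}_u)\,du\big]+\E[K^{t,x}_{t'}-K^{t,x}_t]$; the generator term is of order $|t'-t|^{1/2}$ by linear growth and the $L^2$--bounds, and combining with the spatial Lipschitz estimate and $\E[|X^{t,x}_{t'}-x|^2]\le C|t'-t|$ from \reff{estim-reg-diff} yields $v(t,x)\ge v(t',x)-C|t'-t|^{1/2}$. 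The reverse inequality requires the uniform control $\E[K^{t,x}_{t'}-K^{t,x}_t]\le C|t'-t|^{1/2}$ of the increments of the nondecreasing part away from $T$, which is the $L^2$--regularity result of \cite{BEM18}; this is the delicate ingredient here.

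\smallskip\noindent\emph{Value at $T^-$, \reff{tarlouzeno36}.} Taking expectations in \reff{EDSRC1} on $[t,T]$ gives $v(t,x)=\E[g(X^{t,x}_T)]+\E\big[\int_t^T f(u,X^{t,x}_u,Y^{t,x}_u,Z^{t,x}_u)\,du\big]+\E[K^{t,x}_T-K^{t,x}_t]$. By boundedness and Lipschitz continuity of $g$, linear growth of $f$ and \reff{estim-reg-diff}, the first term tends to $g(x)$ and the second to $0$ as $t\uparrow T$. Dropping the $K$--term in this identity already yields the pointwise bound $v(t,z)\ge g(z)-C(T-t)^{1/2}$ for every $z$; combined with \reff{facelift-id} and the monotonicity and translation invariance of $F_\Cc$, this gives $v(t,x)=F_\Cc[v(t,\cdot)](x)\ge F_\Cc[g](x)-C(T-t)^{1/2}$, hence $\liminf_{t\uparrow T}v(t,x)\ge F_\Cc[g](x)$. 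For the matching bound $\limsup_{t\uparrow T}v(t,x)\le F_\Cc[g](x)$ one compares $Y^{t,x}$ with a supersolution of \reff{EDSRC1}--\reff{EDSRC2} built from the facelifted terminal datum $F_\Cc[g]$ (which, being a fixed point of $F_\Cc$, has increments dominated by $\delta_\Cc$); verifying that such a supersolution is admissible, i.e.\ that its $Z$--component stays in $\sigma^\top\Cc$ along the backward flow, is the only non-routine point and is supplied by the boundary--layer analysis of \cite{BEM18}. Together with the lower bound this proves \reff{tarlouzeno36}.
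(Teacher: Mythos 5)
Your proposal is correct in outline, but it is organized differently from the paper's proof, which is essentially a direct appeal to Theorem~2.1 of \cite{BEM18}: part~(a) gives the time--space regularity \reff{reg-v} verbatim, part~(c) gives \reff{tarlouzeno36}, and for the facelift identity \reff{facelift-id} the paper uses a short dynamic-programming trick --- restrict the minimal solution to $[t,t+\eps]$, observe that this restriction is the minimal solution of the constrained BSDE with terminal datum $v(t+\eps,\cdot)$, apply Theorem~2.1~(b)--(c) of \cite{BEM18} to conclude that this terminal datum must already be facelifted, and then send $\eps\to 0$ using \reff{reg-v}. You instead take \reff{facelift-id} as the primary object (arguing heuristically via $Z=\sigma^\top Dv\in\sigma^\top\Cc$ and deferring the rigorous version to \cite{BEM18}) and then \emph{derive} the spatial Lipschitz bound from it via $\delta_\Cc(w)\le\sup_{z\in\Cc}|z|\,|w|$ --- a correct and pleasantly self-contained observation that the paper does not make, and one that yields an explicit Lipschitz constant. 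Your treatment of \reff{tarlouzeno36} is likewise more hands-on (the lower bound via dropping $K$ and pushing through $F_\Cc$ is sound), though for the upper bound and for the time-H\"older estimate you still need the $L^2$-regularity of $K$ away from $T$ and the boundary-layer analysis at $T$, which you correctly identify as the genuinely hard ingredients and which both you and the paper ultimately import from \cite{BEM18}. In short: same external pillar, but you rebuild more of the superstructure yourself, buying explicit constants and a cleaner logical dependence of (i) on (ii), at the cost of having to be careful that the facelift identity is obtained for all $t<T$ without circular use of the regularity; the paper's restriction-to-subinterval argument for (ii) is the one idea you would still want to borrow to make that step airtight.
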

 \begin{proof} These results mainly relie on \cite{BEM18}.
From classical estimates on BSDEs and the supersolution exhibited in \reff{defsursolU}-\reff{defsursolV}, the function $v$ is bounded. 
 
 The property \reff{reg-v} is a direct consequence of Theorem 2.1 (a) in \cite{BEM18}. 
 We turn to the facelift identity.  Fix $t\in[0,T)$, $\eps>0$ such that $t+\eps<T$ and $x\in\R^d$. Since  $(Y^{t,x},Z^{t,x},K^{t,x})$ is the minimal solution to  \reff{EDSRC1}-\reff{EDSRC2}, its restriction to $[t,t+\eps]$ is also the minimal solution to 
 \beqs
U_s & = & v(t+\eps,X^{t,x}_{t+\eps})+\int_s^{t+\eps}f(u,X^{t,x}_u,U_u,V_u)du-\int_s^{t+\eps} Z^{t,x}_ud B_u+A_T-A_s\quad \\
V_s & \in & \sigma(s,X_s^{t,x})^\top \Cc
\enqs
for $s\in[t,t+\eps]$\;. From Theorem  2.1 (b) and (c) in \cite{BEM18}, we deduce that
\beqs
v(t+\eps,X^{t,x}_{t+\eps}) & = & F_\Cc[v(t+\eps,.)](X^{t,x}_{t+\eps})\;.
\enqs
From \reff{reg-v}, we get \eqref{facelift-id} by sending $\eps$ to 0.
The last property is a consequence of  Theorem  2.1  (c) in \cite{BEM18}.
 \end{proof}

We end this section by a characterization of the minimal solution as the limit of penalized solutions. More precisely, we introduce the sequence $(Y^{n,t,x},Z^{n,t,x})\in \mathbf{S}^2_{[t,T]}\times \mathbf{H}^2_{[t,T]}$ which is defined for any $n \in \N^*$ as the solution of the following BSDE
\beq\nonumber
Y^{n,t,x}_s & = & g(X^{t,x}_T)\\ \nonumber& &+\int_s^T\Big(f(u,X^{t,x}_u,Y^{n,t,x}_u,Z^{n,t,x}_u) +n\max \big\{-H(\sigma^\top(u,X^{t,x}_u)^{-1}Z^{n,t,x}_u),0 \big\}\Big)du\\ & &-\int_s^T Z^{n,t,x}_ud B_u \;,\qquad s\in[t,T]\;, \label{EDSRpen}
\enq
for $(t,x)\in[0,T]\times \R^d$, where the operator $H$ is defined by
\beqs
H(p) & = & \inf_{|y|=1} (\delta_\Cc(y)-yp)\;,\quad p\in\R^d\;.
\enqs
We also introduce the related sequence of penalized PDEs
\begin{equation}\label{EDP penalisee}\left\{
\begin{array}{l}
-\partial_t  v_n(t,x)-\Lc  v_n(t,x) 
-f\big(t,x, v_n(t,x),\sigma(t,x)^\top D v_n(t,x)\big)\\
-n\max\{-H(D v_n(t,x)),0\}  =  0\;,~~  (t,x)\in [0,T)\times\R^d\\
 v_n(T,x)  =  g(x)\;,~~ x\in\R^d\\
\end{array}\right.
\end{equation}
where the second order local operator $\Lc$ related to the diffusion process $X$ is defined by
\beqs
\Lc \varphi (t,x) & = & b(t,x).D\varphi(t,x)+{1\over 2}\textrm{Tr}\big(\sigma\sigma^\top(t,x) D^2\varphi(t,x)\big) \;,\quad (t,x)\in[0,T]\times \R^d\;,
\enqs
for any function $\varphi:~[0,T]\times\R^d\rightarrow\R$ which is twice differentiable w.r.t. its space variable. As we use the notion of viscosity solution, we refer to \cite{CrandallIshiiLions} for its definition.
\begin{Proposition}\label{prop-pen-fct} (i)  For $(t,x)\in[0,T]\times\R^d$ the BSDE \reff{EDSRpen} admits a unique solution $(Y^{n,t,x},Z^{n,t,x})\in  \mathbf{S}^2_{[t,T]}\times \mathbf{H}^2_{[t,T]}$ and we have
\beqs
Y^{n,t,x}_s & = & v_n(s, X^{t,x}_s)\;,\quad s\in[t,T]\;,
\enqs
where $v_n$ is the unique viscosity solution to \reff{EDP penalisee} with polynomial growth.

\noindent (ii) The sequence  $(Y^{n,t,x})_{n \geq 1}$ is nondecreasing and
\beqs
\lim_{n\rightarrow+\infty}Y^{n,t,x}_s & = & Y^{t,x}_s\;,~ \P-a.s. \mbox{ for } s\in[t,T]\;,
\enqs
for any $(t,x)\in[0,T]\times\R^d$.

\noindent (iii) The sequence  $(v_n)_{n \geq 1}$ is nondecreasing and converges pointwisely to the function $v$ on $[0,T]\times\R^d$. 
\end{Proposition}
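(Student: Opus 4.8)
The plan is to establish the three statements largely by invoking the existing BSDE theory and the comparison principle for viscosity solutions, combined with the already-exhibited supersolution \reff{defsursolU}-\reff{defsursolV}. For part (i): for each fixed $n$ the generator
\[
f_n(t,x,y,z)\;=\;f(t,x,y,z)+n\max\{-H(\sigma^\top(t,x)^{-1}z),0\}
\]
is Lipschitz in $(y,z)$ uniformly in $(t,x)$ — indeed $f$ is Lipschitz by \textbf{(H$f,g$)}(iii), $p\mapsto H(p)$ is Lipschitz because it is an infimum over $|y|=1$ of the $1$-Lipschitz (in $p$) affine maps $p\mapsto\delta_\Cc(y)-yp$, the positive part is $1$-Lipschitz, and $z\mapsto\sigma^\top(t,x)^{-1}z$ is Lipschitz uniformly in $(t,x)$ by \textbf{(H$b,\sigma$)}(ii). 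Since $g(X^{t,x}_T)\in L^2$, the classical theorem of Pardoux–Peng gives a unique solution $(Y^{n,t,x},Z^{n,t,x})\in\mathbf{S}^2_{[t,T]}\times\mathbf{H}^2_{[t,T]}$. The Markov representation $Y^{n,t,x}_s=v_n(s,X^{t,x}_s)$ and the identification of $v_n$ as the unique viscosity solution with polynomial growth of \reff{EDP penalisee} follow from the standard nonlinear Feynman–Kac correspondence (e.g. Pardoux–Peng, El Karoui–Peng–Quenez), once one checks the growth of $f_n$ in its forward variable is at most linear, which again holds by \textbf{(H$f,g$)}(ii) and the linear bound on $H$.

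For part (ii), monotonicity in $n$ is an immediate consequence of the comparison theorem for Lipschitz BSDEs: since $n\mapsto n\max\{-H(\cdot),0\}$ is nondecreasing, we have $f_n\le f_{n+1}$, and the terminal conditions coincide, hence $Y^{n,t,x}_s\le Y^{n+1,t,x}_s$. Moreover, the supersolution $(U,V,A)$ in \reff{defsursolU}-\reff{defsursolV} satisfies \reff{EDSRC1}-\reff{EDSRC2}, so in particular $V\in\sigma^\top\Cc$ forces $\max\{-H(\sigma^\top(u,X^{t,x}_u)^{-1}V_u),0\}=0$; comparing the BSDE \reff{EDSRpen} against this supersolution (whose generator dominates $f_n$ evaluated along it, the $A$ term being nondecreasing) gives $Y^{n,t,x}_s\le U_s$, a bound uniform in $n$. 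Therefore $(Y^{n,t,x}_s)_n$ is nondecreasing and bounded, so it converges $\P$-a.s.; that the limit is exactly the minimal solution $Y^{t,x}$ is precisely the content of the limit theorem for supersolutions of Lipschitz BSDEs, i.e. Theorem~4.2 (and its penalization corollary) in \cite{Peng99}, which we may cite directly since that is how the minimal solution was defined.

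For part (iii), monotonicity and pointwise convergence of $(v_n)$ transfer from part (ii) by evaluating the Markov representations at $s=t$: $v_n(t,x)=Y^{n,t,x}_t$ and $v(t,x)=Y^{t,x}_t$, and both are deterministic. The main obstacle — and the only point that is not a routine invocation — is the convergence step in (ii): one must be sure that the a.s. increasing limit of the $Y^{n,t,x}$ coincides with the minimal solution rather than some larger solution. This is where the structure of \cite{Peng99} is essential: the penalized sequence is built precisely so that its limit is the smallest supersolution, and the uniform upper bound by \reff{defsursolU} guarantees we stay in the class where that theorem applies. I would therefore devote the bulk of the write-up to carefully citing Theorem~4.2 of \cite{Peng99} and checking that its hypotheses (Lipschitz generators with exploding constants, existence of one supersolution, $Z$-constraint of the stated convex type) are met here, and treat parts (i) and (iii) as short corollaries.
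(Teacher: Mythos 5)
Your proposal follows essentially the same route as the paper: part (i) is the Lipschitz check on the penalized generator plus the standard Pardoux--Peng existence/uniqueness and nonlinear Feynman--Kac identification (the paper cites Theorem 1.1 and Theorem 2.2 of \cite{pardoux1998backward} and Theorem 5.1 of \cite{PPR97}); part (ii) rests on Theorem 4.2 of \cite{Peng99}; part (iii) is read off from (ii) via the Markov representation. The one substantive step you gloss over is the identification of the constraint sets in (ii): Peng's theorem, applied to the penalization $n\max\{-H(\sigma^\top(u,X^{t,x}_u)^{-1}Z_u),0\}$, yields convergence to the minimal solution of the BSDE subject to the constraint $H(\sigma^\top(X^{t,x})^{-1}\tilde Z^{t,x})\geq 0$, not directly to the minimal solution subject to $\tilde Z^{t,x}\in\sigma^\top(X^{t,x})\Cc$. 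You use the easy direction of this equivalence (membership in $\Cc$ implies $H\geq 0$) when comparing against the explicit supersolution, but concluding that the limit is $Y^{t,x}$ requires the converse, namely that $\{p:\delta_\Cc(y)-y\cdot p\geq 0 \ \forall |y|=1\}=\Cc$, which holds because $\Cc$ is closed and convex (the paper invokes Theorem 13.1 of \cite{Rock70} for exactly this). Your write-up should make that identification explicit; otherwise the argument is sound and matches the paper's.
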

\begin{proof}
(i) By the definition of the operator $H$, the driver of BSDE 
\reff{EDSRpen} is globally Lipschitz continuous for all $n\geq1$. From Theorem 1.1 in \cite{pardoux1998backward}, there exists a unique  $(Y^{n,t,x},Z^{n,t,x})\in  \mathbf{S}^2_{[t,T]}\times \mathbf{H}^2_{[t,T]}$ solution to \reff{EDSRpen} for all $n\geq 1$.
Then from Theorem 2.2 in  \cite{pardoux1998backward}, we get that the function $v_n:~[0,T]\times \R^d\rightarrow\R^d$ defined by
\beqs
v_n(t,x) & = & Y^{n,t,x}_t\;,\quad (t,x)\in[0, T]\times\R^d\;, 
\enqs
is a continuous viscosity solution to \reff{EDP penalisee}. By uniqueness to BSDE \reff{EDSRpen}, we get
\beqs
Y^{n,t,x}_s & = & v_n(s, X^{t,x}_s)\;,\quad s\in[t,T]\;.
\enqs
Using Theorem 5.1 in \cite{PPR97}, $v_n$ is the unique  viscosity solution to \reff{EDP penalisee} with polynomial growth.

\vspace{2mm}

\noindent (ii) From Theorem 4.2 in \cite{Peng99}, the sequence  $(Y^{n,t,x})_{n \geq 1}$ is nondecreasing and converges pointwisely to   $\tilde Y^{t,x}$ where  $(\tilde Y^{t,x}, \tilde Z^{t,x})$ is the minimal solution to \reff{EDSRC1}, with the constraint
\beqs
H\big(\sigma^\top(X^{t,x})^{-1}\tilde Z^{t,x}\big) & \geq & 0\;,
\enqs 
 for any $(t,x)\in[0,T]\times\R^d$. Since $\Cc$ is closed, we get from  Theorem 13.1 in \cite{Rock70} 
\beqs
\tilde Z^{t,x} & \in & \sigma(X^{t,x})^\top \Cc
\enqs
and $(\tilde Y^{t,x}, \tilde Z^{t,x})=( Y^{t,x},  Z^{t,x})$ for all  $(t,x)\in [0,T)\times\R^d$.

\vspace{2mm}

\noindent (iii) The  nondecreasing convergence of $v_n$ to $v$ is an immediate consequence of (ii).
\end{proof}

\section{Discrete-time approximation of the constraint}\label{sec3}
\subsection{Discretely constrained BSDE}
We introduce in this section a BSDE with discretized constraint on the gains process. 
To this end, we first extend the definition of the facelift operator to random variables. More precisely, for $s\in[0,T]$ and  $L>0$, we denote by $\mathbf D^{}_{L,s}$ the set of random flows $R=\{R^{t,x},~(t,x)\in[0,s]\times \R^d\}$ of the form
 \beq\label{dom-def-FL}
 R^{t,x} & = & \varphi(X^{t,x}_{s})\;,\quad (t,x)\in[0,s]\times \R^d\;,
 \enq
where $\varphi:~\R^d\rightarrow\R$ is $L$-Lipschitz continuous. We also define the set $\mathbf D^{}_{s}$ by
\beqs
\mathbf D^{}_{s} & = & \bigcup_{L>0}\mathbf D^{}_{L,s}\;.
\enqs 
We then define the operator $\mathfrak{F}_{\Cc,s}$ on $\mathbf D^{}_{s}$ by
\beqs
\mathfrak{F}_{\Cc,s}[R]^{t,x} & = & F_\Cc[\varphi](X^{t,x}_{s})\;, \quad (t,x)\in[0,s]\times \R^d\;,
\enqs 
for $R\in \mathbf D^{}_{s}$ of the form \reff{dom-def-FL}.
We notice that the function $\varphi$ appearing in the representation \reff{dom-def-FL} is uniquely defined. Therefore the extended facelift operator $\mathfrak F _\Cc$ is well defined. Moreover, it satisfies the following stability property
\beq\label{stab-D-FL}
R\in \mathbf D^{}_{L,s} & \Rightarrow & \mathfrak{F}_{\Cc,s}^{}[R]\in \mathbf D^{}_{L,s}
\enq
for all $s\in[0,T]$, $L>0$  and  $(t,x)\in[0,T]\times\R^d$. Hence $\mathfrak F _{\Cc,s}$ maps $\mathbf D^{}_{s}$ into itself.

\vspace{2mm}

%
We then fix a grid $\mathcal R=\{r_0=0<r_1<\ldots<r_n=T\}$, with $n \in \N^*$, of the time interval $[0,T]$ and we consider the discretely constrained 
BSDE: find $(Y^{\mathcal{R},t,x},\tilde Y^{\mathcal{R},t,x},Z^{\mathcal{R},t,x},K^{\mathcal{R},t,x})\in  \mathbf S ^2 _{[t,T]}\times \mathbf S ^2 _{[t,T]}\times \mathbf H ^2 _{[t,T]}\times \mathbf A ^2 _{[t,T]}$ such that
\beq\label{EDSRDC1}
Y^{\mathcal{R},t,x}_T ~=~\tilde Y_T^{\mathcal{R},t,x} & = & F_\Cc[g](X_T^{t,x})
\enq
and
\beq\label{EDSRDC2}
\tilde Y^{\mathcal{R},t,x}_u & = & Y^{\mathcal{R},t,x}_{r_{k+1}}+\int_u^{r_{k+1}}f(s,X_s,\tilde Y^{\mathcal{R},t,x}_s,Z^{\mathcal{R},t,x}_s)ds-\int_u^{r_{k+1}}Z_s^{\mathcal{R},t,x} dB_s~~\qquad\quad \\
 Y^{\mathcal{R},t,x}_u & = & \tilde Y^{\mathcal{R},t,x}_u\mathds{1}_{(r_k,r_{k+1})}(u)+\mathfrak{F}_{\Cc,r_k}[\tilde Y^{\mathcal{R}}_u]^{t,x}\mathds{1}_{\{ r_k\} }(u)\label{EDSRDC3}
\enq
for $u\in[r_k,r_{k+1})\cap[t,T]$, $k=0,\ldots,n-1$, and
\beqs
K^{\mathcal{R},t,x}_u & = & \sum_{k=0}^n ( Y^{\mathcal{R},t,x}_{r_{k}}-\tilde Y^{\mathcal{R},t,x}_{r_{k}})\mathds{1}_{t\leq r_k\leq u\leq T}
\enqs
for $u\in[t,T]$.

We also introduce  the related PDE which takes the following form
\beq\label{EDPDC1}
v^{\mathcal{R}}(T,x) ~~=~~\tilde v^{\mathcal{R}}(T,x) & = & F_\Cc[g](x)\;,\quad x\in\R^d\;,
\enq
\begin{equation}\label{PDE-discrete-const}\left\{
\begin{array}{l}
-\partial_t \tilde v^{\mathcal{R}}(t,x)-\Lc \tilde v^{\mathcal{R}}(t,x) \\
-f\big(t,x,\tilde v^{\mathcal{R}}(t,x),\sigma(t,x)^\top D\tilde v^{\mathcal{R}}(t,x)\big)  =  0\;,~~  (t,x)\in [r_k,r_{k+1})\times\R^d\\
\tilde v^{\mathcal{R}}(r_{k+1}^-,x)  =  F_\Cc[\tilde v^{\mathcal{R}}(r_{k+1},.)](x)\;,~~ x\in\R^d\\
\end{array}\right.
\end{equation}
and 
\beq\label{EDPDC3}
v^{\mathcal{R}}(t,x) & = & \tilde v^{\mathcal{R}}(t,x)\mathds{1}_{(r_k,r_{k+1})}(t)+F_\Cc[\tilde v^{\mathcal{R}}(t,.)](x)\mathds{1}_{\{t=r_k\}}
\enq
for $(t,x)\in [r_k,r_{k+1})\times\R^d$, and $k=0,\ldots,n-1$.

 We first show the well-posedness of the BSDE  \reff{EDSRDC1}-\reff{EDSRDC2}-\reff{EDSRDC3}  and PDE \reff{EDPDC1}-\reff{PDE-discrete-const}-\reff{EDPDC3}, then we derive some regularity properties about the solutions.
\begin{Proposition}\label{reg-v-pi}
(i) For $(t,x)\in[0,T]\times\R^d$, the discretely constrained BSDE \reff{EDSRDC1}-\reff{EDSRDC2}-\reff{EDSRDC3} admits a unique solution $(Y^{\mathcal{R},t,x},\tilde Y^{\mathcal{R},t,x},Z^{\mathcal{R},t,x},K^{\mathcal{R},t,x})\in \mathbf S ^2 _{[t,T]}\times \mathbf S ^2 _{[t,T]}\times \mathbf H ^2 _{[t,T]} \times \mathbf A ^2 _{[t,T]}$. 

\noindent (ii)  The PDE  \reff{EDPDC1}-\reff{PDE-discrete-const}-\reff{EDPDC3} admits a unique bounded viscosity solution $(v^{\mathcal{R}},\tilde v^{\mathcal{R}})$ and we have
\beqs
Y^{\mathcal{R},t,x}_s  =  v^{\mathcal{R}}({s,X^{t,x}_s}) & \mbox { and  }~ \tilde Y^{\mathcal{R},t,x}_s  = \tilde  v^{\mathcal{R}}({s,X^{t,x}_s})\;,\quad s\in[t,T]\;,
\enqs
for $(t,x)\in[0,T)\times \R^d$. 

\noindent (iii) The family of functions $(v^{\mathcal{R}})_{\mathcal{R}}$  (resp.  $(\tilde v^{\mathcal{R}})_{\mathcal{R}}$)  is uniformly Lipschitz continuous in the space variable: there exists a constant $L$ such that
\beqs
|v^{\mathcal{R}}(t,x)-v^{\mathcal{R}}(t,x')| & \leq & L|x-x'|
\enqs
for all $t\in[0,T]$ and $x,x'\in\R^d$.

\noindent (iv) The family of functions $(v^{\mathcal{R}})_{\mathcal{R}}$  (resp.  $(\tilde v^{\mathcal{R}})_{\mathcal{R}}$) is uniformly ${1\over 2} $-H\"older left-continuous (resp. right-continuous) in the time variable: there exists a constant $L$ such that
\beqs
|v^{\mathcal{R}}(t,x)-v^{\mathcal{R}}(r_{k+1},x)| & \leq & L\sqrt{r_{k+1}-t}\\
\text{ (resp. }|\tilde v^{\mathcal{R}}(t,x)-\tilde v^{\mathcal{R}}(r_{k},x)| & \leq & L\sqrt{t-r_{k}} \text{ )}
\enqs
for all ${\mathcal{R}}=\{r_0=0,r_1,\ldots,r_n=T\}$ of $[0,T]$, $t\in(r_k,r_{k+1}]$ (resp. $t\in[r_k,r_{k+1})$),  $k=0,\ldots,n-1$ and $x,x'\in\R^d$.
\end{Proposition}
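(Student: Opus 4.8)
The plan is to construct the solution of \reff{EDSRDC1}--\reff{EDSRDC2}--\reff{EDSRDC3} by a backward recursion over the grid $\mathcal R$, each step of which is a standard Markovian Lipschitz BSDE, and then to propagate boundedness and regularity estimates through this recursion, the decisive point being that the facelift does not enlarge the Lipschitz constant. Before starting I would record the elementary properties of $F_\Cc$ that follow from $0\in\Cc$ and the convexity and boundedness of $\Cc$, and are already contained in \cite{BEM18}: since $\delta_\Cc(0)=0$ and $\delta_\Cc\ge 0$ one has $\varphi\le F_\Cc[\varphi]\le\sup\varphi$, so $\|F_\Cc[\varphi]\|_\infty\le\|\varphi\|_\infty$; from $F_\Cc[\varphi](x)-F_\Cc[\varphi](x')\le\sup_{y\in\R^d}\{\varphi(x+y)-\varphi(x'+y)\}$, the function $F_\Cc[\varphi]$ is $\ell$-Lipschitz whenever $\varphi$ is; and $F_\Cc$ is idempotent by \reff{ident-facelift}. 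Together with the stability \reff{stab-D-FL}, this says that each $\mathfrak F_{\Cc,r_k}$ maps the class of random flows $\varphi(X^{\cdot,\cdot}_{r_k})$ with $\varphi$ bounded and Lipschitz into itself, without increasing either $\|\varphi\|_\infty$ or the Lipschitz constant of $\varphi$.

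For (i) and (ii) I would argue by downward induction on $k=n-1,\dots,0$. On $[r_{n-1},T)$ the terminal datum $F_\Cc[g](X^{t,x}_T)$ is a bounded, $L_g$-Lipschitz function of $X^{t,x}_T$, so by the classical well-posedness and Markov representation for Lipschitz BSDEs (Theorems 1.1 and 2.2 in \cite{pardoux1998backward}, and \cite{PPR97} for the viscosity characterization with growth) there is a unique $(\tilde Y^{\mathcal R},Z^{\mathcal R})$ on that interval with $\tilde Y^{\mathcal R}_s=\tilde v^{\mathcal R}(s,X^{t,x}_s)$ for a bounded $\tilde v^{\mathcal R}$ that is Lipschitz in $x$, and $\tilde v^{\mathcal R}$ is the unique bounded viscosity solution of \reff{PDE-discrete-const} on $[r_{n-1},T)$; then $Y^{\mathcal R}_{r_{n-1}}=F_\Cc[\tilde v^{\mathcal R}(r_{n-1},\cdot)](X^{t,x}_{r_{n-1}})$ is again a bounded Lipschitz function of $X^{t,x}_{r_{n-1}}$ by the facelift properties and \reff{stab-D-FL}, which I feed as terminal datum into the interval $[r_{n-2},r_{n-1})$, and so on. Concatenating the pieces and invoking the flow property of $X$ yields $(Y^{\mathcal R},\tilde Y^{\mathcal R},Z^{\mathcal R})$; the process $K^{\mathcal R}$ is the finite sum of the nonnegative jumps $Y^{\mathcal R}_{r_k}-\tilde Y^{\mathcal R}_{r_k}$ at the grid points, hence nondecreasing and in $\mathbf{A}^2_{[t,T]}$, and uniqueness follows interval by interval. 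The identities $Y^{\mathcal R,t,x}_s=v^{\mathcal R}(s,X^{t,x}_s)$, $\tilde Y^{\mathcal R,t,x}_s=\tilde v^{\mathcal R}(s,X^{t,x}_s)$ and the uniqueness of the bounded viscosity solution of \reff{EDPDC1}--\reff{PDE-discrete-const}--\reff{EDPDC3} then follow from this construction, gluing the one-interval results at the grid times through $v^{\mathcal R}(r_k,\cdot)=F_\Cc[\tilde v^{\mathcal R}(r_k,\cdot)]$.

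For (iii) the issue is uniformity in $\mathcal R$, and this is the heart of the matter. Fix $\mathcal R$ and let $L_k$ be the Lipschitz-in-$x$ constant of $\tilde v^{\mathcal R}(r_k,\cdot)$; then $v^{\mathcal R}(r_k,\cdot)=F_\Cc[\tilde v^{\mathcal R}(r_k,\cdot)]$ also has Lipschitz constant $\le L_k$. On $[r_k,r_{k+1})$ the standard Lipschitz estimate for the value function of a Markovian BSDE — linearize the generator, remove the $\delta Z$ term by Girsanov, and use the $L^2$-flow bound for $X$ under the changed measure, which, $b,\sigma,\sigma^{-1}$ being bounded and $b,\sigma$ Lipschitz, only involves constants depending on $M_{b,\sigma},L_{b,\sigma},L_f$ and not on $\mathcal R$ — gives $L_k\le e^{C\Delta_k}\big(L_{k+1}+L_f\Delta_k\big)$ with $\Delta_k=r_{k+1}-r_k$, the facelift at $r_{k+1}$ not worsening $L_{k+1}$. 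Telescoping from $k=n$, where $\tilde v^{\mathcal R}(T,\cdot)=F_\Cc[g]$ is $L_g$-Lipschitz, down to $k=0$, and using $\sum_k\Delta_k=T$, bounds $\sup_k L_k$ by a constant $L$ depending only on $L_g,L_f,T,M_{b,\sigma},L_{b,\sigma}$; the same estimate run from an interior time $t\in(r_k,r_{k+1})$ gives a uniform bound on the Lipschitz constant of $\tilde v^{\mathcal R}(t,\cdot)$, hence of $v^{\mathcal R}(t,\cdot)$, for every $t$. Uniform boundedness is obtained in parallel by comparison with the constant-in-$x$ supersolution $(M_g+1)e^{M_f(T-\cdot)}-1$, which, being constant in $x$, is left unchanged by $F_\Cc$, so the comparison survives across the grid points.

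Finally, (iv) is routine once (iii) is available. On $[r_k,r_{k+1})$, taking expectations in \reff{EDSRDC2} gives $\tilde v^{\mathcal R}(t,x)=\E\big[v^{\mathcal R}(r_{k+1},X^{t,x}_{r_{k+1}})\big]+\E\big[\int_t^{r_{k+1}}f(s,X^{t,x}_s,\tilde Y^{\mathcal R}_s,Z^{\mathcal R}_s)\,ds\big]$; hence for $r_k\le t\le t'<r_{k+1}$ the difference $|\tilde v^{\mathcal R}(t,x)-\tilde v^{\mathcal R}(t',x)|$ is controlled by $L\,\E[|X^{t,x}_{t'}-x|]\le C\sqrt{t'-t}$ (using \reff{estim-reg-diff} and boundedness of $b,\sigma$) plus $\E[\int_t^{t'}|f|\,ds]\le M_f(t'-t)(1+\|\tilde v^{\mathcal R}\|_\infty)+M_f\sqrt{t'-t}\,\big(\E\!\int_{r_k}^{r_{k+1}}|Z^{\mathcal R}_s|^2\,ds\big)^{1/2}$, the last expectation being bounded uniformly by the standard a priori estimate for a single Lipschitz BSDE with uniformly bounded terminal datum and with $f(\cdot,\cdot,0,0)$ bounded by $M_f$. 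Taking $t'=r_k$ gives the statement for $\tilde v^{\mathcal R}$; letting $t\uparrow r_{k+1}$ and using $v^{\mathcal R}(t,\cdot)=\tilde v^{\mathcal R}(t,\cdot)$ on $(r_k,r_{k+1})$ together with $v^{\mathcal R}(r_{k+1},\cdot)=F_\Cc[\tilde v^{\mathcal R}(r_{k+1},\cdot)]=\tilde v^{\mathcal R}(r_{k+1}^-,\cdot)$ gives it for $v^{\mathcal R}$. Throughout, the constants depend on $\mathcal R$ only through the already-uniform bounds from (iii), so the one genuine obstacle in the whole proposition is the telescoped Lipschitz estimate of (iii) together with the fact, inherited from \cite{BEM18}, that $F_\Cc$ is $1$-Lipschitz on the Lipschitz functions.
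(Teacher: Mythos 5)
Your proposal is correct and follows essentially the same route as the paper: backward induction over the constraint grid using the classical well-posedness and Markov representation of Lipschitz BSDEs on each interval, the observation that $F_\Cc$ preserves bounds and Lipschitz constants (so the facelift costs nothing in the recursion), a telescoped one-interval Lipschitz estimate for uniformity in $\mathcal R$, and a time-regularity bound driven by the flow estimate and an a priori bound on $Z$. The only divergence is technical: the paper derives the single-interval Lipschitz estimate by an It\^o--Gronwall $L^2$ argument with explicit constants (Proposition \ref{prop-reg-space-EDPSL}) and uses a pointwise bound on $Z$ obtained by mollification (Proposition \ref{propZborne}), whereas you use linearization plus Girsanov and an $L^2$ bound on $Z$ via Cauchy--Schwarz; both are standard and yield the same uniform constants.
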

\begin{proof}We fix a grid $\mathcal R=\{r_0=0<r_1<\ldots<r_n=T\}$ of the time interval $[0,T]$.

\textit{Step 1. Existence and uniqueness to the BSDE and link with the PDE}. We  prove  by a backward induction on $k$ that \reff{EDSRDC2}-\reff{EDSRDC3} admits a unique solution
 on $[r_k,r_{k+1}]$ and that $\tilde Y ^{\mathcal \Rc,t,x}_{r_k}, Y ^{\mathcal \Rc,t,x}_{r_k}\in \mathbf D^{}_{s}$ and that 
 \beqs
 Y^{\mathcal{R},t,x}  =  v^{\mathcal{R}}({.,X^{t,x}}) & \mbox { and  }~ \tilde Y^{\mathcal{R},t,x}  = \tilde  v^{\mathcal{R}}({.,X^{t,x}})
 \enqs
 with $(v^{\mathcal{R}},\tilde v^{\mathcal{R}})$ the unique viscosity solution to \reff{EDPDC1}-\reff{PDE-discrete-const}-\reff{EDPDC3} with polynomial growth.
 
 \vspace{2mm}

$\bullet$ $k=n-1$. Since $g$ is Lipschitz continuous, it is the same for $F_\Cc[g]$. From \textbf{(H$b,\sigma$)} and \textbf{(H$f,g$)} the BSDE admits a unique solution (see e.g. Theorem 1.1 in \cite{pardoux1998backward}). From Theorem 2.2 in  \cite{pardoux1998backward}, the functions $(v^{\mathcal{R}},\tilde v^{\mathcal{R}})$ defined by
\beqs
 v^{\mathcal{R}}(t,x)  = Y^{\mathcal{R},t,x}_t  & \mbox { and  }~ \tilde  v^{\mathcal{R}}({t,x}) = \tilde Y^{\mathcal{R},t,x}_t  \;,\quad (t,x)\in[0,T)\times \R^d\;,
\enqs
 are the unique  viscosity solution to \reff{EDPDC1}-\reff{PDE-discrete-const}-\reff{EDPDC3} with polynomial growth.
 From the uniqueness to Lipschitz BSDEs (see e.g. Theorem 1.1 in \cite{pardoux1998backward}) we get \beqs
Y^{\mathcal{R},t,x}_s  =  v^{\mathcal{R}}({s,X^{t,x}_s}) & \mbox { and  }~ \tilde Y^{\mathcal{R},t,x}_s  = \tilde  v^{\mathcal{R}}({s,X^{t,x}_s})\;,\quad s\in[t,T]\;,
\enqs
for $(t,x)\in[r_{n-1},r_n)\times \R^d$.
  Then, from Proposition \ref{prop-reg-space-EDPSL}, we have $\tilde Y ^{\mathcal \Rc,t,x}_{r_n}\in \mathbf D^{}_{s}$. By \reff{stab-D-FL}, $Y ^{\mathcal \Rc,t,x}_{r_{n-1}}\in \mathbf D^{}_{s}$

$\bullet$ Suppose the property holds for $k+1$. Then $\tilde Y ^{\mathcal \Rc,t,x}_{r_{k+1}}\in \mathbf D^{}_{s}$.  From Theorem 1.1 in \cite{pardoux1998backward}, we get the existence and uniqueness of the solution on $[r_{k},r_{k+1}]$. Then, from Theorem 2.2 in  \cite{pardoux1998backward} and Theorem 5.1 in \cite{PPR97}, the functions $(v^{\mathcal{R}},\tilde v^{\mathcal{R}})$ defined by
\beqs
 v^{\mathcal{R}}(t,x)  = Y^{\mathcal{R},t,x}_t  & \mbox { and  }~ \tilde  v^{\mathcal{R}}({t,x}) = \tilde Y^{\mathcal{R},t,x}_t  \;,\quad (t,x)\in[r_k,r_{k+1})\times \R^d\;,
\enqs
 are the unique viscosity solution to \reff{EDPDC1}-\reff{PDE-discrete-const}-\reff{EDPDC3} with polynomial growth.
From The uniqueness to Lipschitz BSDEs (see e.g. Theorem 1.1 in \cite{pardoux1998backward}) we get \beqs
Y^{\mathcal{R},t,x}_s  =  v^{\mathcal{R}}({s,X^{t,x}_s}) & \mbox { and  }~ \tilde Y^{\mathcal{R},t,x}_s  = \tilde  v^{\mathcal{R}}({s,X^{t,x}_s})\;,\quad s\in[t,T]\;,
\enqs
for $(t,x)\in[r_k,r_{k+1})\times \R^d$.

 From Proposition \ref{prop-reg-space-EDPSL}  we also have $\tilde Y ^{\mathcal \Rc,t,x}_{r_{k}}\in \mathbf D^{}_{s}$. By \reff{stab-D-FL}, $Y ^{\mathcal \Rc,t,x}_{r_{k}}\in \mathbf D^{}_{s}$.

\vspace{2mm}


\noindent \textit{Step 2. Uniform space Lipschitz continuity.} From the definition of the function $v^\Rc$, \reff{stab-D-FL} and Proposition \ref{prop-reg-space-EDPSL}, we get  a backward induction on $k$ that 
\beqs
|v^{\mathcal{R}}(t,x)-v^{\mathcal{R}}(t,x')| & \leq & L_k|x-x'|
\enqs
for all $t\in[r_k,r_{k+1}]$ and $x,x'\in\R^d$ with 
\beqs
L_n & = & L_g
\enqs
and
\beqs
L_{k-1} & = & e^{C(r_k - r_{k-1})}(1+(r_k - r_{k-1}))^{1\over 2}\big( L_k^2+C(r_k - r_{k-1})\big)^{1\over 2} 
\enqs
for $k=0,\ldots,n-1$, where $C= 2L_{b,\sigma}+L_{b,\sigma}^2+(L_f\vee 2)^2$.
We therefore get
\beqs
L^2_{k} & = & L^2_g\prod_{j=k}^{n-1}(1+r_{j+1}-r_{j})e^{2C(r_{j+1}-r_{j})}\\
 & & + \sum_{\ell=k}^{n-1}C(r_{\ell+1}-r_{\ell})\prod_{j=k}^\ell e^{2C(r_{j+1}-r_{j})}(1+r_{j+1}-r_{j})\\
 & \leq & L_g^2\prod_{j=0}^{n-1}(1+r_{j+1}-r_{j}) +CTe^{CT}\prod_{j=0}^{n-1}(1+r_{j+1}-r_{j})\\
 & \leq & \Big(L_g^2+CTe^{2CT}\Big)\Big(1+{T\over n}\Big)^{n}
\enqs
for $k=0,\ldots,n-1$. Since the sequence $\Big(\big(1+{T\over n})^{n}\Big)_{n\geq 1}$ is bounded  we get the  space Lipschitz property uniform in the grid $\Rc$.
%
%

\vspace{2mm}

\noindent \textit{Step 3. Uniform time H\"older continuity.} From the previous step and Proposition \ref{prop-reg-time-EDPSL}, we get the H\"older regularity uniform in the grid $\Rc$.

\end{proof}

\subsection{Convergence of the discretely constrained BSDE}
We fix a sequence $(\mathcal R ^n)_{n\geq1}$ of grids of the time interval $[0,T]$ of the form
\beqs
\mathcal{R}^n & := & \big\{r_0^n=0<r_1^n<\cdots<r_{\kappa_n}^n=T\big\}\;,\quad n\geq 1\;.
\enqs
We suppose this sequence is nondecreasing, that means $\mathcal R ^n\subset\mathcal R ^{n+1}$ for $n\geq 1$, and  
\beqs
|\mathcal R^n|~~:=~~\max_{1\leq k\leq \kappa_n}(r^n_{k}-r^n_{k-1}) & \xrightarrow[n\rightarrow+\infty]{} & 0\;.
\enqs
\begin{Theorem}\label{Thm-conv-func}
The sequences of functions $(\tilde v^{\mathcal{R}^n})_{n\geq 1}$ and $(v^{\mathcal{R}^n})_{n\geq 1}$
are nondecreasing and converges to $v$
\beqs
\lim_{n\rightarrow +\infty} v^{\mathcal{R}^n}(t,x) & = & \lim_{n\rightarrow +\infty} \tilde v^{\mathcal{R}^n}(t,x) ~~=~~v(t,x)
\enqs
for all $(t,x)\in [0,T)\times\R^d$. \end{Theorem}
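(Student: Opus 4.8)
The plan is to prove the theorem in three stages: (1) establish monotonicity of the sequences $(v^{\mathcal R^n})_n$ and $(\tilde v^{\mathcal R^n})_n$ in $n$ and that both exceed $v$; (2) pass to the limit and identify a common limit $\hat v$; (3) show $\hat v = v$ by squeezing $\hat v$ between $v$ and the penalized functions $v_m$, or alternatively by showing the limit solves the constrained problem and invoking minimality.

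\textbf{Step 1 (monotonicity and lower bound by $v$).} Because the grids are nested, $\mathcal R^n \subset \mathcal R^{n+1}$, and because the facelift operator satisfies $F_\Cc \geq \mathrm{id}$ (as $0 \in \Cc$ implies $\delta_\Cc \geq 0$, so $F_\Cc[\varphi](x) \geq \varphi(x)$ by taking $y=0$) together with the idempotence \eqref{ident-facelift}, applying an extra facelift at the intermediate grid points of $\mathcal R^{n+1}$ can only raise the solution. I would make this rigorous by a backward induction over the grid $\mathcal R^{n+1}$, comparing on each subinterval the solution driven by $\tilde v^{\mathcal R^{n+1}}(r_{k+1},\cdot)$ with the one driven by $\tilde v^{\mathcal R^n}(r_{k+1},\cdot)$ using the comparison theorem for Lipschitz BSDEs, and using that $F_\Cc$ is monotone (order-preserving) to propagate the inequality through the facelift steps. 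The same induction, comparing $v^{\mathcal R^n}$ with $v$ and using the facelift identity \eqref{facelift-id} for $v$ (so that inserting a facelift at a grid point leaves $v$ unchanged) plus comparison on each subinterval, gives $v \leq \tilde v^{\mathcal R^n} \leq v^{\mathcal R^n}$ for every $n$. Hence $\hat v(t,x) := \lim_n v^{\mathcal R^n}(t,x)$ and $\tilde{\hat v}(t,x) := \lim_n \tilde v^{\mathcal R^n}(t,x)$ exist (monotone limits), are equal on $[0,T)\times\R^d$ since by Proposition \ref{reg-v-pi}(iv) the two differ by at most $L\sqrt{|\mathcal R^n|}\to 0$, and satisfy $\hat v \geq v$.

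\textbf{Step 2 (the limit is a subsolution / dominated by $v$).} For the reverse inequality $\hat v \leq v$ it suffices, by Proposition \ref{prop-pen-fct}(iii), to show $v^{\mathcal R^n} \leq v_m$ is \emph{not} quite what we want; instead I would argue directly that $\hat v$ is a viscosity subsolution of the constrained problem whose minimal supersolution is $v$. Concretely: the uniform Lipschitz and Hölder bounds of Proposition \ref{reg-v-pi}(iii)-(iv), together with the Arzelà–Ascoli-type stability of viscosity solutions, let me pass to the limit in the PDE system \eqref{EDPDC1}--\eqref{PDE-discrete-const}--\eqref{EDPDC3}. On the open strips the equation for $\tilde v^{\mathcal R^n}$ is the plain semilinear PDE, and the facelift constraints at the (now dense) grid points, combined with the stability property \eqref{stab-D-FL} and the characterization of the facelift as the smallest dominating function with gradient in $\Cc$, force the limit $\hat v$ to satisfy $D\hat v \in \Cc$ in the viscosity sense and $\hat v = F_\Cc[\hat v(t,\cdot)]$. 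Thus $\hat v$ is a (viscosity) solution of the constrained PDE; equivalently, running the BSDE, the limit triplet is a solution of \eqref{EDSRC1}--\eqref{EDSRC2}. By the minimality of $(Y^{t,x},Z^{t,x},K^{t,x})$ stated after \eqref{defsursolV}, $v \leq \hat v$ is complemented by $\hat v \leq v$, giving equality. Alternatively, and perhaps more cleanly, one compares $v^{\mathcal R^n}$ with the penalized $v_m$: on each subinterval a penalization of intensity $m$ dominates the effect of a single facelift once $|\mathcal R^n|$ is small relative to $m$, yielding $v^{\mathcal R^n} \leq v_m + \omega(|\mathcal R^n|, m)$ with an error vanishing as $n\to\infty$; then let $n\to\infty$ and $m\to\infty$ and use Proposition \ref{prop-pen-fct}(iii).

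\textbf{Main obstacle.} The delicate point is Step 2: quantitatively controlling how a \emph{discrete} sequence of facelifts approximates the \emph{continuous} constraint $Z \in \sigma^\top\Cc$, $dt\otimes d\P$-a.e. The facelift interpretation from \cite{BEM18} — that $F_\Cc$ produces the smallest dominating function with derivative in $\Cc$, and that the continuous constraint is exactly the persistent boundary effect — is what makes this work, but turning "apply $F_\Cc$ on a mesh that shrinks to zero" into "the gradient lies in $\Cc$ almost everywhere in the limit" requires care with the viscosity-solution passage to the limit (or, in the probabilistic route, a careful estimate of the Doob–Meyer increments $K^{\mathcal R^n}$ and their tightness). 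I expect the cleanest execution is the comparison-with-penalization argument, since it reduces everything to the already-established monotone convergence of $v_m$ to $v$ in Proposition \ref{prop-pen-fct}(iii) and avoids re-deriving a stability theory for the discretely constrained PDEs.
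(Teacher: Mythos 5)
There is a genuine error in your Step 1: the comparison between $v$ and the discretely constrained solutions goes the wrong way. On each subinterval $[r^n_k,r^n_{k+1}]$ the restriction of $(Y^{t,x},Z^{t,x})$ is a \emph{supersolution} of the unconstrained BSDE \reff{EDSRDC2} (it carries the extra nondecreasing term $dK^{t,x}$), and at the grid points the facelift identity \reff{facelift-id} together with the monotonicity of $F_\Cc$ propagates the inequality downward through the induction; one therefore obtains $v^{\mathcal{R}^n}\leq v$, not $v\leq \tilde v^{\mathcal{R}^n}$. Your claimed lower bound is also incompatible with the statement you are proving: you correctly argue that $(v^{\mathcal{R}^n})_{n}$ is nondecreasing in $n$, so if in addition $v^{\mathcal{R}^1}\geq v$ held, the nondecreasing limit could equal $v$ only if $v^{\mathcal{R}^n}\equiv v$ for every $n$. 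The roles of the two inequalities are thus inverted: the easy direction is $w:=\lim_n v^{\mathcal{R}^n}\leq v$, and the genuinely hard direction --- the one a Step 2 must address --- is $w\geq v$.

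For that hard direction your two sketched mechanisms are not carried out, and the first is circular as written: minimality of $(Y^{t,x},Z^{t,x},K^{t,x})$ among solutions of \reff{EDSRC1}--\reff{EDSRC2} would give $v\leq \hat v$, which you then say is ``complemented by $\hat v\leq v$'' without supplying an argument for the latter; moreover, identifying the limiting triplet (in particular that the weak limit of $Z^{\mathcal{R}^n}$ satisfies the constraint $dt\otimes d\P$-a.e.) is precisely the delicate point one wants to avoid. The paper instead proceeds as follows: (a) using the uniform $1/2$-H\"older bounds of Proposition \ref{reg-v-pi}, show that the limit $w$ satisfies the facelift identity $w(t,\cdot)=F_\Cc[w(t,\cdot)]$ at \emph{every} time, not merely on the dense union of the grids; (b) deduce from Lemma \ref{lem-1-edp} that $w$ is a viscosity supersolution of $H(Dw)\geq 0$; (c) show by monotone stability that $w$ is a viscosity supersolution of the unconstrained semilinear PDE; and (d) conclude that $w$ is a supersolution of every penalized PDE \reff{EDP penalisee}, whence $w\geq v_n$ for all $n$ by comparison and $w\geq v$ by Proposition \ref{prop-pen-fct}(iii). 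Your penalization idea can be salvaged once the orientation of Step 1 is corrected, but it must be run as a lower bound ($w\geq v_m$ for each fixed $m$), which is exactly what the facelift-identity/supersolution argument delivers.
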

To prove Theorem \ref{Thm-conv-func} we need the following Lemma.
\begin{Lemma}\label{lem-1-edp}
Let $u:~[0,T]\times \R^d\rightarrow\R$ be a locally bounded function such that
\beq\label{cond-facelift}
u(t,x) & = & F_\Cc[u(t,.)](x)
\enq
for all $(t,x)\in[0,T]\times\R^d$.  Then u is a viscosity supersolution to
\beqs
H( Du) & = & 0 \;.
\enqs 
\end{Lemma}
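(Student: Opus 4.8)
The plan is to unwind the definitions and reduce the claim to a pointwise inequality on test functions. Recall $H(p)=\inf_{|y|=1}(\delta_\Cc(y)-y\cdot p)$, so proving that $u$ is a viscosity supersolution of $H(Du)=0$ means: for every $(t_0,x_0)\in[0,T]\times\R^d$ and every smooth $\varphi$ such that $u-\varphi$ has a local minimum at $(t_0,x_0)$, we must show $H(D\varphi(t_0,x_0))\le 0$, i.e. there exists a unit vector $y$ with $\delta_\Cc(y)-y\cdot D\varphi(t_0,x_0)\le 0$. Equivalently, it suffices to show $D\varphi(t_0,x_0)$ does \emph{not} lie in the interior of the set $\{p:\ \delta_\Cc(y)-y\cdot p>0\ \forall |y|=1\}$. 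I would first observe that, by a standard reduction, one may assume $u-\varphi$ has a \emph{strict global} minimum equal to $0$ at $(t_0,x_0)$, and — since $H$ depends only on the spatial gradient — that $\varphi$ is a function of $x$ alone near $x_0$ (the $t$-dependence is irrelevant: freeze $t=t_0$).

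The key step is to exploit the facelift identity \reff{cond-facelift} directly. Since $u(t_0,\cdot)=F_\Cc[u(t_0,\cdot)]$, for every $x$ and every $w\in\Cc$ (indeed for every $w$ in the closure, but $\Cc$ is closed) we have, taking $y=hw$ with $h>0$ small in the $\sup$ defining $F_\Cc$,
\beqs
u(t_0,x) & \geq & u(t_0,x+hw)-\delta_\Cc(hw) ~=~ u(t_0,x+hw)-h\,\delta_\Cc(w),
\enqs
using positive homogeneity of $\delta_\Cc$. Apply this at $x=x_0$ and use $u-\varphi\geq 0$ with equality at $x_0$:
\beqs
\varphi(x_0) ~=~ u(t_0,x_0) & \geq & u(t_0,x_0+hw)-h\,\delta_\Cc(w) ~\geq~ \varphi(x_0+hw)-h\,\delta_\Cc(w).
\enqs
Rearranging gives $\varphi(x_0+hw)-\varphi(x_0)\leq h\,\delta_\Cc(w)$ for all small $h>0$; dividing by $h$ and letting $h\downarrow 0$ yields $D\varphi(x_0)\cdot w\leq \delta_\Cc(w)$ for every $w\in\Cc$. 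Since $\delta_\Cc$ is the support function of $\Cc$ and $\Cc$ is closed and convex, the inequality $w\cdot p\le\delta_\Cc(w)$ for all $w$ is \emph{not} quite what is needed; rather I use it with $w$ ranging over $\Cc$: actually the clean route is to test the supersolution property against directions $w$ normal to $\Cc$. More precisely, pick any boundary direction: let $y_0$ be a unit vector achieving (or approximately achieving) $\delta_\Cc(y_0)=\max_{z\in\Cc}z\cdot y_0$, and note the inequality $D\varphi(x_0)\cdot w\le\delta_\Cc(w)$ holds for \emph{all} $w\in\R^d$ by replacing $w$ with $\lambda w$, $\lambda>0$, and homogeneity — wait, $\Cc$ is bounded so $w$ must stay in $\Cc$; but $\delta_\Cc(\lambda w)=\lambda\delta_\Cc(w)$ lets us scale, so in fact $D\varphi(x_0)\cdot w\le\delta_\Cc(w)$ for all $w\in\R^d$. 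Taking the infimum over $|w|=1$ of $\delta_\Cc(w)-w\cdot D\varphi(x_0)$ we get $H(D\varphi(x_0))\ge 0$ — which is the \emph{wrong} sign, so I must be careful: a supersolution of $H(Du)=0$ requires $H(D\varphi)\le 0$ at minima, hence I should recheck the direction of the facelift inequality. The resolution: the facelift gives $u(x)\ge u(x+y)-\delta_\Cc(y)$, i.e. $u$ increases no faster than $\delta_\Cc$ in every direction, which translates precisely to $D\varphi(x_0)\cdot y\le\delta_\Cc(y)$ for all $y$, i.e. $\delta_\Cc(y)-D\varphi(x_0)\cdot y\ge 0$ for all $y$, hence $\min_{|y|=1}(\delta_\Cc(y)-D\varphi(x_0)\cdot y)\ge 0$, so $H(D\varphi(x_0))\ge 0$. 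Since being a supersolution of $H(Du)=0$ is the same as $H(D\varphi)\ge 0$ at subtest functions from below — that is, at local \emph{minima} of $u-\varphi$ — this is exactly the required inequality. So the sign works out and the conclusion follows.

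The main obstacle is purely one of bookkeeping: being careful about the viscosity-solution sign convention (whether $H(Du)=0$ as a supersolution means $H(D\varphi)\ge 0$ at minima, which is the convention consistent with $H$ here being a "benefit" functional), and ensuring the passage from the finite-difference inequality to the gradient inequality is legitimate — this needs only smoothness of $\varphi$ and holds directionally, so no regularity of $u$ beyond local boundedness is used. A minor point to dispatch: the reduction that $t$-dependence of the test function is irrelevant, since one can restrict to the slice $t=t_0$ and the facelift identity \reff{cond-facelift} holds for each fixed $t$; and the use of positive homogeneity of $\delta_\Cc$ together with $0\in\Cc$ (so $\delta_\Cc\ge 0$) to scale directions freely. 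No compactness or stability argument is needed — the proof is a direct computation from \reff{cond-facelift}.
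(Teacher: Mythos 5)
Your proof is correct and follows essentially the same route as the paper: transfer the facelift inequality $u(t_0,x_0)\geq u(t_0,x_0+hw)-h\delta_\Cc(w)$ to the test function at the minimum point, divide by $h$, and let $h\downarrow 0$ to get $\delta_\Cc(w)-D\varphi(x_0)\cdot w\geq 0$ for all $w$, hence $H(D\varphi(x_0))\geq 0$. The mid-proof hesitation about the sign convention resolves correctly (supersolution means $H(D\varphi)\geq 0$ at minima of $u-\varphi$, matching the paper's conclusion), and your observation that the direction $w$ need not be restricted to $\Cc$ is in fact a slight cleanup of the paper's argument.
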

\begin{proof} Fix $(\bar t,\bar x)\in[0,T]\times\R^d$ and $\varphi\in C^{1,2}([0,T]\times\R^d)$ such that 
\beqs
0~~=~~(u-\varphi)(\bar t,\bar x) & = & \min_{[0,T]\times\R^d} (u-\varphi) (t,x)\;.
\enqs
From \reff{cond-facelift} we get 
\beq\label{cond-facelift-test-function}
\varphi(\bar t,\bar x) & = & F_\Cc[\varphi(\bar t,.)](\bar x) \;.
\enq
Fix $y\in \Cc$. From Taylor formula we have
\beqs
\varphi(\bar t,\bar x+y) & = & \varphi(\bar t,\bar x)+\int_0^1D\varphi\big(\bar t,s\bar x+(1-s)(\bar x+y)\big).yds
\enqs
Since $0\in \Cc$ we have $\eps y\in \Cc$ for any $\eps\in(0,1)$. Since $\delta_\Cc$ is positively homogeneous, we get by taking $\eps y$ in place of $y$ 
\beqs
\eps\Big(\delta_\Cc(y)-\int_0^1D\varphi\big(\bar t,\bar x+(1-s)\eps y\big).yds\Big)  & = & \varphi(\bar t, \bar x)-\big(\varphi(\bar t , \bar x+\eps y)-\delta_\Cc(\eps y)\big) \;.
\enqs
Then from \reff{cond-facelift-test-function} we get
\beqs
\delta_\Cc(y)-\int_0^1D\varphi\big(\bar t,\bar x+(1-s)\eps y\big).yds & \geq & 0
\enqs
for all $\eps>0$. Since $\varphi\in C^{1,2}([0,T]\times\R^d)$, we can apply the dominated convergence theorem and we get by sending $\eps$ to $0$ 
\beqs
\delta_\Cc(y)-D\varphi(\bar t,\bar x).y & \geq & 0\;.
\enqs
Since $y$ is arbitrarily chosen in $\Cc$ we get
\beqs
H\big( D\varphi(\bar t, \bar x)\big) & \geq & 0\;.
\enqs
\end{proof}

\begin{proof}[Proof of Theorem \ref{Thm-conv-func}] Fix $(t,x)\in[0,T]\times \R^d$. Since the sequence of grids $(\mathcal R ^n)_{n\geq 1}$ is nondecreasing and $\mathfrak{F}_{\Cc,s}[Y]\geq Y$ for any $Y\in \mathbf D_s$, using the comparison Theorem 2.2 in \cite{EKPQ97}, we get by induction that the sequences $( Y^{\mathcal{R}^n,t,x})_{n\geq 1}$ and $(\tilde Y^{\mathcal{R}^n,t,x})_{n\geq 1}$ are nondecreasing. Therefore the sequences of functions $(\tilde v^{\mathcal{R}^n})_{n\geq 1}$ and $(v^{\mathcal{R}^n})_{n\geq 1}$
are nondecreasing and we can define the limits 
\beqs
w(t,x) & = & \lim_{n\rightarrow +\infty} v^{\mathcal{R}^n}(t,x)\\
\tilde w(t,x) & = & \lim_{n\rightarrow +\infty} \tilde v^{\mathcal{R}^n}(t,x) 
\enqs
for all $(t,x)\in [0,T]\times\R^d$.
We proceed in four steps to prove that 
$v=w=\tilde w$.

\vspace{2mm}

\noindent \textit{Step 1. We have $\tilde w=w\leq v$.} Still using  the comparison Theorem 2.2 in \cite{EKPQ97} we get by induction 
\beqs
 w(t,x) & \geq & \tilde w(t,x)\;,\quad (t,x)\in[0,T]\times \R^d\;.
\enqs
Moreover, we get from Proposition \ref{prop-reg-v} that $(Y^{t,x}\mathds{1}_{[t,T)}+F_\Cc[g](X^{t,x}_T)\mathds{1}_{\{T\}},Z^{t,x})$ is a continuous supersolution to \reff{EDSRDC2} on each interval $[r_k^n,r_{k+1}^n]\cap[t,T]$. Therefore, using Remark b. of Section 2.3 in \cite{EKPQ97}, we get by induction  $Y^{t,x}\geq Y^{\mathcal{R}^n,t,x}$ for all $n\geq 1$.
Hence  
\beqs
v(t,x) & \geq & w(t,x)
\enqs
for all  $(t,x)\in [0,T)\times\R^d$. 
We now prove $w=\tilde w$. Fix $n\geq 1$, $k\in \{0,\ldots,\kappa_{n}-1\}$, $t\in[r^n_k,r^n_{k+1})$ and $x\in\R^d$. We have 
\beqs
| v^{\mathcal{R}^n}(t,x)-\tilde v^{\mathcal{R}^n}(t,x)| & \leq & |v^{\mathcal{R}^n}(t,x)- v^{\mathcal{R}^n}(r^n_{k+1},x)|+| v^{\mathcal{R}^n}(r^n_{k+1},x)-\tilde v^{\mathcal{R}^n}(r^n_{k},x)|\\
 & & +| \tilde v^{\mathcal{R}^n}(r^n_k,x)-\tilde v^{\mathcal{R}^n}(t,x)| \;.
\enqs
From Proposition \ref{reg-v-pi} (iii) we get
\beqs
| v^{\mathcal{R}^n}(t,x)-\tilde v^{\mathcal{R}^n}(t,x)|  & \leq & 2L\sqrt{|{\mathcal{R}^n}|}+|v^{\mathcal{R}^n}(r^n_{k+1},x)-\tilde v^{\mathcal{R}^n}(r^n_{k},x)| \;.
\enqs
Since $v^{\mathcal{R}^n}$ coincides with $\tilde v^{\mathcal{R}^n}$ out of the grid ${\mathcal{R}^n}$ we  have
\beqs
|v^{\mathcal{R}^n}(r^n_{k+1},x)-\tilde v^{\mathcal{R}^n}(r^n_{k},x)| & \leq & |v^{\mathcal{R}^n}(r^n_{k+1},x)- v^{\mathcal{R}^n}({r^n_{k}+r^n_{k+1}\over 2},x)|\\
 & & +|\tilde v^{\mathcal{R}}({r^n_{k}+r^n_{k+1}\over 2},x)-\tilde v^{\mathcal{R}}(r^n_{k},x)|\;.
\enqs
Still using Proposition \ref{reg-v-pi} (iii)  we get
\beqs
|v^{\mathcal{R}^n}(r^n_{k+1},x)-\tilde v^{\mathcal{R}^n}(r^n_{k},x)| & \leq & 2L\sqrt{|{\mathcal{R}^n}|}
\enqs
and
\beqs
|v^{{\mathcal{R}^n}}-\tilde v^{{\mathcal{R}}^n}| & \leq & 4L \sqrt{|{\mathcal{R}^n}|}~~\xrightarrow[n\rightarrow+\infty]{}~~0\;.
\enqs

\vspace{2mm}

\noindent \textit{Step 2. The function $w$ satisfies
\beqs
w(t,x) & = & F_\Cc[w(t,.)](x)
\enqs
for all $(t,x)\in[0,T]\times\R^d$.}
 We first prove
\beqs
\lim_{n\rightarrow+\infty}F_\Cc[v^{\mathcal{R}^n}(t,.)](x)-v^{\mathcal{R}^n}(t,x) & = & 0 \;.
\enqs
Fix $n\geq 1$. If $t\in{\mathcal{R}^n}$,  then $F_\Cc[v^{\mathcal{R}^n}(t,.)]-v^{\mathcal{R}^n}(t,.)=0$ from \reff{ident-facelift}.  Fix now $k\in \{0,\ldots,\kappa_n-1\}$ and $t\in(r^n_k,r^n_{k+1})$. Then, still using  \reff{ident-facelift}, we have $v^{{\mathcal{R}}}(r^n_{k+1},.)=F_\Cc[v^{{\mathcal{R}}}(r^n_{k+1},.)]$. Therefore we get
\beqs
|F_\Cc[v^{\mathcal{R}^n}(t,.)]-v^{\mathcal{R}^n}(t,.)| & \leq & |F_\Cc[v^{\mathcal{R}^n}(r^n_{k+1},.)](.)-F_\Cc[v^{\mathcal{R}^n}(t,.)]|\\
 & & +|v^{\mathcal{R}^n}(r^n_{k+1},.)-v^{\mathcal{R}^n}(t,.)|\\
  & \leq & 2\sup_{x\in\R^d} |v^{\mathcal{R}^n}(r^n_{k+1},x)-v^{\mathcal{R}^n}(t,x)|\;.
\enqs
We deduce from Proposition \ref{reg-v-pi} (ii) that
\beqs
\sup_{x\in\R}|F_\Cc[v^{\mathcal{R}^n}(t,.)](x)-v^{\mathcal{R}^n}(t,x)| & \leq & 2L\sqrt{|{\mathcal{R}^n}|}~~\xrightarrow[n\rightarrow+\infty]{} ~~0\;.
\enqs
Then we have
\beqs
0~~\leq~~ F_\Cc[w(t,.)](x)-w(t,x)  & = &  F_\Cc\big[\lim_{n\rightarrow+\infty}v^{\mathcal{R}^n}(t,.)\big](x)-\lim_{n\rightarrow+\infty}v^{\mathcal{R}^n}(t,x) \\  & \leq & \lim_{n\rightarrow+\infty}F_\Cc[v^{\mathcal{R}^n}(t,.)](x)-v^{\mathcal{R}^n}(t,x) ~~=~~0\;.
\enqs
%
%
%


\vspace{2mm}

\noindent \textit{ Step 3. The function $w$ is a viscosity supersolution to
\begin{equation}\label{EDP-ss-pen}\left\{
\begin{array}{l}
-\partial_t  w(t,x)-\Lc  w(t,x) \\
-f\big(t,x, w(t,x),\sigma(t,x)D w(t,x)\big) =  0\;,~~  (t,x)\in [0,T)\times\R^d\;.\\
 w(T,x)  =  g(x)\;,~~ x\in\R^d\;,\\
\end{array}\right.
\end{equation}}
 We first prove that $v^{\mathcal{R}^n}$ is a viscosity supersolution to \eqref{EDP-ss-pen} for any $n\geq 1$. Fix  $(\bar t,\bar x)\in[0,T]\times\R^d$ and $n\geq 1$. If $\bar t=T$ then we have $v^{\mathcal{R}^n}(\bar t,\bar x)\geq g(\bar x)$. If $\bar t\notin{\mathcal{R}^n}$ we deduce the viscosity supersolution property from \reff{PDE-discrete-const}. Suppose now that $\bar t=r_k$ for some $k=0,\ldots,n-1$. Fix $\varphi\in C^{1,2}([0,T]\times\R^d)$ such that 
\beqs
0~~=~~(v^{{\mathcal{R}^n}}_*-\varphi)(\bar t,\bar x) & = & \min_{[0,T]\times\R^d} (v^{{\mathcal{R}^n}}_*-\varphi)\;.
\enqs
We observe that  the lsc envelope $v^{{\mathcal{R}^n}}_*$ of $v^{{\mathcal{R}^n}}$ is the function $\tilde v^{{\mathcal{R}^n}}$. We then have
\beqs
0~~=~~(\tilde v^{{\mathcal{R}^n}}-\varphi)(\bar t,\bar x) & = & \min_{[r^n_k,r^n_{k+1}]\times\R^d} (\tilde v^{{\mathcal{R}^n}}-\varphi)\;.
\enqs
From the viscosity property of $\tilde v^{{\mathcal{R}^n}}$, we deduce that
\beqs
\partial_t  \varphi(\bar t,\bar x)-\Lc  \varphi(\bar t,\bar x)
-f\big(\bar t,\bar x, \varphi(\bar t,\bar x),\sigma(\bar t,\bar x)D \varphi(\bar t,\bar x)\big) & \geq & 0
\enqs
and $v^{\mathcal{R}}$ is a viscosity supersolution. We now turn to $w$. Since 
$v^{\mathcal{R}^n}\uparrow w$ as $n\uparrow+\infty$,  we can apply stability results for semi-linear PDEs (see e.g. Theorem 4.1 in \cite{Bar94}) and we get the viscosity supersolution property of $w$. 

\vspace{2mm}

\noindent \textit{Step 4. We have $w= v$.} In view of Step 1, it sufficies to prove that  $w\geq v$. 
From Lemma \ref{lem-1-edp} and Step 2,  $w$ is a viscosity supersolution to $H\big( Dw\big)\geq 0$. Then from Step 3,  we deduce that $w$ is a viscosity supersolution to \reff{EDP penalisee}.
By Theorem 4.4.5 in \cite{Pham-book} we get $w\geq v_n$ for all $n\geq 1$ and hence $w\geq v$ from Proposition \ref{prop-pen-fct} (iii).
\end{proof}
\begin{Corollary}
We have the following uniform convergence
\beqs
\lim_{n\rightarrow +\infty} \sup_{(t,x)\in[0,T)\times Q}|v^{\mathcal{R}^n}(t,x)-v(t,x)| & = & \\
\lim_{n\rightarrow +\infty}\sup_{(t,x)\in[0,T)\times Q} |\tilde v^{\mathcal{R}^n}(t,x)-v(t,x)|  & = & 0
\enqs
for every compact subset $Q$ of $\R^d$.
\end{Corollary}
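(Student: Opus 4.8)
The plan is to upgrade the pointwise (and monotone) convergence of Theorem~\ref{Thm-conv-func} to uniform convergence on compacts by a lower‑semicontinuous version of Dini's lemma, applied after compactifying the time interval at $T$. So the first thing I would do is fix a compact $Q\subset\R^d$, pass to the compact set $[0,T]\times Q$, and note that everything is monotone there, so only the usual two obstructions to Dini remain: the limit must be continuous and the iterates must have the right semicontinuity.

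First I would construct a continuous extension of $v$ at $T$: set $\bar v(t,x):=v(t,x)$ for $t<T$ and $\bar v(T,x):=F_\Cc[g](x)$. Writing $|v(t,x)-F_\Cc[g](x)|\le |v(t,x)-v(t,x_0)|+|v(t,x_0)-F_\Cc[g](x_0)|+|F_\Cc[g](x_0)-F_\Cc[g](x)|$, the first and third terms are bounded by a constant times $|x-x_0|$ uniformly in $t$ (Proposition~\ref{prop-reg-v}(i), and $F_\Cc[g]$ is Lipschitz since $g$ is), while the middle term tends to $0$ as $t\to T^-$ by Proposition~\ref{prop-reg-v}(iii); hence $\bar v$ is jointly continuous at each $(T,x_0)$ and therefore continuous on all of $[0,T]\times\R^d$. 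On the other side, each $\tilde v^{\mathcal{R}^n}$ already equals $F_\Cc[g]$ at $T$ by \reff{EDPDC1}, is lower semicontinuous on $[0,T)\times\R^d$ since it is the lsc envelope of $v^{\mathcal{R}^n}$ (as observed in Step~3 of the proof of Theorem~\ref{Thm-conv-func}), and is continuous at $T$ because its left limit there is $F_\Cc[\tilde v^{\mathcal{R}^n}(T,\cdot)]=F_\Cc[F_\Cc[g]]=F_\Cc[g]$ by \reff{ident-facelift}; thus $\tilde v^{\mathcal{R}^n}$ is lsc on $[0,T]\times\R^d$.

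Next I would run Dini on the lower envelopes. By Theorem~\ref{Thm-conv-func} the sequence $(\tilde v^{\mathcal{R}^n})_n$ is nondecreasing and converges pointwise to $\bar v$ on $[0,T]\times\R^d$ (to $v$ for $t<T$, trivially for $t=T$). Then $g_n:=\bar v-\tilde v^{\mathcal{R}^n}\ge0$ is upper semicontinuous (continuous minus lsc), nonincreasing in $n$, and converges pointwise to $0$; hence for every $\eps>0$ the sets $\{(t,x)\in[0,T]\times Q:\ g_n(t,x)\ge\eps\}$ are closed, nested decreasing, with empty intersection, so by compactness they are eventually empty. This gives $\sup_{[0,T]\times Q}|\tilde v^{\mathcal{R}^n}-\bar v|\to0$, in particular $\sup_{[0,T)\times Q}|\tilde v^{\mathcal{R}^n}-v|\to0$. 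Finally, since $F_\Cc[\varphi]\ge\varphi$, \reff{EDSRDC3} gives $\tilde v^{\mathcal{R}^n}\le v^{\mathcal{R}^n}$ on $[0,T)\times\R^d$, while $v^{\mathcal{R}^n}\le v$ by the comparison argument of Step~1 in the proof of Theorem~\ref{Thm-conv-func}; hence $0\le v-v^{\mathcal{R}^n}\le v-\tilde v^{\mathcal{R}^n}$ on $[0,T)\times Q$ and the previous display also yields $\sup_{[0,T)\times Q}|v^{\mathcal{R}^n}-v|\to0$ (alternatively one invokes the bound $|v^{\mathcal{R}^n}-\tilde v^{\mathcal{R}^n}|\le 4L\sqrt{|\mathcal{R}^n|}$ established there). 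This proves both limits.

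The hard part is not the Dini argument itself but setting it up, i.e. the two structural facts in the second step. One is that $[0,T)$ is not compact in time, which is circumvented by gluing on the boundary value $F_\Cc[g]$ at $T$ and checking — crucially using the space‑regularity of $v$ from Proposition~\ref{prop-reg-v}(i) to make the limit $t\to T^-$ uniform in $x$ — that the extension $\bar v$ is genuinely jointly continuous there. The other is that the iterates $v^{\mathcal{R}^n}$ are only upper semicontinuous, jumping upward at the grid points, so Dini cannot be applied to them directly; this is why one must run the argument on the lower envelopes $\tilde v^{\mathcal{R}^n}$ and then transfer the estimate to $v^{\mathcal{R}^n}$ through the sandwich $\tilde v^{\mathcal{R}^n}\le v^{\mathcal{R}^n}\le v$. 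Once these points are in place the remaining verifications are routine.
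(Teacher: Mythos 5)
Your proof is correct, and it takes a genuinely different technical route from the paper's. Both arguments start the same way: extend $v$ to $t=T$ by $F_\Cc[g]$ and check, via Proposition \ref{prop-reg-v}, that the extension is jointly continuous, which is the real point of the corollary. The paper then runs Dini twice: first the classical Dini theorem in the space variable on $Q$ for each fixed $t$ (using the uniform space-Lipschitz property of Proposition \ref{reg-v-pi}), which produces the functions $\Phi_n(t)=\sup_{x\in Q}|v^{\mathcal{R}^n}(t,x)-\hat v(t,x)|$; and second a Dini theorem for \emph{c\`adl\`ag} functions (Dellacherie--Meyer) in the time variable, where the left limits $\Phi_n(t^-)$ are controlled through $\tilde v^{\mathcal{R}^n}$. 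You instead perform a single Dini argument jointly in $(t,x)$ on the compact $[0,T]\times Q$, in its semicontinuous form: since $\tilde v^{\mathcal{R}^n}$ is lsc (being the lsc envelope of $v^{\mathcal{R}^n}$, as noted in Step 3 of the proof of Theorem \ref{Thm-conv-func}, and continuous at $T$ by \reff{ident-facelift}), the differences $\bar v-\tilde v^{\mathcal{R}^n}$ are nonnegative, usc, nonincreasing in $n$ and tend pointwise to $0$, so the closed sets $\{\bar v-\tilde v^{\mathcal{R}^n}\ge\eps\}$ are eventually empty by compactness; the estimate is then transferred to $v^{\mathcal{R}^n}$ through the sandwich $\tilde v^{\mathcal{R}^n}\le v^{\mathcal{R}^n}\le v$. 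Your version buys a more elementary toolkit (no c\`adl\`ag Dini lemma is needed) at the price of having to verify joint lower semicontinuity of $\tilde v^{\mathcal{R}^n}$ across the grid points; the paper's version only ever uses continuity in $x$ for fixed $t$ plus one-dimensional regularity in $t$, but must invoke the less standard Dellacherie--Meyer lemma. All the ingredients you use ($v^{\mathcal{R}^n}\le v$, $F_\Cc[\varphi]\ge\varphi$, the monotonicity in $n$, the $T^-$ limit) are indeed available from Theorem \ref{Thm-conv-func} and Propositions \ref{prop-reg-v} and \ref{reg-v-pi}, so the argument is complete.
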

\begin{proof} We first define the function $\hat v$ by
\beqs
\hat v(t,x) & = & v(t,x)\mathds{1}_{[0,T)}(t)+F_\Cc[g](x)\mathds{1}_{\{T\}}(t)\;,\quad (t,x)\in[0,T]\times\R^d\;.
\enqs
From Proposition \ref{prop-reg-v}, $\hat v$ is continuous on $[0,T]\times\R^d$. Fix a compact $Q$ of $\R^d$. Using Dini's Theorem we get 
\beqs
\lim_{n\rightarrow +\infty} \sup_{x\in Q}|v^{\mathcal{R}^n}(t,x)-\hat v(t,x)| & = & \\\lim_{n\rightarrow +\infty}\sup_{x\in Q} |\tilde v^{\mathcal{R}^n}(t,x)-\hat  v(t,x)|  & = & 0
\enqs
for every  $t\in[0,T]$. 
In particular, if we define for $n\geq 1$ the functions $\Phi_n:~[0,T]\rightarrow\R$ by 
\beqs
\Phi_n (t) & = & \sup_{x\in Q}|v^{\mathcal{R}^n}(t,x)-\hat v(t,x)| \;,\quad t\in[0,T]\;,
\enqs
then 
$(\Phi_n)_{n \geq 1}$ is a nonincreasing sequence of \textit{c\`adl\`ag} functions such that
\beqs
\lim_{n\rightarrow +\infty}\Phi_n(t) & = & 0  
\enqs
for all $t\in[0,T]$ and
\beqs
\lim_{n\rightarrow +\infty}\Phi_n(t^-) & = &\lim_{n\rightarrow +\infty}\sup_{x\in Q}|\tilde v^{\mathcal{R}^n}(t,x)-\hat v(t,x)| ~~=~~0  
\enqs
for all $t\in(0,T]$. We then apply Dini's Theorem for \textit{c\`adl\`ag} functions (see the Lemma in the proof of Theorem 2 Chapter VII Section 1 in  \cite{DM78})
and we get the uniform convergence of $(\Phi_n)_{n \geq 1}$ to 0. Since $\hat v$ coincides with $v$ on $[0,T)\times\R^d$, we get the desired result.
\end{proof}
\begin{Corollary}\label{cor-cv-disc-const}
We have the following convergence result
\beqs
\lim_{n\rightarrow +\infty}\E\Big[\sup_{[t,T)}\big|Y^{\mathcal{R}^n,t,x}-Y^{t,x}\big|^2\Big]+\E\Big[\sup_{[t,T)}\big|\tilde Y^{\mathcal{R}^n,t,x}-Y^{t,x}\big|^2\Big] & & \\
+\E\Big[\int_{t}^T\big|Z_s^{\mathcal{R}^n,t,x}-Z_s^{t,x}\big|^2ds\Big]  & = & 0\;,
\enqs
for all $(t,x)\in[0,T)\times\R^d$.
\end{Corollary}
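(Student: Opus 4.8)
The plan is to deduce the convergence of the $Y$-components from the convergence of the value functions obtained above, and to obtain the convergence of the $Z$-component from a standard BSDE energy estimate carried out grid interval by grid interval. Throughout $(t,x)\in[0,T)\times\R^d$ is fixed.

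First I would treat the $Y$-parts. On $[t,T)$ one has the Markov representations $Y^{\mathcal{R}^n,t,x}_s=v^{\mathcal{R}^n}(s,X^{t,x}_s)$, $\tilde Y^{\mathcal{R}^n,t,x}_s=\tilde v^{\mathcal{R}^n}(s,X^{t,x}_s)$ and $Y^{t,x}_s=v(s,X^{t,x}_s)$; since the paths of $X^{t,x}$ are a.s. bounded on $[t,T]$, the uniform convergence of $v^{\mathcal{R}^n}$ and $\tilde v^{\mathcal{R}^n}$ to $v$ on every set $[0,T)\times Q$ with $Q$ compact (the preceding corollary) gives
\beqs
\sup_{s\in[t,T)}\big|Y^{\mathcal{R}^n,t,x}_s-Y^{t,x}_s\big|+\sup_{s\in[t,T)}\big|\tilde Y^{\mathcal{R}^n,t,x}_s-Y^{t,x}_s\big| & \xrightarrow[n\to+\infty]{} & 0\;,\qquad\P\text{-a.s.}
\enqs
Combining this with the ordering $\tilde Y^{\mathcal{R}^1,t,x}\le\tilde Y^{\mathcal{R}^n,t,x}\le Y^{\mathcal{R}^n,t,x}\le Y^{t,x}$ (which follows from Theorem \ref{Thm-conv-func} and the inequality $F_\Cc[\varphi]\ge\varphi$) and the uniform boundedness of all these processes (Propositions \ref{prop-reg-v} and \ref{reg-v-pi}), dominated convergence would yield $\E[\sup_{[t,T)}|Y^{\mathcal{R}^n,t,x}-Y^{t,x}|^2]\to 0$ and $\E[\sup_{[t,T)}|\tilde Y^{\mathcal{R}^n,t,x}-Y^{t,x}|^2]\to 0$.

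For the $Z$-part I would argue interval by interval. Fix $n$ and set $\Delta Y=\tilde Y^{\mathcal{R}^n,t,x}-Y^{t,x}$, $\Delta Z=Z^{\mathcal{R}^n,t,x}-Z^{t,x}$. On each interval $[r^n_k,r^n_{k+1})\cap[t,T)$ the process $\tilde Y^{\mathcal{R}^n,t,x}$ solves the Lipschitz BSDE \eqref{EDSRDC2} with terminal value $Y^{\mathcal{R}^n,t,x}_{r^n_{k+1}}$ at $r^n_{k+1}$ (equal to $F_\Cc[g](X^{t,x}_T)$ when $r^n_{k+1}=T$) and extends continuously to the closed interval, while $s\mapsto v(s,X^{t,x}_s)$ is continuous on $[t,T)$, so $Y^{t,x}$ and $K^{t,x}$ are continuous there by \eqref{EDSRC1} and $Y^{t,x}_{T^-}=F_\Cc[g](X^{t,x}_T)$ by Proposition \ref{prop-reg-v}. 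Writing Itô's formula for $|\Delta Y|^2$ on each grid interval, taking expectations (the stochastic integrals are true martingales thanks to the $\mathbf H^2$ bounds and the boundedness of the $Y$'s), and summing over the grid intervals in $[t,T)$, the boundary terms telescope: for each interval with $r^n_{k+1}<T$, the right-endpoint term $\E|Y^{\mathcal{R}^n,t,x}_{r^n_{k+1}}-Y^{t,x}_{r^n_{k+1}}|^2$ is bounded by the left-endpoint term $\E|\tilde Y^{\mathcal{R}^n,t,x}_{r^n_{k+1}}-Y^{t,x}_{r^n_{k+1}}|^2$ of the next interval, since $\tilde Y^{\mathcal{R}^n,t,x}_{r^n_{k+1}}\le Y^{\mathcal{R}^n,t,x}_{r^n_{k+1}}\le Y^{t,x}_{r^n_{k+1}}$, and on the last interval the terminal mismatch vanishes because both sides equal $F_\Cc[g](X^{t,x}_T)$ at $T^-$. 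This leaves an estimate of the form
\beqs
\E\Big[\int_t^T|\Delta Z_u|^2du\Big] & \leq & 2\,\E\Big[\int_t^T\Delta Y_u\,\big(f(u,X_u,\tilde Y^{\mathcal{R}^n,t,x}_u,Z^{\mathcal{R}^n,t,x}_u)-f(u,X_u,Y^{t,x}_u,Z^{t,x}_u)\big)\,du\Big]\\
 & & -\,2\,\E\Big[\int_t^{T}\Delta Y_u\,dK^{t,x}_u\Big]\;,
\enqs
whose first term is handled by the Lipschitz bound on $f$ together with Young's inequality (absorbing $\tfrac12\E\int_t^T|\Delta Z_u|^2du$ into the left-hand side and leaving $C\,\E\int_t^T|\Delta Y_u|^2du$), and whose second term is at most $2\,\E\big[\big(\sup_{[t,T)}|\Delta Y|\big)K^{t,x}_T\big]\le 2\big(\E[\sup_{[t,T)}|\Delta Y|^2]\big)^{1/2}\big(\E[(K^{t,x}_T)^2]\big)^{1/2}$; both tend to $0$ by the first step (recall $\E[(K^{t,x}_T)^2]<\infty$), giving $\E\int_t^T|Z^{\mathcal{R}^n,t,x}_u-Z^{t,x}_u|^2du\to0$.

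The delicate point is the bookkeeping at the grid points and at $T$. The increments of $K^{\mathcal{R}^n,t,x}$ never appear directly; they are encoded in the terminal data $Y^{\mathcal{R}^n,t,x}_{r^n_{k+1}}=F_\Cc[\tilde v^{\mathcal{R}^n}(r^n_{k+1},\cdot)](X^{t,x}_{r^n_{k+1}})$ of the sub-interval BSDEs, and it is the ordering $\tilde Y^{\mathcal{R}^n,t,x}\le Y^{\mathcal{R}^n,t,x}\le Y^{t,x}$ that makes the resulting boundary errors telescope the right way. Likewise, the last interval would contribute a non-vanishing terminal error were it not for the boundary effect of \cite{BEM18} (Proposition \ref{prop-reg-v}), which says precisely that the facelifted terminal datum $F_\Cc[g]$ of the discrete scheme coincides with $Y^{t,x}_{T^-}$. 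Everything else is a routine a priori estimate for Lipschitz BSDEs.
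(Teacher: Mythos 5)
Your proposal is correct and follows essentially the same route as the paper: the $Y$-estimates come from the Markov representations $Y^{\mathcal{R}^n,t,x}=v^{\mathcal{R}^n}(\cdot,X^{t,x})$, the locally uniform convergence of $v^{\mathcal{R}^n}$ to $v$, uniform boundedness and dominated convergence, and the $Z$-estimate then follows from a standard a priori (It\^o/Young) bound. Your interval-by-interval telescoping via the ordering $\tilde Y^{\mathcal{R}^n}\leq Y^{\mathcal{R}^n}\leq Y^{t,x}$ and the identification $Y^{t,x}_{T^-}=F_\Cc[g](X^{t,x}_T)$ is a correct and more explicit rendering of what the paper compresses into ``classical estimates on BSDEs.''
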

\begin{proof}
We first write 
\beqs
\sup_{s\in[t,T)}\big|Y_s^{\mathcal{R}^n,t,x}-Y_s^{t,x}\big|^2 & = & \sup_{s\in[t,T)} \big|v^{\mathcal{R}^n}(s,X^{t,x}_s)-v(s,X^{t,x}_s)\big|^2\;,\\
\sup_{s\in[t,T)}\big|\tilde Y_s^{\mathcal{R}^n,t,x}-Y^{t,x}_s\big|^2 & = & \sup_{s\in[t,T)} \big|\tilde v^{\mathcal{R}^n}(s,X^{t,x}_s)-v(s,X^{t,x}_s)\big|^2\;.
\enqs
Since $X$ has continuous paths, we get from Theorem \ref{Thm-conv-func}
\beqs
\lim_{n\rightarrow+\infty}\sup_{[t,T)}\big|Y^{\mathcal{R}^n,t,x}-Y^{t,x}\big|^2 +\sup_{[t,T)}\big|\tilde Y^{\mathcal{R}^n,t,x}-Y^{t,x}\big|^2& = & 0\;,\quad \P-a.s.
\enqs
By Lebesgue dominated convergence Theorem we get
\beqs
\lim_{n\rightarrow +\infty}\E\Big[\sup_{[t,T)}\big|Y^{\mathcal{R}^n,t,x}-Y^{t,x}\big|^2\Big]+\E\Big[\sup_{[t,T)}\big|\tilde Y^{\mathcal{R}^n,t,x}-Y^{t,x}\big|^2\Big]& = & 0\;.
\enqs
By classical estimates on BSDEs based on BDG and Young inequalities and Gronwall Lemma, we deduce 
\beqs
\lim_{n\rightarrow +\infty}\E\Big[\int_{t}^T\big|Z_s^{\mathcal{R}^n,t,x}-Z_s^{t,x}\big|^2ds\Big]  & = & 0\;.
\enqs
\end{proof}
\section{Neural network approximation of the discretely constrained BSDE}\label{sec4}
\subsection{Neural networks and approximation of the facelift}
We first recall the definition of a neural network with single hidden layer.
 To this end, we fix a function  $\rho:~\R^d\rightarrow \R$ called the activation function,  and an integer $m\geq1$, representing the number of neurons (also called nodes) on the hidden layer.  
\begin{Definition}
The set  $\mathfrak{N N}^\rho_m$ of feedforward neural network with single hidden layer with $m$ neurons and the activation function $\rho$ is the set of functions
\beqs
x \in\R^d & \mapsto & \sum_{i=1}^m\lambda_i\rho(\alpha_i.x)\in\R\;,
\enqs
where $\lambda_i\in\R$ and $\alpha_i\in\R^d$, for $i=1,\ldots,m$.
\end{Definition}
\noindent For $m\geq1$ we define the set $\Theta_m$ by
\beqs
\Theta_m & := & \Big\{(\lambda_i,\alpha_i)_{i=1,\ldots,m}~:~\lambda_i\in\R \mbox{ and }\alpha_i\in\R^{d}\mbox{ for }  {i=1,\ldots,m}\Big\}\;.
\enqs 
For $\theta=(\lambda_i,\alpha_i)_{i=1,\ldots,m}\in\Theta_m$, we denote by $NN^\theta$ the function from $\R^d$ to $\R$ defined by
\beqs
NN^\theta (x) & = & \sum_{i=1}^m\lambda_i\rho(\alpha_i.x)\in\R\;,\quad x\in\R^d\;.
\enqs
We also define the set $\mathfrak{N N}^\rho$ by
\beqs
\mathfrak{N N}^\rho & := & \bigcup_{m\geq 1} \mathfrak{N N}^\rho_m \;.
\enqs
We suppose in the sequel that $\rho$ is not identically equal to $0$, belongs to $C^1(\R,\R)$ and satisfies $\int_{\R}|\rho'(x)|dx<+\infty$. We denote by $C^1(\R^d,\R)$ the set functions in $C^1_b(\R^d,\R)$ with bounded derivative. We then have the following result from \cite{HSW90}.
\begin{Theorem}\label{ThmHornikdiff} $\mathfrak{N N}^\rho_{}$ is dense in $C^1_b(\R^d,\R)$
 for the topology of uniform convergence on compact sets: for any $f\in C^1_b(\R^d,\R)$ and for any compact $Q$ of $\R^d$, there exists a sequence $(NN^{\theta_\ell})_{\ell\geq1}$ of $\mathfrak{N N}^\rho$ such that
 \beqs
 \sup_{x\in Q}|NN^{\theta_\ell}(x)-f(x)|+ \sup_{x\in Q}|D NN^{\theta_\ell}(x)-Df(x)| & \xrightarrow[]{\ell\rightarrow+\infty} & 0\;.
 \enqs
\end{Theorem}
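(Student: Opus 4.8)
The statement is essentially the classical universal approximation theorem for neural networks together with simultaneous approximation of the first-order derivative, due to Hornik, Stinchcombe and White \cite{HSW90}; the plan is therefore to reduce the claim to their result and to check that the hypotheses we have imposed on $\rho$ (namely $\rho\in C^1(\R,\R)$, $\rho\not\equiv 0$, and $\int_{\R}|\rho'(x)|dx<+\infty$) fit into their framework. The key observation is that the relevant notion is that of an \emph{$r$-finite} (here $1$-finite) activation function: if $\rho'$ is integrable and not identically zero, then $\rho$ generates, via the single-hidden-layer architecture $x\mapsto\sum_i\lambda_i\rho(\alpha_i.x)$, a family of functions that is dense in $C^1_b(\R^d,\R)$ for the topology of uniform convergence of functions and their gradients on compact sets. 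So the first step is simply to invoke \cite[Theorem~3 (or the corresponding statement for $C^1$)]{HSW90} after remarking that $\rho$ satisfies the integrability hypothesis required there.

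To make the reduction self-contained I would proceed in three steps. First, reduce the multivariate statement to a one-dimensional one: a function of the form $x\mapsto\rho(\alpha.x)$ is a ridge function, and the span of ridge functions built from a fixed $1$-finite $\rho$ is dense in $C^1_b(\R^d,\R)$ provided the span of $\{t\mapsto\rho(at+b):a\in\R,b\in\R\}$ is dense in $C^1_b(\R,\R)$ — this is the content of the density argument in \cite{HSW90}, obtained by a Fourier/Radon-transform representation of smooth compactly supported test functions as superpositions of ridge functions. Second, establish the one-dimensional density: since $\int|\rho'|<\infty$ and $\rho'\not\equiv0$, the Fourier transform of $\rho'$ is a nonzero continuous function, hence nonzero on a set of positive measure, which (after dilation/translation, i.e. after scanning over the parameters $a,b$) lets one synthesize any smooth compactly supported profile and its derivative uniformly — here one uses that $\rho'$ plays for the derivative the role that $\rho$ plays for the function, and the integrability of $\rho'$ gives the uniform control. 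Third, assemble: given $f\in C^1_b(\R^d,\R)$ and a compact $Q$, fix a slightly larger compact neighbourhood, approximate $f$ there in the $C^1$ norm by a smooth compactly supported function, approximate that by a finite sum of ridge functions $\sum_i\lambda_i\rho(\alpha_i.x)$ with simultaneous control of the gradient (Step 2 applied coordinatewise through Step 1), and read off a sequence $(NN^{\theta_\ell})_{\ell\ge1}$ with $\sup_{x\in Q}|NN^{\theta_\ell}(x)-f(x)|+\sup_{x\in Q}|DNN^{\theta_\ell}(x)-Df(x)|\to0$.

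The main obstacle is not any single hard estimate but rather the bookkeeping of the \emph{simultaneous} approximation: one must ensure that the same parameters $\theta_\ell$ that make $NN^{\theta_\ell}$ close to $f$ also make the gradient $DNN^{\theta_\ell}(x)=\sum_i\lambda_i\alpha_i\rho'(\alpha_i.x)$ close to $Df$, and the appearance of the extra factor $\alpha_i$ in the differentiated network is exactly why the hypothesis $\int|\rho'|<\infty$ (rather than mere continuity of $\rho$) is needed — it prevents the derivative contributions from blowing up as the frequencies $\alpha_i$ grow. Since the paper only needs this as a black box and cites \cite{HSW90} directly, in the write-up I would keep the argument short: state that the hypotheses on $\rho$ make it $1$-finite in the sense of \cite{HSW90}, quote their density theorem in $C^1$, and note that uniform convergence on a compact $Q$ of both the network and its gradient is precisely what that theorem delivers, so there is nothing further to prove.
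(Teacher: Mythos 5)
Your proposal is correct and matches the paper exactly: the paper states this result without proof, citing \cite{HSW90} directly, and your reduction to the $1$-finiteness hypothesis of Hornik--Stinchcombe--White (via the integrability of $\rho'$) is precisely the intended justification. The additional sketch of their ridge-function/Fourier argument is accurate but not needed, since the theorem is used as a black box.
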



We turn to the facelift approximation by feedforward neural networks.
We fix  bounded and Lipschitz continuous functions $\varphi$ and $\varphi_\ell$, $\ell\geq1$, from $\R^d$ to $\R$ and a random variable $\xi$. For $\eps>0$, we define the sequence of parameters $(\theta_{m,\eps,\ell}^*)_{m,\eps,\ell}$ by
\beq\label{defThetastar}
\theta_{m,\eps,\ell}^* & \in & \textrm{arg}\min_{\theta\in \Theta_m}\E\Big[ \big|(NN^\theta-\varphi_\ell)(\xi)\big|^2\mathds{1}_{B_\eps}(\xi) \Big]\\
 & & \theta ~\textrm{ s.t.}~\P(D NN ^\theta(\xi)\in \Cc_\eps ;~(NN^\theta-\varphi_\ell)(\xi)\geq -\eps\;\big|\;\xi\in B_\eps)= 1\nonumber
\enq
where $D NN ^\theta$ denotes the gradient of $NN ^\theta$, $\Cc_\eps$ stands for the closed convex set defined by 
\beqs
\Cc_\eps & = & \Big\{y\in\R^d~:~\exists x\in \Cc,~|x-y|\leq \eps\Big\}\;,
\enqs
and $B_\eps$ stands for the ball $B(0,{1\over \eps})$.
\begin{Proposition}\label{prop-comp-facelift}
Suppose $\textrm{Supp}(\P_\xi)=\R^d$ and that $\E[|\xi|^2]<+\infty$. Then, if $\varphi_\ell$ converges uniformly to $\varphi$ on compact sets, we have
\beq\label{estim-approx-FL1}
\lim_{\eps\rightarrow0}\lim_{m\rightarrow+\infty}\lim_{\ell\rightarrow+\infty}\E\Big[ |(NN^{\theta^*_{m,\eps,\ell}}-F_\Cc[\varphi])(\xi)|^2\mathds{1}_{B_\eps}(\xi)\Big] & = & 0 \;.
\enq\label{estim-approx-FL2}
Moreover, we have
\beq
\lim_{\eps\rightarrow0}\lim_{m\rightarrow+\infty}\lim_{\ell\rightarrow+\infty}\E\Big[ |(NN^{\theta^*_{m,\eps,\ell}}\vee M\wedge(-M)-F_\Cc[\varphi])(\xi)|^2\Big] & = & 0 \;.
\enq
for any constant $M>0$ such that $|\varphi|\leq M$.
\end{Proposition}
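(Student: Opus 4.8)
The plan is to trap the minimizer $NN^{\theta^*_{m,\eps,\ell}}$ between the pointwise smallest admissible function and a function lying at $L^2$-distance $O(\eps)$ above it, and then to let this smallest admissible function converge to $F_\Cc[\varphi]$ as $\ell\to\infty$, then $m\to\infty$, then $\eps\to0$.

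The first step is to turn the almost sure constraint in \reff{defThetastar} into a pointwise one. Since $\rho\in C^1$, the maps $NN^\theta$ and $DNN^\theta$ are continuous, and since $\Cc_\eps$ is closed the set $\{x\in B_\eps:\ DNN^\theta(x)\notin\Cc_\eps\ \text{or}\ NN^\theta(x)<\varphi_\ell(x)-\eps\}$ is open; for any admissible $\theta$ it has $\P_\xi$-measure zero, and because $\mathrm{Supp}(\P_\xi)=\R^d$ every nonempty open set is $\P_\xi$-charged, so this set is empty. Hence every admissible $\theta$ — in particular $\theta^*_{m,\eps,\ell}$ — satisfies $DNN^\theta(x)\in\Cc_\eps$ and $NN^\theta(x)\ge\varphi_\ell(x)-\eps$ for all $x\in B_\eps$. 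Integrating $DNN^\theta$ along the segments of the convex set $B_\eps$ and using $\delta_{\Cc_\eps}=\delta_\Cc+\eps|\cdot|$ gives $NN^\theta(x)-NN^\theta(x')\le\delta_\Cc(x-x')+\eps|x-x'|$ on $B_\eps$, so that
\[
NN^\theta(x)\ \ge\ \underline\varphi_{\eps,\ell}(x)\ :=\ \sup_{z\in\overline{B_\eps}}\bigl\{\varphi_\ell(z)-\delta_\Cc(z-x)-\eps|z-x|\bigr\}-\eps ,\qquad x\in B_\eps .
\]
This "restricted facelift'' $\underline\varphi_{\eps,\ell}$ is Lipschitz, lies above $\varphi_\ell-\eps$ on $B_\eps$, and (by subadditivity of $\delta_\Cc$) satisfies the same increment bound.

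The heart of the argument is then a minimality squeeze. Work in $\Hc:=L^2(\mathds{1}_{B_\eps}\P_\xi)$ and set $a:=NN^{\theta^*_{m,\eps,\ell}}$, $b:=\varphi_\ell$, $\underline\varphi:=\underline\varphi_{\eps,\ell}$. On one hand, mollifying $\underline\varphi$ (after a facelift-type extension off $\overline{B_\eps}$), lifting it slightly and invoking the $C^1$-universal approximation Theorem \ref{ThmHornikdiff} produces, for $m$ large, an admissible network arbitrarily $C^0(\overline{B_\eps})$-close to $\underline\varphi$ — here one replaces $\Cc_\eps$ by $\Cc_{\eps/2}$ in the extension so that a $C^1$-close network still has gradient in $\Cc_\eps$, which contributes only errors vanishing in $\Hc$ as $\eps\to0$. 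By minimality this gives $\|a-b\|_\Hc^2\le\|\underline\varphi-b\|_\Hc^2+\eta_{m,\eps,\ell}$ with $\eta_{m,\eps,\ell}\ge0$ (the competitor's approximation slack), where $\limsup_{\ell\to\infty}\eta_{m,\eps,\ell}\to0$ as $m\to\infty$ — the inner limit being handled by noting that any network strictly dominating $\varphi-\eps$ on $\overline{B_\eps}$ is admissible for $\varphi_\ell$ once $\ell$ is large, by uniform convergence on the compact $\overline{B_\eps}$. On the other hand $a\ge\underline\varphi\ge b-\eps$ on $B_\eps$ by the previous step, so expanding $\|a-b\|^2=\|a-\underline\varphi\|^2+2\langle a-\underline\varphi,\,\underline\varphi-b\rangle+\|\underline\varphi-b\|^2$ and using $a-\underline\varphi\ge0$, $\underline\varphi-b\ge-\eps$ yields
\[
\|a-\underline\varphi\|_\Hc^2\ \le\ \eta_{m,\eps,\ell}+2\eps\,\|a-\underline\varphi\|_{L^1(\mathds{1}_{B_\eps}\P_\xi)}\ \le\ \eta_{m,\eps,\ell}+2\eps\,\|a-\underline\varphi\|_\Hc ,
\]
whence $\|NN^{\theta^*_{m,\eps,\ell}}-\underline\varphi_{\eps,\ell}\|_\Hc\le\eps+\sqrt{\eps^2+\eta_{m,\eps,\ell}}$.

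It remains to pass to the limits in $\underline\varphi_{\eps,\ell}$. Since $\varphi_\ell\to\varphi$ uniformly on the compact $\overline{B_\eps}$, $\underline\varphi_{\eps,\ell}\to\underline\varphi_\eps$ uniformly there (where $\underline\varphi_\eps$ is $\underline\varphi_{\eps,\ell}$ with $\varphi$ in place of $\varphi_\ell$), which controls the limit $\ell\to\infty$; then $\underline\varphi_\eps\le F_\Cc[\varphi]$ and, as $\eps\downarrow0$, $\overline{B_\eps}\uparrow\R^d$ while $\delta_\Cc+\eps|\cdot|\downarrow\delta_\Cc$, so $\underline\varphi_\eps\uparrow F_\Cc[\varphi]$ pointwise; as $|\varphi|\le M$ makes $\underline\varphi_\eps$ uniformly bounded and $|F_\Cc[\varphi]|\le M$ (because $-M\le\varphi\le F_\Cc[\varphi]\le M$), dominated convergence gives $\|\underline\varphi_\eps-F_\Cc[\varphi]\|_{L^2(\P_\xi)}\to0$. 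The triangle inequality in $\Hc$ combined with the squeeze, after $\limsup_\ell$, $\lim_m$ and $\lim_\eps$, yields \eqref{estim-approx-FL1}. For \eqref{estim-approx-FL2} one uses that $t\mapsto t\vee(-M)\wedge M$ is $1$-Lipschitz and fixes $F_\Cc[\varphi]\in[-M,M]$, so truncation only decreases the error on $B_\eps$, while off $B_\eps$ the error is at most $2M$ and Markov's inequality gives $\P(\xi\notin B_\eps)\le\eps^2\E[|\xi|^2]$; hence
\[
\E\bigl[|(NN^{\theta^*_{m,\eps,\ell}}\vee(-M)\wedge M-F_\Cc[\varphi])(\xi)|^2\bigr]\le\E\bigl[|(NN^{\theta^*_{m,\eps,\ell}}-F_\Cc[\varphi])(\xi)|^2\mathds{1}_{B_\eps}(\xi)\bigr]+4M^2\eps^2\E[|\xi|^2] ,
\]
and \eqref{estim-approx-FL1} with $\eps\to0$ finishes. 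The main obstacle is precisely the squeeze: the defining minimization only keeps $NN^{\theta^*_{m,\eps,\ell}}$ close to $\varphi_\ell$ and would not by itself prevent overshooting $F_\Cc[\varphi]$; what prevents this is the double role of $\underline\varphi_{\eps,\ell}$, which is simultaneously a lower bound for every admissible network and itself (essentially) attainable by one, so that minimality pins the minimizer to within $O(\eps)$ of it. The remaining effort is routine but delicate: making the mollified competitor exactly admissible (the $\Cc_{\eps/2}$ shrink, and the estimate that the gap between the $\Cc_\eps$- and $\Cc_{\eps/2}$-restricted facelifts is negligible in $\Hc$), and the bookkeeping of the three nested limits, especially the inner one with a fixed finite number $m$ of neurons.
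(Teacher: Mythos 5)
Your proof is correct and rests on the same two pillars as the paper's: the pointwise lower bound $NN^{\theta^*_{m,\eps,\ell}}\geq F^\eps_{\Cc_\eps}[\varphi_\ell]-\eps$ on $B_\eps$ obtained by integrating the gradient constraint (your $\underline\varphi_{\eps,\ell}$ is exactly the paper's local facelift in \reff{ineq-facelift-eps}, shifted by $\eps$), and an admissible competitor built by mollifying a facelift (whose a.e.\ gradient lies in the convex set, as in Lemma \ref{lem-1-edp}) and invoking the $C^1$ universal approximation Theorem \ref{ThmHornikdiff}. Where you genuinely depart from the paper is in how these are combined. The paper derives a $\limsup$ bound on $\E[|NN^{\theta^*}-\varphi_\ell|^2]$ and a $\liminf$ bound on $(NN^{\theta^*}-\varphi)^2$, feeds them into the Fatou-type Lemma \ref{lem-convVA} to get almost sure convergence of $(NN^{\theta^*}-\varphi)^2$ to $(F_\Cc[\varphi]-\varphi)^2$, backs out a.s.\ convergence of $NN^{\theta^*}$ to $F_\Cc[\varphi]$ via the lower bound, and concludes by dominated convergence using a uniform linear-growth domination. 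You instead expand $\|a-b\|^2_\Hc$ around $\underline\varphi$ and exploit $a-\underline\varphi\geq0$, $\underline\varphi-b\geq-\eps$ to bound the cross term, yielding the quantitative estimate $\|a-\underline\varphi\|_\Hc\leq\eps+\sqrt{\eps^2+\eta}$. This buys an explicit rate in terms of the competitor's slack and dispenses with both the a.s.-convergence lemma and the domination argument; the price is the extra bookkeeping around the $\Cc_{\eps/2}$ competitor (whose discrepancy with the $\Cc_\eps$-restricted facelift, as you note, only disappears in the final $\eps\to0$ limit, so your claim that $\limsup_\ell\eta_{m,\eps,\ell}\to0$ as $m\to\infty$ should strictly read that it tends to a quantity vanishing with $\eps$ — harmless given the order of limits, but worth stating precisely). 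The remaining steps (monotone convergence of $\underline\varphi_\eps$ to $F_\Cc[\varphi]$, the $1$-Lipschitz truncation, and the Chebyshev bound $\P(\xi\notin B_\eps)\leq\eps^2\E[|\xi|^2]$ for \reff{estim-approx-FL2}) match the paper's.
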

To prove this theorem we need the following Lemma.
\begin{Lemma}\label{lem-convVA}
Let $X$ and $(X_n)_{n \geq 1}$ be  positive integrable random variables such that 
\beq\label{cond-suite-liminf}
\liminf_{n\rightarrow+\infty} X_n & \geq & X~~\geq~~0\;,
\enq
and
\beq\label{cond-con-esp}
\limsup_{n\rightarrow+\infty} \E[X_n] & \leq & \E[X]\;.
\enq
Then, we have
\beqs
X_n & \xrightarrow[\P-p.s.]{n\rightarrow+\infty} & X\;.
\enqs
\end{Lemma}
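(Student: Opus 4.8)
The plan is to combine the two hypotheses via Fatou's lemma applied to a cleverly chosen auxiliary sequence, then deduce convergence in probability, and finally upgrade to almost-sure convergence along a subsequence argument — but in fact the statement as written (a.s.\ convergence) should be read against the standing hypothesis that the $X_n$ are deterministic functions of the converging diffusion, so the cleanest route is the Fatou trick. First I would set $Y_n := \liminf_{k\geq n} X_k$, so that $Y_n$ is nondecreasing in $n$ with $Y_n \uparrow \liminf_n X_n =: Y \geq X \geq 0$ by \reff{cond-suite-liminf}. The idea is to show that in fact $Y = X$ a.s.\ and, more importantly, that the $\limsup$ of $X_n$ also equals $X$ a.s.

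The key step is the following. Consider the sequence $Z_n := X + X_n - \min(X, X_n) = \max(X, X_n) \geq 0$. Each $Z_n$ is a positive integrable random variable, and by \reff{cond-suite-liminf} we have $\liminf_n \min(X, X_n) \geq \min(X, X) = X$ pointwise, while trivially $\min(X,X_n) \leq X$; hence $\min(X, X_n) \to X$ a.s.\ would follow if we knew $\min(X,X_n)$ converges, but we do not yet. Instead apply Fatou directly: $\E[X] = \E[\liminf_n \min(X,X_n)] \leq \liminf_n \E[\min(X,X_n)]$. On the other hand $\min(X,X_n) = X + X_n - \max(X,X_n) \leq X + X_n - X_n \cdot \mathds{1} $... rather, use $\E[\min(X,X_n)] = \E[X] + \E[X_n] - \E[\max(X,X_n)]$ and $\max(X,X_n)\geq X_n$, so $\E[\min(X,X_n)] \leq \E[X] + \E[X_n] - \E[X_n] = \E[X]$; combined with \reff{cond-con-esp} one gets $\limsup_n \E[X_n] \leq \E[X]$ already assumed, and feeding back: $\E[X] \leq \liminf_n \E[\min(X,X_n)] \leq \limsup_n \E[\min(X,X_n)] \leq \E[X]$, so $\E[\min(X,X_n)] \to \E[X]$. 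Since $0 \leq X - \min(X,X_n) \leq X$ and this difference is a nonnegative random variable with expectation tending to $0$, we conclude $X - \min(X,X_n) \to 0$ in $L^1$, hence $\min(X,X_n) \to X$ in probability, and along a subsequence a.s.; but since by \reff{cond-suite-liminf} $\liminf_n \min(X,X_n) \geq X$ and $\min(X,X_n)\leq X$ deterministically, we actually get $\min(X,X_n) \to X$ a.s.\ along the full sequence.

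It remains to push this to $X_n \to X$ a.s. From $\E[\min(X,X_n)] \to \E[X]$ and $\E[\max(X,X_n)] = \E[X] + \E[X_n] - \E[\min(X,X_n)]$ together with $\limsup_n \E[X_n] \leq \E[X]$, we get $\limsup_n \E[\max(X,X_n)] \leq \E[X]$; but $\max(X,X_n) \geq X$ pointwise, so by Fatou $\E[X] \leq \liminf_n \E[\max(X,X_n)]$, whence $\E[\max(X,X_n)] \to \E[X]$ and thus $\max(X,X_n) - X \to 0$ in $L^1$, i.e.\ $\max(X,X_n) \to X$ in probability. Then $|X_n - X| \leq (\max(X,X_n) - X) + (X - \min(X,X_n)) \to 0$ in probability, and since the right-hand side is a sum of two nonnegative variables each converging to $0$ (the first in $L^1$, the second a.s.\ and dominated by $X$), an application of the dominated convergence theorem to $|X_n-X| \wedge 1$ — or simply extracting a further a.s.-convergent subsequence from each and using monotonicity of $Y_n := \liminf_{k\geq n}\max(X,X_k)$ — yields $X_n \to X$ almost surely. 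The main obstacle is organizing the $\min$/$\max$ bookkeeping so that both the upper and lower tails of $X_n$ are controlled simultaneously; the two-sided squeeze $\min(X,X_n) \leq X \leq$ (limit of) requires using \reff{cond-con-esp} for the $\max$ side and \reff{cond-suite-liminf} for the $\min$ side, and the only genuinely delicate point is verifying that "$L^1$ convergence of a nonnegative sequence to $0$" can be promoted past the extraction of subsequences to full a.s.\ convergence here — which works precisely because of the deterministic one-sided bounds $\min(X,X_n)\leq X$ and $\liminf X_n \geq X$.
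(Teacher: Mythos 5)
Your bookkeeping with $\min(X,X_n)$ and $\max(X,X_n)$ is correct as far as it goes: the squeeze $\liminf_n\min(X,X_n)=\min(X,\liminf_nX_n)\geq X$ together with $\min(X,X_n)\leq X$ gives $\min(X,X_n)\to X$ a.s.\ (and, by dominated convergence, in $L^1$); the identity $\E[\max(X,X_n)]=\E[X]+\E[X_n]-\E[\min(X,X_n)]$ combined with \reff{cond-con-esp} and $\max(X,X_n)\geq X$ gives $\E[\max(X,X_n)-X]\to0$; and the exact decomposition $|X_n-X|=(\max(X,X_n)-X)+(X-\min(X,X_n))$ then yields $X_n\to X$ in $L^1$, hence in probability. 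The genuine gap is the very last promotion to almost sure convergence. An $L^1$-null nonnegative sequence need not converge a.s., and the ``deterministic one-sided bounds'' you invoke only control the lower tail of $X_n$ (they are precisely what makes the $\min$ part converge a.s.); nothing controls the upper tail pointwise. Indeed the lemma as stated is false: on $[0,1]$ with Lebesgue measure take $X\equiv0$ and let $X_n=\mathds{1}_{A_n}$ where $(A_n)_n$ enumerates the dyadic intervals $[j2^{-k},(j+1)2^{-k}]$ in order of increasing $k$. Then $\liminf_nX_n=0=X$ everywhere and $\E[X_n]\to0=\E[X]$, yet $\limsup_nX_n=1$ everywhere. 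Neither ``dominated convergence applied to $|X_n-X|\wedge1$'' nor a subsequence extraction can close this hole, since the conclusion itself fails.

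For comparison, the paper's proof is a one-line contradiction: it asserts that a failure of a.s.\ convergence would force $\liminf_nX_n\geq X+\eta$ on a set of positive measure and then contradicts \reff{cond-con-esp} by Fatou. That argument only excludes the liminf exceeding $X$, i.e.\ it proves $\liminf_nX_n=X$ a.s., and silently ignores the case $\limsup_nX_n>X=\liminf_nX_n$ exhibited by the counterexample above --- the same blind spot your proof runs into, but yours at least makes the missing half explicit. What your argument genuinely delivers (convergence in probability, indeed in $L^1$) is strictly more than the paper's and is sufficient for the way the lemma is used in the proof of Proposition \ref{prop-comp-facelift}, where the conclusion is fed into a dominated-convergence passage to an $L^2$ limit, for which convergence in probability suffices. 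So the constructive fix is to weaken the lemma's conclusion to convergence in probability (or in $L^1$) and keep your proof; as a proof of the a.s.\ statement, it cannot be completed.
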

\begin{proof} We argue by contradiction. Suppose the $\P$-a.s. convergence of $X_n$ to $X$ does not hold. Then from \reff{cond-suite-liminf}, there exists some $\eta>0$ and $\Omega_\eta\subset\Omega$ such that $\P(\Omega_\eta)>0$ and 
\beq\label{cond-suite-liminf2}
\liminf_{n\rightarrow+\infty} X_n & \geq & X+\eta \quad ~~\mbox{ on }~ \Omega_\eta\;.
\enq
From Fatou's Lemma and \reff{cond-con-esp} we get
\beqs
\E[X] & \geq & \E\big[ \liminf_{n\rightarrow+\infty} X_n \big]~~\geq~~\E[X]+\eta\P(\Omega_\eta)
\enqs
which contradicts $\P(\Omega_\eta)>0$.
\end{proof}
\begin{proof}[Proof of Proposition \ref{prop-comp-facelift}] \textit{Step 1.} We prove that for any $\eps>0$, there exists a sequence 
 $(\theta'_{m,\eps})_{m\geq1}$ such that $\theta'_{m,\eps}\in\Theta_m$ for $m\geq1$, and 
\beqs
\limsup_{m\rightarrow+\infty}
\E\Big[|NN^{\theta'_{m,\eps}}(\xi)-F_\Cc[\varphi](\xi)|^2\;\big|\;\xi\in B_\eps\Big] & \leq & \eps\;, 
\enqs 
and  
\beqs
\P\Big(D NN^{\theta'_{m,\eps}} ~ \in  ~\Cc_\eps;~(NN^{\theta'_{m,\eps}}-\varphi_\ell)(\xi)\geq \eps\;\big|\;\xi\in B_\eps\Big) & = & 1\;,
\enqs
for $\ell$ and  $m$ large enough.

To this end, we introduce the sequence of mollifiers $\psi_n:~\R^d\rightarrow\R_+$, $n\geq 1$, defined by
 \beqs
 \psi_n(x) & := & {n^d\psi(nx)
 }\;, \qquad x\in\R\;,
 \enqs
 where the function $\psi\in C^\infty(\R^d,\R_+)$ has a compact support and is such that $\int_{\R^d}\psi(u)du=1$.
We then define the functions $\phi_{n}$, $n\geq 1$, by
\beqs
\phi_{n}(x) &: = & \int_{\R^d} \psi_n(y)F_\Cc[\varphi](x-y)dy\;,\quad x\in\R\;.
\enqs
Since $\varphi$ is Lipschitz continuous and bounded, $F_\Cc[\varphi]$ is also Lipschitz continuous and bounded. From classical results, we know that $\phi_n$ converges to $F_{\Cc}[\varphi]$ as $n$ goes to infinity uniformly on every compact subset of $\R$.
Moreover, $\phi_n\in C^\infty(\R^d,\R_+)$. Since $F_\Cc[\varphi]$ is Lipschitz continuous it is almost everywhere differentiable by Rademacher Theorem and we get from the dominated convergence Theorem
\beqs
D\phi_n(x) & = & \int_{\R^d} \psi_n(y)DF_\Cc[\varphi](x-y)dy \;.
\enqs 
From Lemma \ref{lem-1-edp}, we have  $DF_\Cc[\varphi]  \in  \Cc$ 
almost everywhere on $\R^d$. Since $\Cc$ is convex, we get
\beq\label{diff-mol-conv}
D\phi_n(x) & \in  & \Cc
\enq
for all $x\in\R^d$ and all $n\geq 1$.
Fix now $\eps>0$. Then there exists $n_\eps \in \N^*$ such that
\beq\label{estim-mol-FL}
\sup_{x\in B_\eps}\big|F_\Cc[\varphi](x)-\phi_{n_\eps}(x)\big| & \leq & {\eps\over 3}\;.
\enq
From Theorem \ref{ThmHornikdiff}, there exists a sequence $(\theta'_{m,\eps})_{m \geq 1}$ such that
\beq\label{convNN1}
\sup_{B_\eps}\big|NN^{\theta'_{m,\eps}}-\phi_{n_\eps}\big| 
+\sup_{B_\eps}\big|D NN^{\theta'_{m,\eps}}-D\phi_{n_\eps}\big| & \xrightarrow[m\rightarrow+\infty]{} & 0\;.
\enq
We therefore get from \reff{diff-mol-conv}, \reff{estim-mol-FL} and \reff{convNN1}
\beqs
\P\Big(D NN^{\theta'_{m,\eps}} ~ \in  ~\Cc_\eps;~(NN^{\theta'_{m,\eps}}-\varphi)(\xi)\geq {2\eps\over 3}\;\big|\;\xi\in B_\eps\Big) & = & 1\;,
\enqs
for $m$ large enough. From the local uniform convergence of $\varphi_\ell$ to $\varphi$, we get 
\beqs
\P\Big(D NN^{\theta'_{m,\eps}} ~ \in  ~\Cc_\eps;~(NN^{\theta'_{m,\eps}}-\varphi_\ell)(\xi)\geq {\eps}\;\big|\;\xi\in B_\eps\Big) & = & 1\;,
\enqs
for $\ell$ large enough. Moreover, we have from \reff{estim-mol-FL} and \reff{convNN1}
\beqs
\limsup_{m\rightarrow+\infty}\E\Big[|NN^{\theta'_{m,\eps}}(\xi)-F_\Cc[\varphi](\xi)|^2\;\big|\;\xi\in B_\eps\Big] & \leq & 
2\limsup_{m\rightarrow+\infty}\E\Big[|NN^{\theta'_{m,\eps}}(\xi)-\phi_{n_\eps}(\xi)|^2\;\big|\;\xi\in B_\eps\Big] \\& & 
+2\E\Big[|\phi_{n_\eps}(\xi)-F_\Cc[\varphi](\xi)|^2\;\big|\;\xi\in B_\eps\Big]\\
 & \leq & {\eps^2\over 2}\;.
\enqs 
%

\vspace{2mm}

\noindent \textit{Step 2.}
From the definition \reff{defThetastar} of $\theta^*_{m,\eps,\ell}$ we get
\beqs
\E\Big[ \big|(NN^{\theta^*_{m,\eps,\ell}}-\varphi_\ell)(\xi)\big|^2 \;\big|\;\xi\in B_\eps\Big] & \leq & \E\Big[ \big|(NN^{\theta'_{m,\eps}}-\varphi_\ell)(\xi)\big|^2\;\big|\;\xi\in B_\eps \Big] \;.
\enqs 
By sending $\ell$ and $m$ to $\infty$, we get from Step 1
\beq\nonumber
\limsup_{m\rightarrow+\infty}\limsup_{\ell\rightarrow+\infty}\E\Big[ \big|\big(NN^{\theta^*_{m,\eps,\ell}}-\varphi_\ell\big)(\xi)\big|^2 \;\big|\;\xi\in B_\eps\Big] & \leq & \\ \label{estim-ineq-}
 \Big(\E\Big[ \big|\big(F_\Cc[\varphi]-\varphi\big)(\xi)\big|^2\;\big|\;\xi\in B_\eps \Big]^{1\over2} +\sqrt{\eps^2\over2}\Big)^{2}\;.
\enq
Hence, we have 
\beq
\limsup_{\eps\rightarrow0}\limsup_{m\rightarrow+\infty}\limsup_{\ell\rightarrow+\infty}\E\Big[ \big|\big(NN^{\theta^*_{m,\eps,\ell}}-\varphi_\ell\big)(\xi)\big|^2 \;\big|\;\xi\in B_\eps\Big] & \leq & 
\label{estim-ineq-bis}
\E\Big[ \big|\big(F_\Cc[\varphi]-\varphi\big)(\xi)\big|^2
\Big]
\,.\qquad\quad  
\enq
We now define the local facelift operator $F^\eps_{\Cc_\eps}$ by
\beqs
F^\eps_{\Cc_\eps}[\phi](x) & = & \sup_{y\in\R^d\;:\;x+y\in B_\eps} \{\phi(x+y)-\delta_{\Cc_\eps}(y)\}
\enqs
for a locally bounded function $\phi$ and $x\in B_\eps$. We observe that
\beq\label{ineq-facelift-eps}
NN^{\theta^*_{m,\eps,\ell}} & \geq & F^\eps_{\Cc_\eps}[\varphi_\ell]-\eps ~~\mbox{ on }~ B_\eps\;.
\enq
Indeed, from Taylor's formula and since $D NN^{\theta^*_{m,\eps,\ell}}\in \Cc_\eps$ on $B_\eps$ we first have
\beqs
NN^{\theta^*_{m,\eps,\ell}} (x)-\big(NN^{\theta^*_{m,\eps,\ell}} (x+y)-\delta_{\Cc_{\eps}}(y)\big) & = & 
\int_0^1\big( \delta_{\Cc_{\eps}}(y)-D NN^{\theta^*_{m,\eps,\ell}} (x+sy).y\big)ds ~~ \geq ~~ 0
\enqs
for all $x\in B_\eps$ and $y\in\R^d$ such that $x+y\in B_\eps$. Therefore 
\beqs
NN^{\theta^*_{m,\eps,\ell}} & = & F_{\Cc_\eps}^\eps[NN^{\theta^*_{m,\eps,\ell}}]\quad \mbox{  on } ~B_\eps \;.
\enqs 
Since $(NN^{\theta^*_{m,\eps,\ell}}-\varphi_\ell)(\xi)\geq -\eps$ on $B_\eps$ we have
\beq
NN^{\theta^*_{m,\eps,\ell}}  & = &F^\eps_{\Cc_\eps}[ NN^{\theta^*_{m,\eps,\ell}}]
 ~~ \geq ~~ F^\eps_{\Cc_\eps}[\varphi_\ell]-\eps \qquad \mbox{ on }~ B_\eps\;.\label{liminffirst}
\enq
From the uniform convergence of $\varphi_\ell$ to $\varphi$ on compact sets, we have
\beq\label{lim-FL-loc-loc}
(F^\eps_{\Cc_\eps}[\varphi_\ell]-F_{\Cc}[\varphi])(\xi)\mathds{1}_{B_\eps}(\xi) & \xrightarrow[\eps\rightarrow0,\;\ell\rightarrow+\infty]{\P-a.s.} & 0\;.
\enq
Therefore, we get
\beqs
\liminf_{\eps\rightarrow0}\liminf_{m\rightarrow+\infty}\liminf_{\ell\rightarrow+\infty}\big(NN^{\theta^*_{m,\eps,\ell}}-\varphi\big)(\xi) \mathds{1}_{B_\eps}(\xi)& \geq & \big(F_{\Cc}[\varphi]-\varphi\big)(\xi)~~\geq~~0\;,
\enqs
and
\beq\label{estim-ineq-ter}
\liminf_{\eps\rightarrow0}\liminf_{m\rightarrow+\infty}\liminf_{\ell\rightarrow+\infty}\big(NN^{\theta^*_{m,\eps,\ell}}-\varphi\big)^2(\xi)\mathds{1}_{B_\eps}(\xi) & \geq & \big(F_{\Cc}[\varphi]-\varphi\big)^2(\xi)\;.
\enq
From \reff{estim-ineq-bis}, \reff{estim-ineq-ter} and Lemma \ref{lem-convVA} we get
\beqs
\big(NN^{\theta^*_{m,\eps,\ell}}-\varphi\big)^2(\xi)\mathds{1}_{B_\eps}(\xi) & \xrightarrow[\eps\rightarrow 0,m\rightarrow+\infty,\ell\rightarrow+\infty]{\P-p.s.} & \big(F_{\Cc}[\varphi]-\varphi\big)^2(\xi)\;.
\enqs
We deduce from \reff{liminffirst} and \reff{lim-FL-loc-loc} 
\beqs
\big(NN^{\theta^*_{m,\eps,\ell}}-F_{\Cc}[\varphi]\big)(\xi)\mathds{1}_{B_\eps}(\xi) & \xrightarrow[\eps\rightarrow 0,m\rightarrow+\infty,\ell\rightarrow+\infty]{\P-p.s.} & 0\;.
\enqs
We then notice that since $ D NN^{\theta^*_{m,\eps,\ell}}\in \Cc_\eps$ on $B_\eps$ and $\Cc$ is bounded, the family $NN^{\theta^*_{m,\eps,\ell}}$ satisfies a uniform linear growth property for $\eps$ in the neighborhood of $0^+$. Since $F_{\Cc}[\varphi]$ is bounded and $\E[|\xi|^2]<+\infty$, we can apply the dominated convergence Theorem and we get \reff{estim-approx-FL1}
The last result \reff{estim-approx-FL2} is a consequence of \reff{estim-approx-FL1}, the square integrability of $\xi$ and the bound $|F_\Cc[\varphi]|\leq M$.
\end{proof}
\subsection{The approximation scheme}
We fix an initial condition $X_0$ at time $t=0$ for the diffusion and we write $X$ for $X^{0,X_0}$.
We first fix two time grids 
\begin{itemize}
\item a constraint grid $\Rc=\{r_0=0<r_1<\ldots<r_\kappa=T\}$,
\item a family of grids $\pi=\{\pi_k,\;k=0,\ldots,\kappa-1\}$ where $\pi_k$ is a grid of $[r_k,r_{k+1}]$ of the form $\pi_k=\{t_{k,0}=r_k<\ldots<t_{k,n_k}=r_{k+1}\}$\;. We set $|\pi_k|=\max_{i=0,\ldots n_k-1}(t_{k,i+1}-t_{k,i})$.
\end{itemize}
We denote by $X^\pi$ the Euler scheme of $X$ related to the grid $\pi$. It is defined by $X^\pi_0  =  X_0$
and 
\begin{equation*}\left\{\begin{array}{rcl}
X^\pi_{t_{k,i+1}} & = & X^\pi_{t_{k,i}}+b(t_{k,i},X^\pi_{t_{k,i}})\Delta t_{k,i}+\sigma(t_{k,i},X^\pi_{t_{k,i}})\Delta B_{t_{k,i}}\\
X^\pi_{t_{k+1,0}} & = & X^\pi_{t_{k,n_k}}
\end{array}\right.\end{equation*}
with $\Delta t_{k,i} ~ = ~t_{k,i+1}-t_{k,i}$ and $\Delta B_{t_{k,i}} ~=~B_{t_{k,i+1}} - B_{t_{k,i}}$
for $k=0,\ldots, \kappa-1$ and $i=0,\ldots,n_k-1$. We then introduce the function $F:~[0,T]\times\R^d\times \R\times \R^d\times [0,T]\times \R^d\rightarrow\R$ defined by
\beqs
F(t,x,y,z,h,\Delta)  & :=  & y - f(t,x,y,z)h + z.\Delta
\enqs
for $(t,x,y,z,h,\Delta)\in[0,T]\times\R^d\times \R\times \R^d\times [0,T]\times \R^d$.
We fix two multi-parameters $\eps=(\eps_0,\eps_1,\ldots,\eps_\kappa)$ and $m=(m^1_0,m^2_0,m^3_0,m^1_1,m^2_1,m^3_1,\ldots$ $\ldots,m^1_{\kappa-1},m^2_{\kappa-1},m^3_{\kappa-1},m^1_{\kappa})$ and two positive constants $M$ and $L$. We define $\{\mathcal{V}^{\Rc,\pi,\eps,m}_{k,i}\}^{0\leq k\leq \kappa-1}_{ 0\leq i\leq n_k}$ and $( \tilde{ \mathcal{V}}^{\Rc,\pi,\eps,m}_{k,i})^{0\leq k\leq \kappa-1}_{ 0\leq i\leq n_k}$ by the following algorithm.\\
\begin{algorithm2e}[H]
\DontPrintSemicolon 
\SetAlgoLined 
\vspace{1mm}
\vspace{0.5mm}
{\small \beqs
\mathcal{V}^{\Rc,\pi,\eps,m}_{\kappa,0} ~ = ~\tilde {\mathcal{V}}^{\Rc,\pi,\eps,m}_{\kappa,0} & = & NN^{\theta^*_{\kappa,0}}\wedge(-M)\vee M
\enqs
}
where
{\small
\beqs
 \theta^*_{\kappa,0} & \in & \textrm{arg}\min_{ } \E\Big[\Big| 
NN^{\theta}(X^\pi_{T})-g(X^\pi_{T})\Big|^2\;\big|\;X^\pi_{T}\in B_{\eps_\kappa}\Big]\\
 & & ~~ \theta\in \Theta_{m^1_\kappa} \text{ s.t. }  \P\Big(D  NN^\theta(X^\pi_T)\in \Cc_{\eps_\kappa} ; 
NN^{\theta}(X^\pi_T)\geq g(X^\pi_T)-{\eps_\kappa}\;\big|\;X^\pi_T\in B_{\eps_\kappa}\Big)=1\;.
\enqs
}
\For{ $k=\kappa-1,\ldots,0$}
{
\beqs
\mathcal{V}^{\Rc,\pi,\eps,m}_{k,n_{k}} ~ = ~ \mathcal{V}^{\Rc,\pi,\eps,m}_{k+1,0} & \mbox{ and } & \tilde {\mathcal{V}}^{\Rc,\pi,\eps,m}_{k,n_{k}} ~ = ~ \tilde {\mathcal{V}}^{\Rc,\pi,\eps,m}_{k+1,0}\;.
\enqs
  {\small  
\For{$i=n_{k}-1,\ldots,1$}
    {
  
\beqs
\tilde{\mathcal{V}}^{\Rc,\pi,\eps,m}_{k,i}~~=~~
\mathcal{V}^{\Rc,\pi,\eps,m}_{k,i} & = & NN^{\theta^*_{k,i}}
\enqs
where
\beqs
(\theta^*_{k,i},\hat \theta^*_{k,i}) & \in & \textrm{arg}\hspace{-8mm}\min_{ (\theta,\hat\theta)\in \Theta_{m^3_{k}}\times \Theta_{m^3_{k}}^d}
\E\Big[\Big| 
NN^{\theta^*_{{k,i+1}}}(X^\pi_{t_{k,i+1}})\\
 & & \quad\quad -F\big(t_{k,i},X^\pi_{t_{k,i}},NN^\theta(X^\pi_{t_{k,i}}),  NN^{\hat \theta}(X^\pi_{t_{k,i}}),\Delta t_{k,i}, \Delta B_{t_{k,i}} \big) \Big|^2\Big] \;.
\enqs
}
\beqs
\mathcal{V}^{\Rc,\pi,\eps,m}_{k,0} ~ = ~ NN^{\theta^*_{k,0}}\wedge(-M)\vee M & \mbox{ and } & \tilde {\mathcal{V}}^{\Rc,\pi,\eps,m}_{k,0} ~ = ~ NN^{\tilde \theta^*_{k,0}}
\enqs
where
{\footnotesize
\beqs
\theta^*_{k,0} & \in & \textrm{arg}\min_{ } \E\Big[\Big| 
NN^{\theta}(X^\pi_{t_{k,0}})-NN^{\check \theta^*_{{k,0}}}(X^\pi_{t_{k,0}})\wedge(-M)\vee M\Big|^2\;\big|\;X^\pi_{t_{k,0}}\in B_{\eps_{k}}\Big]\\
 & & ~~ \theta\in\Theta_{m^1_{k}} \text{ s.t. }  \P\Big(D  NN^\theta(X^\pi_{t_{k,0}})\in \Cc_{\eps_k} ; \\
  & & ~\qquad\qquad 
NN^{\theta}(X^\pi_{t_{k,0}})\geq NN^{\check \theta^*_{{k,0}}}(X^\pi_{t_{k,0}})\wedge(-M)\vee M-{\eps_k}\;\big|\;X^\pi_{t_{k,0}}\in B_{\eps_{k}}\Big)=1\;,\\
 \check \theta^*_{k,0} & \in & \textrm{arg}\min_{ } \E\Big[\Big| 
NN^{\theta}(X^\pi_{t_{k,0}})\wedge(-M)\vee M-NN^{\tilde \theta^*_{{k,0}}}(X^\pi_{t_{k,0}})\Big|^2\;\big|\;X^\pi_{t_{k,0}}\in B_{\eps_{k}}\Big]\\
& &  \theta=(\lambda_i,\alpha_i)_{1\leq i\leq m^2_k}\in \Theta_{m^2_k} \text{ s.t. }
  |\sum_{i=1}^{m^2_k}\lambda_i\alpha_i|\leq {L+1\over |\rho'(0)|}\\
\tilde\theta^*_{k,0} & \in & \textrm{arg}\hspace{-8mm}\min_{  (\theta,\hat\theta)\in \Theta_{m^3_{k}}\times \Theta_{m^3_{k}}^d}\hspace{-4mm} \E\Big[\Big|
NN^{\theta^*_{{k,1}}}(X^\pi_{t_{k,1}})\\
 & & \quad\quad
 -F\big(t_{k,0},X^\pi_{t_{k,0}},NN^\theta(X^\pi_{t_{k,0}}),  NN^{\hat \theta}(X^\pi_{t_{k,0}}),\Delta t_{k,0}, \Delta B_{t_{k,0}} \big) \Big|^2\Big] \;.
 \enqs
}
}
}
\caption{Global approximation scheme. \label{algoGlobScheme}}
\end{algorithm2e}

\vspace{5mm}


We choose the constants $M$ and $L$ such that the functions $v^\Rc$ are $L$-Lipschitz continuous and bounded by $M$. We recall that such constants exist from Proposition \ref{reg-v-pi}.   

The sequences $\{\mathcal{V}^{\Rc,\pi,\eps,m}_{k,i}(X^\pi_{t_{k,i}})\}^{0\leq k\leq \kappa-1}_{ 0\leq i\leq n_k}$
 and $\{ \tilde{ \mathcal{V}}^{\Rc,\pi,\eps,m}_{k,i}(X^\pi_{t_{k,i}})\}^{0\leq k\leq \kappa-1}_{ 0\leq i\leq n_k-1}$
  play the role of approximations for $\{Y^\Rc_{t_{k,i}}\}^{0\leq k\leq \kappa-1}_{ 0\leq i\leq n_k}$
   and $\{Y^\Rc_{t_{k,i}}\}^{0\leq k\leq \kappa-1}_{ 0\leq i\leq n_k}$
    respectively. We then also define the approximation $\{\bar{\hat{ \Zc}}^{\Rc,\pi}_{k,i}\}^{ 0 \leq k\leq \kappa-1}_{0\leq i\leq n_k-1}$ of the process $Z^\Rc$ by
\beqs
\bar{ \hat{ \Zc}}^{\Rc,\pi}_{k,i} & = &  NN^{\hat\theta_{k,i}}(X^\pi_{t_{k,i}})\;,
\enqs
for $k=0,\ldots,\kappa-1$ and $i=0,\ldots,n_k-1$.

\subsection{Convergence of the approximation scheme}
To study the behavior of the approximation Algorithm \ref{algoGlobScheme}, 
we make the additional standing assumptions on the drift $b$, the diffusion coefficient $\sigma$ and the driver $f$. 

\vspace{2mm}

\noindent \textbf{(H$b,\sigma$)'}  There exists a constant $L_{b,\sigma} > 0$ such that 
\beqs
|b(t,x)-b(t',x')|+|\sigma(t,x)-\sigma(t',x')| & \leq & L_{b,\sigma} \Big(|t -t'|^{1\over 2} +|x -x'|\Big)
\enqs
for all $(t,x)$ and $(t',x')\in[0,T]\times\R^d $.

\vspace{2mm}

\noindent \textbf{(H$f$)'}  There exists a constant $L_f > 0$ such that 
\beqs
|f(t,x,y,z)-f(t',x',y',z')| & \leq & L_f \Big(|t -t'|^{1\over 2} +|x -x'|+|y -y'|+|z -z'|\Big)
\enqs
for all $(t,x,y,z)$ and $(t',x',y',z')\in[0,T]\times\R^d \times\R\times\R^d$.

\vspace{2mm}

We next define the error $\textrm{Err}^{\pi,\Rc}$ related to the grids $\pi$ and $\Rc$
\beqs
\textrm{Err}^{\pi,\Rc}_{\eps,m} & = & \max_{k=0,\ldots,\kappa-1}\max_{i=1,\ldots,n_k}\E\Big[ \big|Y^\Rc_{t_{k,i}}-\mathcal{V}^{\Rc,\pi,\eps,m}_{k,i}(X^\pi_{t_{k,i}})\big|^2 \Big]\\
 & & + \max_{k=0,\ldots,\kappa-1}\max_{i=0,\ldots,n_k-1}\E\Big[ \big|\tilde Y^\Rc_{t_{k,i}}-\tilde{\mathcal{V}}^{\Rc,\pi,\eps,m}_{k,i}(X^\pi_{t_{k,i}})\big|^2 \Big]\\
  &  & +\E\Big[ \sum_{i=0}^{n-1}\int_{t_{k,i}}^{t_{k,i+1}}\big|Z^\Rc_{t}-\bar{ \hat{ \Zc}}^{\Rc,\pi}_{k,i}\big|^2 dt \Big] \;.
\enqs

We then have the following convergence result.
\begin{Theorem}\label{Thm-conv-scheme} We have the following convergence
\beqs
\lim_{n_{0}\rightarrow+\infty}\lim_{m_{0}^3\rightarrow+\infty}\lim_{\eps_1\rightarrow0}\lim_{m^1_{1}\rightarrow+\infty}\lim_{m^2_{1}\rightarrow+\infty}\lim_{n_{1}\rightarrow+\infty}\lim_{m_{1}^3\rightarrow+\infty} \ldots\qquad\qquad\qquad\qquad& & \\\cdots\lim_{\eps_{\kappa-1}\rightarrow0}\lim_{m^1_{\kappa-1}\rightarrow+\infty}\lim_{m^2_{\kappa-1}\rightarrow+\infty}\lim_{n_{\kappa-1}\rightarrow+\infty}\lim_{m^3_{\kappa-1}\rightarrow+\infty}\lim_{\eps_\kappa\rightarrow0}\lim_{m^1_{\kappa}\rightarrow+\infty}\textrm{Err}_{\eps,m}^{\pi,\Rc} & = & 0\;.
\enqs
\end{Theorem}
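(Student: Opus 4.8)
The plan is to run a backward induction over the constraint-grid index $k$, from $k=\kappa$ down to $k=0$, taking the limits in the exact order in which they are nested in the statement. Throughout I would work with the error measured along the Euler scheme,
\beqs
\mathcal A_{k,i} & := & \E\big[\big|v^\Rc(t_{k,i},X^\pi_{t_{k,i}})-\mathcal V^{\Rc,\pi,\eps,m}_{k,i}(X^\pi_{t_{k,i}})\big|^2\big]\;,
\enqs
together with its $\tilde{\mathcal V}$- and $\bar{\hat{\Zc}}^{\Rc,\pi}$-analogues, and show that each of them tends to $0$ along the prescribed limits; the statement then follows from $\E[|Y^\Rc_{t_{k,i}}-\mathcal V_{k,i}(X^\pi_{t_{k,i}})|^2]\le 2\mathcal A_{k,i}+2L^2\E[|X_{t_{k,i}}-X^\pi_{t_{k,i}}|^2]$, the last term being absorbed once all the mesh limits $n_0,\dots,n_{\kappa-1}\to+\infty$ have been taken, by strong $L^2$-convergence of the Euler scheme under \textbf{(H$b,\sigma$)'}. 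The preliminary ingredients I would assemble first are: the uniform space-Lipschitz, $\tfrac12$-H\"older-in-time regularity and uniform bound by $M$ of $v^\Rc,\tilde v^\Rc$ (Proposition \ref{reg-v-pi}); the $L^2$-path-regularity of $Z^\Rc$ on each $[r_k,r_{k+1}]$ (It\^o's formula applied to $v^\Rc(\cdot,X_\cdot)$ together with that regularity); the fact that, since $\sigma$ is invertible and bounded, the law of $X^\pi_{t_{k,i}}$ has full support in $\R^d$ and a finite second moment for every $(k,i)\ne(0,0)$ (each Euler increment being a non-degenerate Gaussian conditionally on the past); Theorem \ref{ThmHornikdiff} and Proposition \ref{prop-comp-facelift}; and finally the observation that every network fed as terminal data into a regression block, $\mathcal V_{k+1,0}=NN^{\theta^*_{k+1,0}}\wedge(-M)\vee M$, is bounded by $M$ and Lipschitz with a constant at most $1+\sup_{y\in\Cc}|y|$, because $DNN^{\theta^*_{k+1,0}}\in\Cc_{\eps_{k+1}}$. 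The base case $k=\kappa$ is a direct instance of Proposition \ref{prop-comp-facelift} with $\varphi=\varphi_\ell=g$ and $\xi=X^\pi_T$: it gives $\mathcal A_{\kappa,0}\to0$ as $m^1_\kappa\to+\infty$ then $\eps_\kappa\to0$, since $v^\Rc(T,\cdot)=\tilde v^\Rc(T,\cdot)=F_\Cc[g]$.

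For the induction step at a generic block $k$, assume $\mathcal A_{k,n_k}=\mathcal A_{k+1,0}\to0$ has already been secured. On the open interval $(r_k,r_{k+1})$ the algorithm (Algorithm \ref{algoGlobScheme}) is precisely the deep backward scheme of \cite{HPW19} with bounded, uniformly Lipschitz terminal data $\mathcal V_{k,n_k}$. Combining the one-step Lipschitz-stability estimate for that scheme with Young and BDG inequalities and a discrete Gronwall argument over $i=n_k-1,\dots,1$, one obtains
\beqs
\mathcal A_{k,i} & \le & e^{C(r_{k+1}-r_k)}\Big(\mathcal A_{k,n_k}+\sum_{j}\varepsilon^{\mathrm{reg}}_{k,j}(m^3_k)+\varepsilon^{\mathrm{disc}}_{k}(n_k)\Big)\;,\qquad i=1,\dots,n_k\;,
\enqs
where $\varepsilon^{\mathrm{reg}}_{k,j}(m^3_k)$ denotes the minimal $L^2$-regression error at step $j$ --- it vanishes as $m^3_k\to+\infty$ because the true one-step conditional-expectation functions (for both the $Y$ and the $Z$ parts) are Lipschitz, with constants controlled along the backward recursion by the uniform Lipschitz bound on the terminal data, hence approachable by Theorem \ref{ThmHornikdiff} --- and $\varepsilon^{\mathrm{disc}}_{k}(n_k)$ is the time-discretization error on $[r_k,r_{k+1}]$, which vanishes as $n_k\to+\infty$ by the classical discrete-time BSDE estimates built on the $L^2$-path-regularity of $Z^\Rc$; the $\bar{\hat{\Zc}}^{\Rc,\pi}$-error on $[r_k,r_{k+1}]$ is controlled in the same package. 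Taking $m^3_k\to+\infty$ then $n_k\to+\infty$ yields $\mathcal A_{k,i}\to0$ for all $i\ge1$ and, in particular, $\E[|\tilde v^\Rc(r_k,X^\pi_{r_k})-NN^{\tilde\theta^*_{k,0}}(X^\pi_{r_k})|^2]\to0$.

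It remains to handle the facelift at $r_k$. Using $v^\Rc(r_k,\cdot)=F_\Cc[\tilde v^\Rc(r_k,\cdot)]$, I would apply Proposition \ref{prop-comp-facelift} with $\xi=X^\pi_{r_k}$, $\varphi=\tilde v^\Rc(r_k,\cdot)$ and $\varphi_\ell=NN^{\check\theta^*_{k,0}}\wedge(-M)\vee M$ indexed by $m^2_k$. Two facts must be established: that $(\varphi_\ell)_\ell$ is a sequence of bounded, uniformly Lipschitz functions --- this is the role of the parameter constraint $|\sum_i\lambda_i\alpha_i|\le(L+1)/|\rho'(0)|$ in the definition of $\check\theta^*_{k,0}$ --- and that $\varphi_\ell\to\varphi$ locally uniformly. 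The latter is obtained by first getting $L^2(\mathcal L(X^\pi_{r_k}))$-convergence: comparing $NN^{\check\theta^*_{k,0}}$ to $NN^{\tilde\theta^*_{k,0}}$ through the $\check\theta$-minimization (a bounded $L$-Lipschitz near-optimal competitor is available by Theorem \ref{ThmHornikdiff} and is admissible since the constraint is loose enough), and then comparing to $\tilde v^\Rc(r_k,\cdot)$ through the conclusion of the regression phase; one then upgrades to locally uniform convergence by equicontinuity of the family together with the full support of $\mathcal L(X^\pi_{r_k})$. Proposition \ref{prop-comp-facelift} then gives $\mathcal A_{k,0}=\E[|\mathcal V_{k,0}(X^\pi_{r_k})-v^\Rc(r_k,X^\pi_{r_k})|^2]\to0$ as $m^1_k,m^2_k\to+\infty$ and $\eps_k\to0$; since $\mathcal V_{k,0}=\mathcal V_{k-1,n_{k-1}}$, this is precisely the terminal-data input for block $k-1$, which closes the induction. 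For $k=0$ only the regression phase is needed, because $X^\pi_0=X_0$ is deterministic, and it gives $\tilde{\mathcal V}_{0,0}(X_0)\to\tilde Y^\Rc_0$.

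The hard part is the facelift phase: Proposition \ref{prop-comp-facelift} demands \emph{locally uniform} convergence of the data $\varphi_\ell$, whereas the neural-network regressions of the preceding block produce only $L^2(\mathcal L(X^\pi_{r_k}))$-type convergence, so one must pass from one to the other through the auxiliary regularized network $NN^{\check\theta^*_{k,0}}$ --- proving that it is equicontinuous (the purpose of the constraint on $\sum_i\lambda_i\alpha_i$) while remaining $L^2$-close to $\tilde v^\Rc(r_k,\cdot)$. A secondary difficulty, implicit in the estimate above, is that the terminal condition handed to each regression block is not a fixed function but a varying network, so the one-step stability estimates for the scheme of \cite{HPW19} must be uniform in the number of neurons and robust to perturbations of the terminal data; the uniform boundedness by $M$ and the uniform Lipschitz control coming from $DNN^{\theta^*_{k+1,0}}\in\Cc_{\eps_{k+1}}$ together with the $M$-truncation are exactly what make the composition of the limits go through.
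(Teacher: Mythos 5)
Your proposal is correct and follows essentially the same route as the paper: a backward induction over the constraint blocks, invoking the convergence of the deep backward scheme of \cite{HPW19} for each regression phase, Lemma \ref{lem-NN-bdd} to produce the auxiliary weight-constrained (hence uniformly Lipschitz) network, Lemma \ref{lem-convL2cvLUC} to upgrade $L^2$-convergence to locally uniform convergence, and Proposition \ref{prop-comp-facelift} for each facelift phase, with the Euler-scheme error absorbed separately. The only cosmetic difference is that you measure the scheme error against $v^{\Rc}(t_{k,i},X^\pi_{t_{k,i}})$ while the paper compares against the restarted BSDE $Y^{\Rc,t_{k,0},X^\pi_{t_{k,0}}}$, which is equivalent via the Markov representation; your write-up is in fact more explicit than the paper's about why the hypotheses of Proposition \ref{prop-comp-facelift} are met at each step.
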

To prove Theorem \ref{Thm-conv-scheme}, we need the two following lemmata.
\begin{Lemma} \label{lem-convL2cvLUC}
Let $\varphi$ and $\varphi_\ell$, $\ell\geq1$, be   functions from $\R^d$ to $\R$. Suppose there exists constants $L$ and $M$ such that $\varphi$ and $\varphi_\ell$, $\ell\geq1$, are $L$-Lipschitz continuous and bounded by $M$. Let $\xi$ be a random variable such that $\textrm{Supp}(\P_\xi)=\R^d$ and suppose  also that 
\beq\label{cond-conv-phiell-phi}
\E\big[\big|\varphi_\ell(\xi)-\varphi(\xi)|^2\big] & \xrightarrow[\ell\rightarrow+\infty]{} & 0 \;.
\enq
Then $\varphi_\ell$ converges uniformly to $\varphi$ on compact subsets of $\R^d$.
\end{Lemma}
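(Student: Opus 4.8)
The plan is to argue by contradiction, using the equi-Lipschitz property of the family to run an Arzel\`a--Ascoli compactness argument and the full support assumption on $\xi$ to turn a pointwise discrepancy into a contradiction with \reff{cond-conv-phiell-phi}. Suppose the conclusion fails: there exist a compact $Q\subset\R^d$, a real $\delta>0$, a subsequence $(\ell_j)_j$ and points $x_j\in Q$ with $|\varphi_{\ell_j}(x_j)-\varphi(x_j)|\geq \delta$. Since $Q$ is compact, up to a further extraction I may assume $x_j\to x_*\in Q$. The family $\{\varphi_\ell,\ \ell\geq1\}$ is uniformly bounded by $M$ and uniformly $L$-Lipschitz, hence equicontinuous and locally uniformly bounded; by the Arzel\`a--Ascoli theorem together with a diagonal argument along an exhausting sequence of compact sets, I may extract a further subsequence (not relabeled) such that $\varphi_{\ell_j}\to\psi$ uniformly on every compact subset of $\R^d$, for some $L$-Lipschitz function $\psi$ bounded by $M$.

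Next I would identify the limit on the one hand and exhibit the contradiction on the other. Passing to the limit along $j$ in $|\varphi_{\ell_j}(x_j)-\varphi(x_j)|\geq\delta$, using $x_j\to x_*$, the continuity of $\varphi$ and the local uniform convergence $\varphi_{\ell_j}\to\psi$, yields $|\psi(x_*)-\varphi(x_*)|\geq\delta$. Since $\psi-\varphi$ is continuous, there are $r>0$ and $\delta'\in(0,\delta]$ with $|\psi-\varphi|\geq\delta'$ on the ball $B(x_*,r)$. Because $\textrm{Supp}(\P_\xi)=\R^d$, we have $\P(\xi\in B(x_*,r))>0$. By the dominated convergence theorem (the integrands are bounded by $(2M)^2$) and the uniform convergence $\varphi_{\ell_j}\to\psi$ on $B(x_*,r)$,
\beqs
\lim_{j\rightarrow+\infty}\E\Big[\big|\varphi_{\ell_j}(\xi)-\varphi(\xi)\big|^2\1_{B(x_*,r)}(\xi)\Big] & = & \E\Big[\big|\psi(\xi)-\varphi(\xi)\big|^2\1_{B(x_*,r)}(\xi)\Big]~~\geq~~(\delta')^2\,\P\big(\xi\in B(x_*,r)\big)~~>~~0\;,
\enqs
which contradicts \reff{cond-conv-phiell-phi}, since the left-hand side is dominated by $\E[|\varphi_{\ell_j}(\xi)-\varphi(\xi)|^2]\to0$. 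Hence the assumed failure is impossible and $\varphi_\ell\to\varphi$ uniformly on compact subsets of $\R^d$.

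The only slightly delicate point is the bookkeeping of the successive subsequence extractions and the diagonal argument upgrading Arzel\`a--Ascoli from a fixed compact to all of $\R^d$; everything else is routine. An alternative that avoids Arzel\`a--Ascoli: from the $L^2$-convergence extract a subsequence along which $\varphi_{\ell_j}(\xi)\to\varphi(\xi)$ $\P$-a.s., note that the corresponding full-measure set is dense (as $\xi$ has full support), and use the equi-Lipschitz property to promote dense pointwise convergence to locally uniform convergence; a standard ``every subsequence has a further subsequence converging to the same limit'' argument then gives the claim for the whole sequence. I would present whichever version is shorter.
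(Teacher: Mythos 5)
Your proof is correct and follows essentially the same route as the paper: Arzel\`a--Ascoli compactness of the equi-Lipschitz, uniformly bounded family, followed by identification of any subsequential limit with $\varphi$ using the $L^2$-convergence and the full-support assumption on $\P_\xi$. You merely package it as a proof by contradiction (and spell out the dominated-convergence and positive-probability-ball steps the paper leaves implicit), which changes nothing of substance.
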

\begin{proof}
From Ascoli Theorem the sequence $(\varphi_\ell)_{\ell \geq 1}$ is compact for the convergence on compact subsets of $\R^d$. Let $\tilde \varphi$ be an adherence value. Then, up to a subsequence
\beqs
\sup_{K}|\varphi_\ell-\tilde \varphi| & \xrightarrow[\ell\rightarrow+\infty]{} & 0
\enqs
for any compact subset $K$ of $\R^d$. From \reff{cond-conv-phiell-phi}, we deduce that 
$\tilde \varphi(\xi)  =  \varphi(\xi)$ $\P$-a.s.
and since $\textrm{Supp}(\P_\xi)=\R^d$ we get $\tilde \varphi=\varphi$ on $\R^d$. 
\end{proof}
The next results shows that for the approximation of a bounded and Lipschitz continuous function, we can restrict the neural network weights  to a given bound. 
\begin{Lemma}\label{lem-NN-bdd}
Let $\varphi$ and $\varphi_\ell$, $\ell\geq1$, be   functions from $\R^d$ to $\R$ and 
$\xi$ be a random variable satisfying conditions of Lemma \ref{lem-convL2cvLUC}. 
Suppose the activation function $\rho$ is differentiable with $\rho'(0)\neq 0$.
Define the sequence $(\theta^*_{m,\ell})_{m,\ell\geq1}$ by
\beqs
\theta^*_{m,\ell} & \in &  \text{arg}\min_{}\E\big[\big|NN^{\theta}(\xi)\vee(-M)\wedge  M-\varphi_\ell(\xi)|^2\big]\\
 &  & \theta=(\lambda_i,\alpha_i)_{1\leq i\leq m}\in \Theta_m \text{ s.t. }
  |\sum_{i=1}^m\lambda_i\alpha_i|\leq {L+1\over |\rho'(0)|} \;.
\enqs
 Then 
 \beq\label{conv-phiell-phi2}
\lim_{m\rightarrow+\infty}\lim_{\ell\rightarrow+\infty}\E\big[\big|NN^{\theta_{m,\ell}^*}(\xi)\vee(-M)\wedge  M-\varphi(\xi)|^2\big] & = & 0 \;.
\enq
\end{Lemma}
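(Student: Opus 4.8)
The plan is to use the minimality of $\theta^*_{m,\ell}$ against a fixed, explicitly constructed competitor, and to check that this competitor automatically meets the weight bound because it is obtained from a mollification of $\varphi$ together with Theorem \ref{ThmHornikdiff}. Since the truncation $a\mapsto a\vee(-M)\wedge M$ is $1$-Lipschitz and leaves unchanged any point of $[-M,M]$,
$\|NN^{\theta^*_{m,\ell}}\vee(-M)\wedge M-\varphi(\xi)\|_{L^2(\P)}\le\|NN^{\theta^*_{m,\ell}}\vee(-M)\wedge M-\varphi_\ell(\xi)\|_{L^2(\P)}+\|\varphi_\ell(\xi)-\varphi(\xi)\|_{L^2(\P)}$,
and the last term vanishes as $\ell\to+\infty$ by \reff{cond-conv-phiell-phi}. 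As $\theta^*_{m,\ell}$ minimises $\E[|NN^{\theta}\vee(-M)\wedge M-\varphi_\ell(\xi)|^2]$ over the admissible set, it is thus enough to show that for every $\eps>0$ there is an integer $m_\eps$ such that for each $m\ge m_\eps$ one can find $\theta'\in\Theta_m$ with $|\sum_i\lambda'_i\alpha'_i|\le(L+1)/|\rho'(0)|$ and $\E[|NN^{\theta'}\vee(-M)\wedge M-\varphi(\xi)|^2]\le\eps$; plugging this in and letting $\ell\to+\infty$, then $m\to+\infty$, then $\eps\downarrow0$ yields \reff{conv-phiell-phi2}.

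To build $\theta'$, I would first mollify $\varphi$ into $\varphi^\delta\in C^1_b(\R^d,\R)$, which is still bounded by $M$ and $L$-Lipschitz (so $\|D\varphi^\delta\|_\infty\le L$) and converges to $\varphi$ uniformly on $\R^d$ as $\delta\downarrow0$. Given $\eps>0$, fix $R>0$ so large that $4M^2\P(|\xi|>R)\le\eps/2$ — possible for any $\xi$ since $\{|\xi|>R\}\downarrow\emptyset$, no integrability being needed — and set $Q:=\bar B(0,R)$, which contains $0$ in its interior; then choose $\delta$ with $\sup_Q|\varphi^\delta-\varphi|\le\sqrt{\eps}/4$. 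Applying Theorem \ref{ThmHornikdiff} to $\varphi^\delta$ on $Q$ gives $\theta'\in\mathfrak{N N}^\rho$ with $\sup_Q|NN^{\theta'}-\varphi^\delta|+\sup_Q|D NN^{\theta'}-D\varphi^\delta|\le\eta$, where $\eta:=\min\{1,\sqrt{\eps}/4\}$; taking $m_\eps$ to be the number of neurons of $\theta'$ and padding with $m-m_\eps$ neurons of weight $\lambda=0$ realises $\theta'$ in $\Theta_m$ for every $m\ge m_\eps$ without changing $NN^{\theta'}$ or $\sum_i\lambda'_i\alpha'_i$. Since $\rho\in C^1$, $D NN^{\theta'}(x)=\sum_i\lambda'_i\rho'(\alpha'_i.x)\alpha'_i$, so $D NN^{\theta'}(0)=\rho'(0)\sum_i\lambda'_i\alpha'_i$; combined with $0\in Q$ and $|D\varphi^\delta(0)|\le L$ this gives $|\sum_i\lambda'_i\alpha'_i|=|D NN^{\theta'}(0)|/|\rho'(0)|\le(L+\eta)/|\rho'(0)|\le(L+1)/|\rho'(0)|$, so $\theta'$ is admissible.

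It remains to estimate the error. On $\{\xi\in Q\}$, using $|\varphi^\delta|\le M$ and that the truncation is $1$-Lipschitz, $|NN^{\theta'}(\xi)\vee(-M)\wedge M-\varphi(\xi)|\le|NN^{\theta'}(\xi)-\varphi^\delta(\xi)|+|\varphi^\delta(\xi)-\varphi(\xi)|\le\eta+\sqrt{\eps}/4\le\sqrt{\eps}/2$; on $\{\xi\notin Q\}$ the integrand is bounded by $(2M)^2$. Hence $\E[|NN^{\theta'}\vee(-M)\wedge M-\varphi(\xi)|^2]\le\eps/4+4M^2\P(|\xi|>R)\le\eps$, which completes the reduction and proves \reff{conv-phiell-phi2}.

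The step I expect to be the crux is the compatibility of the hard constraint $|\sum_i\lambda_i\alpha_i|\le(L+1)/|\rho'(0)|$ with accurate approximation: the identity $|D NN^\theta(0)|=|\rho'(0)|\,|\sum_i\lambda_i\alpha_i|$ is exactly what transfers the Lipschitz bound $L$ on $\varphi$ — hence on $\varphi^\delta$, and then, thanks to the joint control of the derivative in Theorem \ref{ThmHornikdiff}, approximately on the network — into admissibility of the competitor, the slack $+1$ absorbing the approximation and mollification errors; everything else is routine truncation and triangle-inequality bookkeeping.
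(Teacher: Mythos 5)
Your proof is correct and follows essentially the same route as the paper's: mollify $\varphi$, invoke Theorem \ref{ThmHornikdiff} to get a competitor network approximating the mollification together with its derivative, use the identity $DNN^{\theta}(0)=\rho'(0)\sum_i\lambda_i\alpha_i$ to check the weight constraint (with the $+1$ absorbing the approximation error), and conclude by minimality and the triangle inequality. Your write-up is in fact more careful than the paper's, which leaves implicit both the truncation/tail argument needed to pass from local uniform convergence to $L^2(\P_\xi)$ convergence and the justification of the weight bound.
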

\begin{proof} Using a mollification argument, we can assume w.l.o.g. that $\varphi\in C^1(\R^d,\R)$. From Theorem \ref{ThmHornikdiff}, we can find a sequence $(\theta_m)_{m\geq1}$ such that $\theta_m\in \Theta_m$ for $m\geq1$ and $(NN^{\theta_m},DNN^{\theta_m})_{m\geq1}$ converges uniformly to $(\varphi, D\varphi)$ on compact sets. We therefore get for $m$ large enough
\beqs
 |\sum_{i=1}^m\lambda^m_i\alpha_i^m| & \leq  &{L+1\over |\rho'(0)|}
\enqs
where $\theta_m = (\lambda^m_i, \alpha_i^m)_{1\leq i\leq m}$.
From the definition of $\theta^*_{m,\ell}$ we have
\beqs
\E\big[\big|NN^{\theta^*_{m,\ell}}(\xi)\vee(-M)\wedge  M-\varphi(\xi)|^2\big]  & \leq & 2\E\big[\big|NN^{\theta^*_{m,\ell}}(\xi)\vee(-M)\wedge  M-\varphi_\ell(\xi)|^2\big]\\
  & & +2\E\big[\big|\varphi_\ell(\xi)-\varphi(\xi)|^2\big] \\
 & \leq & 2\E\big[\big|NN^{\theta_{m}}(\xi)\vee(-M)\wedge  M-\varphi_\ell(\xi)|^2\big]\\
  & & +2\E\big[\big|\varphi_\ell(\xi)-\varphi(\xi)|^2\big] 
\enqs
which converges to zero as $\ell$ and $m$ goes to $\infty$.  
\end{proof}
\begin{Remark}
 If we suppose the derivative of the activation function $\rho$ is bounded  by a constant $C$ then, the condition $|\sum_{i=1}^m\lambda_i\alpha_i|\leq {L+1\over |\rho'(0)|}$ restricts to neural networks that are $C{L+1\over |\rho'(0)|}$ Lipschitz continuous.
\end{Remark}
\begin{proof}[Proof of Theorem \ref{Thm-conv-scheme}.]
We recall that for $(t,x)\in[0,T]\times\R^d$, $(Y^{\Rc,t,x}, Z^{\Rc,t,x})$ is defined by \reff{EDSRDC1}-\reff{EDSRDC2}-\reff{EDSRDC3}. 
%
From Proposition \ref{prop-reg-v} and classical estimates on Euler scheme we have
\beq\nonumber
 \max_{k=0,\ldots,\kappa-1}\max_{i=1,\ldots,n_k}\E\Big[ \big|Y^{\Rc}_{t_{k,i}}-Y^{\Rc,t_{k,0},X^\pi_{t_{k,0}}}_{t_{k,i}}\big|^2 \Big] & & \\
 + \max_{k=0,\ldots,\kappa-1}\max_{i=0,\ldots,n_k-1}\E\Big[ \big|\tilde Y^{\Rc}_{t_{k,i}}-\tilde Y^{\Rc,t_{k,0},X^\pi_{t_{k,0}}}_{t_{k,i}}\big|^2 \Big]   & & \label{proc-intermed}\\
+ \sum_{k=0}^{\kappa-1}\E\Big[ \sum_{i=0}^{n_k-1}\int_{t_{k,i}}^{t_{k,i+1}}\big|Z^\Rc_{t}-{{\Zc}}^{\Rc,t_{k,0},X^\pi_{t_{k,0}}}_{t_{k,i}}\big|^2 dt \Big]  & \longrightarrow & 0 \;,\nonumber
\enq
as $\max_{0\leq k\leq \kappa-1}|\pi_k|\rightarrow0$.
From Proposition \ref{prop-comp-facelift} we have
\beqs
\lim_{\eps_\kappa\rightarrow0}\lim_{m^1_{\kappa}\rightarrow+\infty}\E\Big[ |(\mathcal{V}^{\Rc,\pi,\eps,m}_{\kappa-1,n_{\kappa-1}}-F_\Cc[g])\mathds{1}_{B_{\eps_\kappa}}|^2(X^\pi_{\kappa-1,n_{\kappa-1}})\Big] & = & 0\;.
\enqs
Since $F_\Cc[g]$ is Lipschitz continuous, we get from  Theorem 4.1 in \cite{HPW19} and Corollary 2.2  in \cite{HSW89}
\beqs
\lim_{n_{\kappa-1}\rightarrow+\infty}\lim_{m^3_{\kappa-1}\rightarrow+\infty}\lim_{\eps_\kappa\rightarrow0}\lim_{m^1_{\kappa}\rightarrow+\infty} \max_{i=1,\ldots,n_{\kappa-1}}\E\Big[ \big|Y^{\Rc,t_{\kappa-1,0},X^\pi_{t_{\kappa-1,0}}}_{t_{\kappa-1,i}}-\mathcal{V}^{\Rc,\pi,\eps,m}_{\kappa-1,i}(X^\pi_{t_{\kappa-1,i}})\big|^2 \Big] & & \\
  + \max_{i=0,\ldots,n_{\kappa-1}-1}\E\Big[ \big|\tilde Y^{\Rc,t_{\kappa-1,0},X^\pi_{t_{\kappa-1,0}}}_{t_{\kappa-1,i}}-\tilde{\mathcal{V}}^{\Rc,\pi,\eps,m}_{\kappa-1,i}(X^\pi_{t_{k,i}})\big|^2 \Big]  &  & \\
+\E\Big[ \sum_{i=0}^{n_{\kappa-1}-1}\int_{t_{\kappa-1,i}}^{t_{\kappa-1,i+1}}\big|Z^{\Rc,t_{\kappa-1,0},X^\pi_{t_{\kappa-1,0}}}_{t}-\bar{ \hat{ \Zc}}^{\Rc,\pi}_{{\kappa-1},i}\big|^2 dt \Big] & = & 0\;.
\enqs 
From Proposition \ref{prop-comp-facelift}, Lemmata \ref{lem-convL2cvLUC} and \ref{lem-NN-bdd} and the previous convergence, we get
\beqs
\lim_{\eps_{\kappa-1}\rightarrow0}\lim_{m^1_{\kappa-1}\rightarrow+\infty}\lim_{m^2_{\kappa-1}\rightarrow+\infty}\lim_{n_{\kappa-1}\rightarrow+\infty} \qquad\qquad\qquad\qquad\qquad\qquad& & \\\lim_{m^3_{\kappa-1}\rightarrow+\infty}\lim_{\eps_\kappa\rightarrow0}\lim_{m^1_{\kappa}\rightarrow+\infty}\E\Big[ \big|Y^{\Rc,t_{\kappa-1,0},X^\pi_{t_{\kappa-1,0}}}_{t_{\kappa-1,0}}-\mathcal{V}^{\Rc,\pi,\eps,m}_{\kappa-1,0}(X^\pi_{t_{\kappa-1,0}})\big|^2 \Big] & = & 0\;.
\enqs
Repeating this argument for each $k=\kappa-2,\ldots,0$, and using \reff{proc-intermed}, we get the result.
\end{proof}
We end this section by a convergence result for the constrained solution.
Take $(\Rc^\ell)_{\ell\geq 1}$ a nondecreasing sequence  such that
\beqs
|\mathcal R^\ell|~~:=~~\max_{1\leq k\leq \kappa_\ell}(r^\ell_{k}-r^\ell_{k-1}) & \xrightarrow[n\rightarrow+\infty]{} & 0\;,
\enqs and  define
\beqs
\widehat{\textrm{Err}}^{\pi,\ell}_{\eps,m} & = & \max_{i=0,\ldots,n-1}\E\Big[ \big|Y_{t_i}-\mathcal{V}^{\Rc^\ell,\pi}_i(X^\pi_{t_i})\big|^2 \Big]+\E\Big[ \sum_{i=0}^{n-1}\int_{t_i}^{t_{i+1}}\big|Z_{t}-\bar{ \hat{ \Zc}}^{\Rc^\ell,\pi}_i\big|^2 dt \Big] \;.
\enqs
From  Corollary \ref{cor-cv-disc-const} and Theorem \ref{Thm-conv-scheme} we obtain the following result.
\begin{Corollary} We have the following convergence
\beqs
\lim_{\ell\rightarrow+\infty}\lim_{n_{0}\rightarrow+\infty}\lim_{m_{0}^3\rightarrow+\infty}\lim_{\eps_1\rightarrow0}\lim_{m^1_{1}\rightarrow+\infty}\lim_{m^2_{1}\rightarrow+\infty}\lim_{n_{1}\rightarrow+\infty}\lim_{m_{1}^3\rightarrow+\infty} \ldots\qquad\qquad\qquad\qquad& & \\\cdots\lim_{\eps_{\kappa-1}\rightarrow0}\lim_{m^1_{\kappa-1}\rightarrow+\infty}\lim_{m^2_{\kappa-1}\rightarrow+\infty}\lim_{n_{\kappa-1}\rightarrow+\infty}\lim_{m^3_{\kappa-1}\rightarrow+\infty}\lim_{\eps_\kappa\rightarrow0}\lim_{m^1_{\kappa}\rightarrow+\infty}\widehat{\textrm{Err}}^{\pi,\ell}_{\eps,m} & = & 0\;.
\enqs
\end{Corollary}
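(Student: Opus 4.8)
The plan is to obtain the stated convergence by combining, via a triangle inequality, the two results already established: Theorem \ref{Thm-conv-scheme}, which controls the distance between the scheme and the discretely constrained solution for a \emph{fixed} grid $\Rc^\ell$ as the inner parameters $(\eps,m,n_\bullet)$ tend to their limits, and Corollary \ref{cor-cv-disc-const}, which controls the distance between the discretely constrained solution and the continuously constrained solution $(Y,Z)=(Y^{0,X_0},Z^{0,X_0})$ as $\ell\to+\infty$. The crucial structural point is that in the iterated limit of the statement the limit over $\ell$ is the outermost one, so the two estimates can be applied in succession.

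First I would insert the discretely constrained solution $(Y^{\Rc^\ell,0,X_0},\tilde Y^{\Rc^\ell,0,X_0},Z^{\Rc^\ell,0,X_0})$ as an intermediate term in each summand defining $\widehat{\textrm{Err}}^{\pi,\ell}_{\eps,m}$, apply $|a-c|^2\le 2|a-b|^2+2|b-c|^2$, and bound the maxima over the grid points $t_i$ of $\pi$ by suprema over $[0,T)$. Since the output of the scheme at any grid point is an approximation, along the Euler path $X^\pi$, of $v^{\Rc^\ell}$ or of $\tilde v^{\Rc^\ell}$, this yields
\beqs
\widehat{\textrm{Err}}^{\pi,\ell}_{\eps,m} & \leq & 2\,\textrm{Err}^{\pi,\Rc^\ell}_{\eps,m}+2D_\ell\;,
\enqs
where
\beqs
D_\ell & := & \E\Big[\sup_{[0,T)}\big|Y^{0,X_0}-Y^{\Rc^\ell,0,X_0}\big|^2\Big]+\E\Big[\sup_{[0,T)}\big|Y^{0,X_0}-\tilde Y^{\Rc^\ell,0,X_0}\big|^2\Big]+\E\Big[\int_0^T\big|Z^{0,X_0}_t-Z^{\Rc^\ell,0,X_0}_t\big|^2dt\Big]\;.
\enqs
The important observation is that $D_\ell$ depends on $\Rc^\ell$ only: it involves neither the Euler grid $\pi$ nor the parameters $\eps$ and $m$, so it is left untouched by all the inner limits.

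It then remains to take the limits in order. For fixed $\ell$, passing to the iterated limit over $(\eps,m,n_\bullet)$ in the inequality above and applying Theorem \ref{Thm-conv-scheme} with $\Rc=\Rc^\ell$, the first term tends to $0$ while the second stays equal to $2D_\ell$; hence the iterated limit of $\widehat{\textrm{Err}}^{\pi,\ell}_{\eps,m}$ over these parameters is at most $2D_\ell$. Letting finally $\ell\to+\infty$ and invoking Corollary \ref{cor-cv-disc-const} at $(t,x)=(0,X_0)$ gives $D_\ell\to0$, which proves the claim. This is essentially bookkeeping of nested limits, and I do not expect a genuine obstacle; the only point deserving attention is that the bound on $D_\ell$ must not depend on the time discretization used by the algorithm — which is exactly what the $\mathbf{S}^2$/$\mathbf{H}^2$ (supremum-over-$[0,T)$) form of Corollary \ref{cor-cv-disc-const} provides.
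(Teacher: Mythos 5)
Your proposal is correct and follows exactly the route the paper intends: the paper gives no written proof of this corollary, simply stating that it follows from Corollary \ref{cor-cv-disc-const} and Theorem \ref{Thm-conv-scheme}, and your triangle-inequality decomposition with the discretely constrained solution as intermediate term — noting that the resulting remainder $D_\ell$ is independent of $\pi$, $\eps$ and $m$ so it survives the inner limits unchanged — is precisely the bookkeeping that combination requires.
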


\section{Numerical results}\label{sec5}
\subsection{Neural network approximation}
In the sequel we first show that we can approximate the facelift easily with 
neural networks. In a second part we test the global algorithm evaluating the BSDE with constraints.
\subsubsection{Testing the facelift approximation of a function $\varphi$}
Testing many penalizing function, it turns out that the use of simple Relu function is the best way to simply penalize the constraints introducing a second small parameters $\epsilon_1$. This function prevents the problem of vanishing gradient that may appear using some regularization of some heaviside function for example. \\
We propose to  use a $L_1$ norm  on the distance to the target and the penalty terms giving coefficients of the neural network satisfying
\beq
\label{eq:obj1}
\theta_{m,\eps}^* & \in & \textrm{arg}\min_{\theta\in \Theta_m}\E\Big[  |NN^\theta-\varphi|(\xi) +   \min_{x \in \Cc} \frac{|| D NN ^\theta(\xi)- x||_1}{\epsilon_1} +  \nonumber\\
& &    \frac{ \big( (\varphi -NN^\theta)(\xi) \big)^{+}}{\epsilon_1} \Big]
\enq
where  $\xi$ is an uniform r.v. in $B_\eps$.\\
\begin{Remark}
The  use of a $L_2$ norm for the distance to the true function or/and the different constraints does not give results as good as with the objective function above.
\end{Remark}
Using a neutral network, we have no certainty to get the facelift of a function $\varphi$. The problem is not convex
and we face a dilemma:
\begin{itemize}
    \item either we use a rather high penality coefficient  $\epsilon_2$ and may not satisfy the constraints,
    \item either we set a very small $\epsilon_2$ and the distance between the estimated facelift and the function is  only seen as some noise by the gradient descent.
\end{itemize}
As we want to use a rather small $\epsilon_2$ parameter, we will get
solutions above the real facelift.
We  then propose to use the iterative algorithm  \ref{algo1} that successively approximates the facelift by above.\\
\begin{algorithm2e}[H]
\DontPrintSemicolon 
\SetAlgoLined 
\vspace{1mm}
\KwIn{Function to facelift $\varphi$}

\vspace{0.5mm}
\beqs
\theta_{m,\eps}^{*,0} & \in & \textrm{arg}\min_{\theta\in \Theta_m}\E\Big[  |NN^\theta-\varphi|(\xi) +   \min_{x \in \Cc} \frac{|| D NN ^\theta(\xi)- x||_1}{\epsilon_1} + \\
& &    \frac{ \big( (\varphi -NN^\theta)(\xi) \big)^{+}}{\epsilon_1} \Big]
\enqs

\For{$k=1,...,K$}
{
\beqs
\theta_{m,\eps}^{*,k} & \in & \textrm{arg}\min_{\theta\in \Theta_m}\E\Big[  |NN^\theta-\varphi|(\xi) +   \min_{x \in \Cc} \frac{|| D NN ^\theta(\xi)- x||_1}{\epsilon_1} + \\
& &    \frac{ \big( (\varphi -NN^\theta)(\xi) \big)^{+}}{\epsilon_1}  + \frac{ \big( (NN^{\theta_{m,\eps}^{*,k-1}}-\varphi)(\xi) \big)^{+}}{\epsilon_1}   \Big]
\enqs
}
\KwOut{$\theta_{m,\eps}^{*,K}$}
\vspace{1mm}

\caption{Iterative algorithm for facelift calculation of a function $\varphi$. \label{algo1}}

\end{algorithm2e}

We test three activation functions ReLU, tanh and ELU with the bounded set  $$\Cc = \{  x \in \R / || x|| \le \hat d \},$$ for different values of $\hat d$.\\
ELU is the less effective while ReLU gives results slightly better than tanh. In the sequel ReLU is taken for numerical results. As for the number of hidden layers, one layer appears to be insufficient and 3 does not bring any improvement comparing to two hidden layers.\\
We have to take at least $100$ neurons per layer to get very good  results. In the sequel  we take $200$ neurons.\\
In the numerical results we take mini batch of size $1000$ with the Adam optimizer \cite{kingma2014adam} using a learning rate equal to $0.001$. We stop the algorithm after 100000 iterations and every hundred iterations we do a more accurate estimation of the loss with $10000$ particles keeping the best network obtained during iterations.\\

We test the algorithm on a fixed convex set depending on the test case.
\paragraph{First case}
\label{sec:flCase1}
For the second test case we use the payoff  of a butterfly function
$$ \varphi(x)= (x - 0.8)^{+} -2 (x-1)^{+} + (x-1.2)^{+} .$$
The facelift function is peacewise linear given  for $\hat d  \le 1$ by 
$$\varphi^A_{ \hat d}(x) = (1- \hat d |x-1|)^{+}. $$
\begin{figure}[H]
\begin{minipage}[b]{0.32\linewidth}
  \centering
 \includegraphics[width=\textwidth]{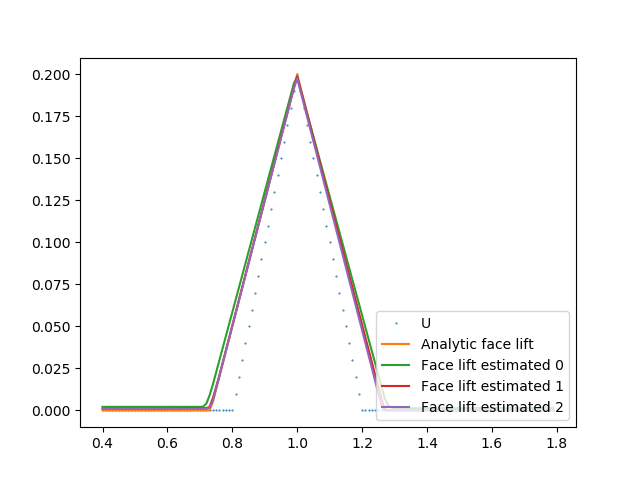}
 \caption*{$\epsilon_2 =\frac{1}{200}$}
 \end{minipage}
 \centering
 \begin{minipage}[b]{0.32\linewidth}
  \centering
 \includegraphics[width=\textwidth]{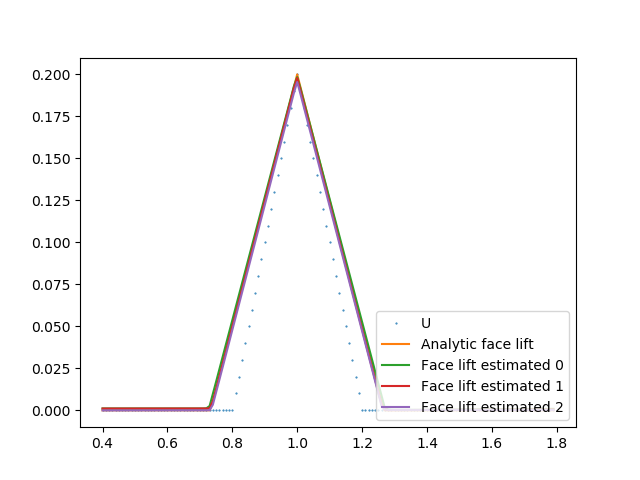}
 \caption*{$\epsilon_2 =\frac{1}{50} $}
 \end{minipage}
 \begin{minipage}[b]{0.32\linewidth}
  \centering
 \includegraphics[width=\textwidth]{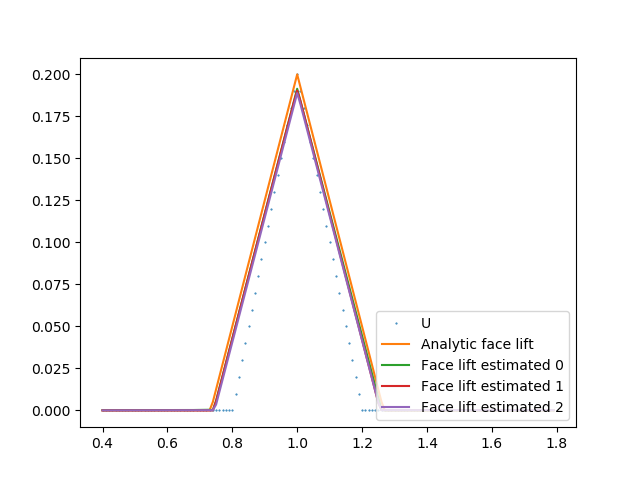}
 \caption*{$\epsilon_2 =\frac{1}{10} $}
 \end{minipage}
 \caption{facelift approximation for different values of $\epsilon_2$ for the first case with $\hat d =0.75$, $K=2$. On each figure, we give the results obtained for each iteration of the algorithm. \label{fig:facelift1_1}}
\end{figure}
\begin{figure}[H]
\begin{minipage}[b]{0.32\linewidth}
  \centering
 \includegraphics[width=\textwidth]{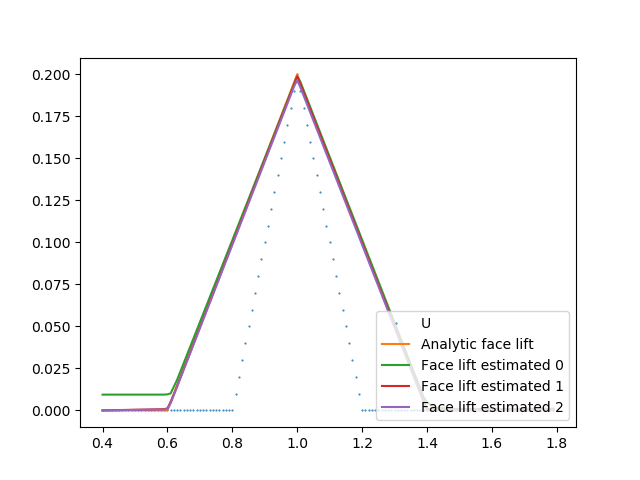}
 \caption*{$\epsilon_2 =\frac{1}{200}$}
 \end{minipage}
 \centering
 \begin{minipage}[b]{0.32\linewidth}
  \centering
 \includegraphics[width=\textwidth]{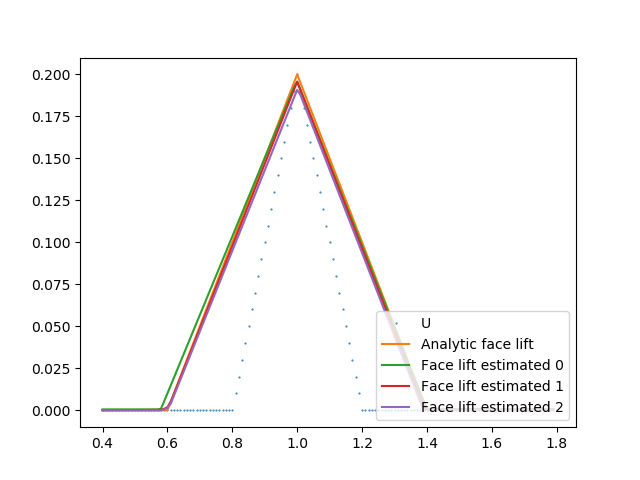}
 \caption*{$\epsilon_2 =\frac{1}{50} $}
 \end{minipage}
 \begin{minipage}[b]{0.32\linewidth}
  \centering
 \includegraphics[width=\textwidth]{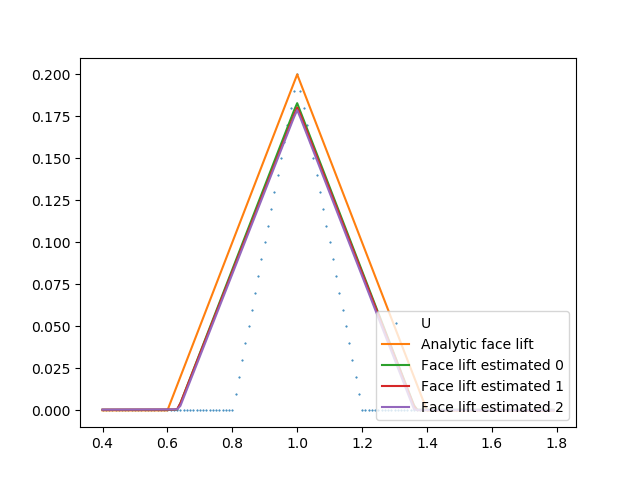}
 \caption*{$\epsilon_2 =\frac{1}{10} $}
 \end{minipage}
 \caption{facelift approximation for different values of $\epsilon_2$ for the first case with $\hat d =0.5$, $K=2$. On each figure, we give the results obtained for each iteration of the algorithm. \label{fig:facelift1_2}}
\end{figure}
On Figures \ref{fig:facelift1_1},\ref{fig:facelift1_2}, we give the facelift obtained for different values of $\epsilon_2$ and $\hat d$.
For a small constraint ($\hat d=7.5$), the facelift is calculated very well for all penalty even with one iteration of the algorithm meaning that a  simple resolution of \eqref{eq:obj1} is sufficient enough.
For a smaller value $\hat d$ a quite high penalty value is necessary to get a good result with at least two iterations of the algorithm.
\paragraph{Second case}
\label{sec:flCase2}
We want to calculate the facelift of the function
\beq
\varphi(x)=  4 [(x - 0.8)^{+} - (x-1)^{+}] +  (x-1.2)^{+}
\label{eq:varphi2}
\enq
on set $[0.6, 1.4]$. The facelift function is obviously piecewise linear and given for $\hat d  \le 4$ by
\begin{equation}
\varphi^A_{\hat d}(x)=  \left \{ 
\begin{array}{c}
4 [(x - 0.8)^{+} - (x_i-1)^{+}] +  (x_i-1.2)^{+},  \quad  x  \ge 1 \\
(0.8 - \hat d |x-1|)^{+}, \quad  x < 1 .
\end{array}
\right.
\label{eq:varphiA1D}
\end{equation}
\begin{figure}[H]
\begin{minipage}[b]{0.32\linewidth}
  \centering
 \includegraphics[width=\textwidth]{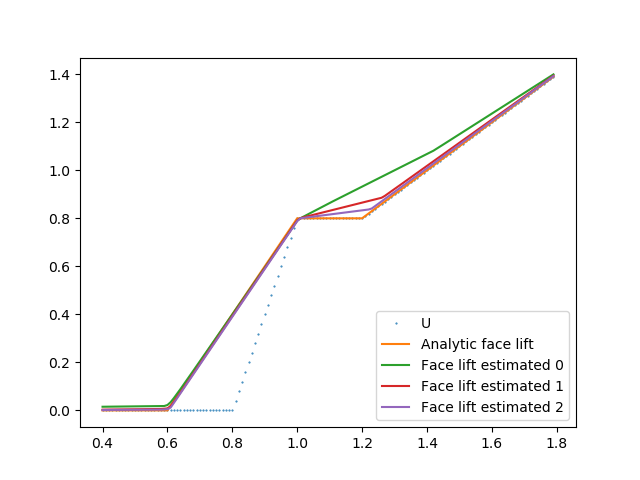}
 \caption*{$\epsilon_2 =\frac{1}{200}$}
 \end{minipage}
 \centering
 \begin{minipage}[b]{0.32\linewidth}
  \centering
 \includegraphics[width=\textwidth]{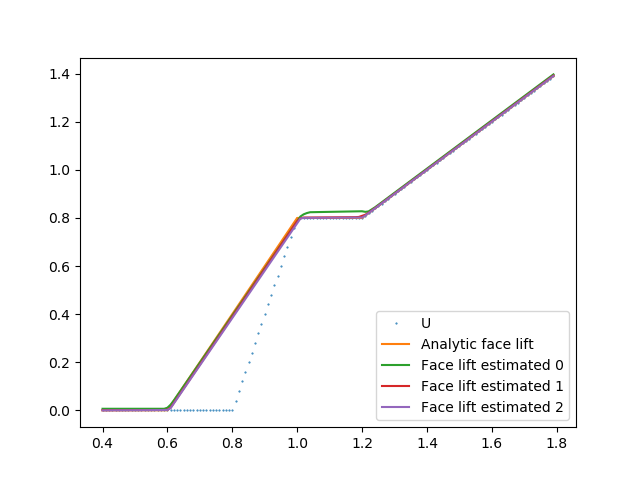}
 \caption*{$\epsilon_2 =\frac{1}{50} $}
 \end{minipage}
 \begin{minipage}[b]{0.32\linewidth}
  \centering
 \includegraphics[width=\textwidth]{OpXWdisk20Eps2_5relulayer2Neuro200.png}
 \caption*{$\epsilon_2 =\frac{1}{20} $}
 \end{minipage}
 \caption{facelift approximation for different values of $\epsilon_2$ for the first case with $\hat d =2$, $K=2$. \label{fig:facelift2_1}}
\end{figure}
\begin{figure}[H]
\begin{minipage}[b]{0.32\linewidth}
  \centering
 \includegraphics[width=\textwidth]{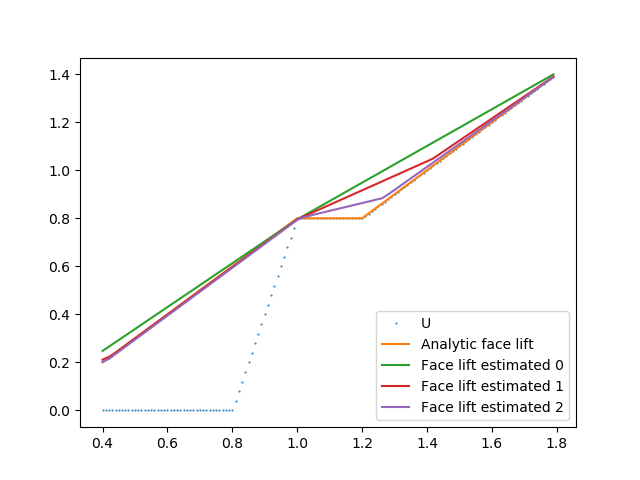}
 \caption*{$\epsilon_2 =\frac{1}{200}$}
 \end{minipage}
 \centering
 \begin{minipage}[b]{0.32\linewidth}
  \centering
 \includegraphics[width=\textwidth]{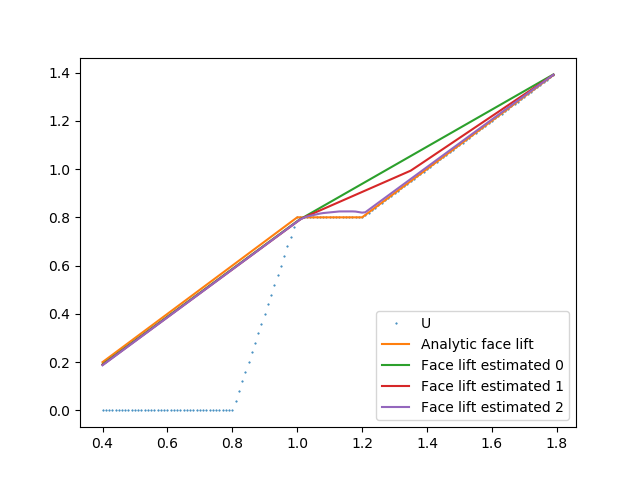}
 \caption*{$\epsilon_2 =\frac{1}{50} $}
 \end{minipage}
 \begin{minipage}[b]{0.32\linewidth}
  \centering
 \includegraphics[width=\textwidth]{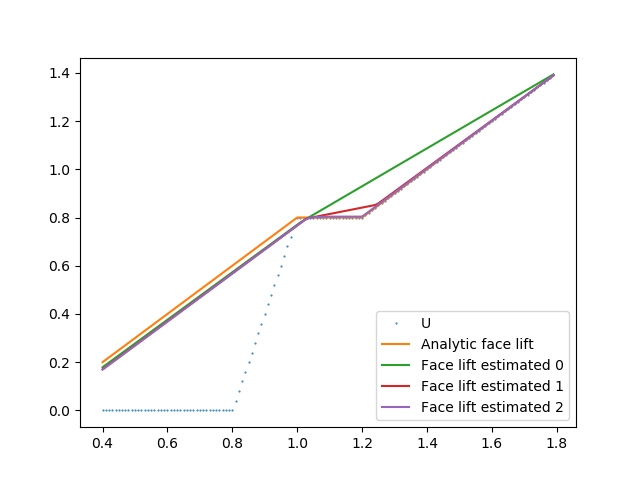}
 \caption*{$\epsilon_2 =\frac{1}{20} $}
 \end{minipage}
 \caption{facelift approximation for different values of $\epsilon_2$ for the first case with $\hat d =1$, $K=2$. \label{fig:facelift2_2}}
\end{figure}
On this test case, at least 3 iterations of the algorithm are necessary to reach a good accuracy. As before, since the constraint is higher, the algorithm faces difficulty to reach a very good accuracy.

\paragraph{Third case}
\label{sec:flCase3}
For this third  case, we take
$$ \varphi(x)= \log( 1+e^x) + 4 \frac{\sin(2x)}{1+5x^2}. $$
On Figure \ref{fig:facelift3}, we give the function value obtained with different values of $\epsilon_2$ using 3 iterations of the algorithm   for different size $\hat d$ . 
\begin{figure}[H]
\begin{minipage}[b]{0.49\linewidth}
  \centering
 \includegraphics[width=\textwidth]{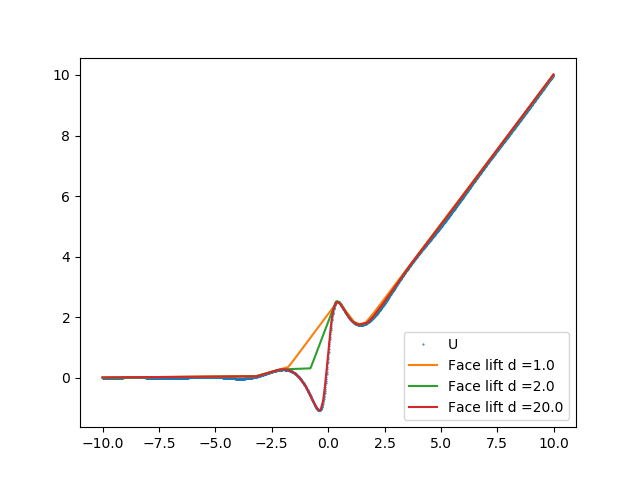}
 \caption*{$\epsilon_2 =\frac{1}{100}$}
 \end{minipage}
 \centering
 \begin{minipage}[b]{0.49\linewidth}
  \centering
 \includegraphics[width=\textwidth]{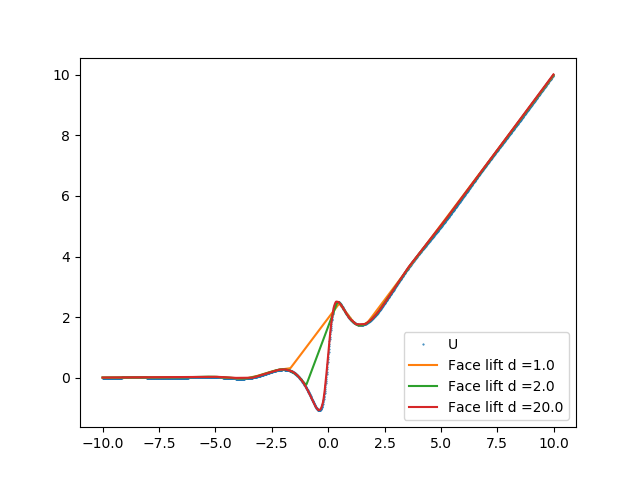}
 \caption*{$\epsilon_2 =\frac{1}{10} $}
 \end{minipage}
 \caption{facelift approximation for different values of $\hat d$ for the third case. \label{fig:facelift3}}
\end{figure}
As we can see on Figure \ref{fig:dfacelift3}, constraints are well respected for test case 3.
\begin{figure}[H]
\begin{minipage}[b]{0.49\linewidth}
  \centering
 \includegraphics[width=\textwidth]{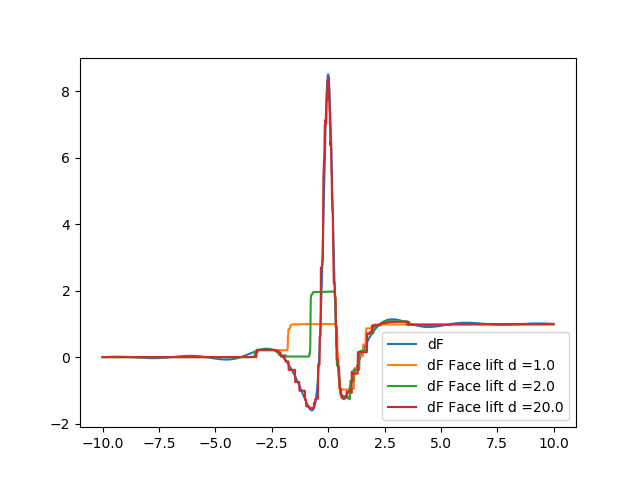}
 \caption*{$\epsilon_2 =\frac{1}{100}$}
 \end{minipage}
 \centering
 \begin{minipage}[b]{0.49\linewidth}
  \centering
 \includegraphics[width=\textwidth]{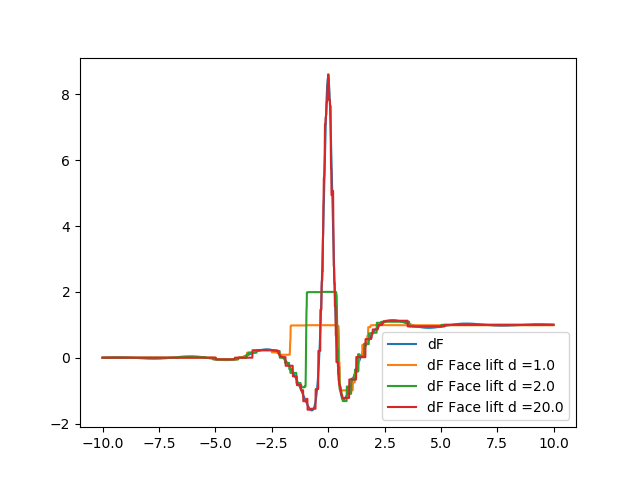}
 \caption*{$\epsilon_2 =\frac{1}{10} $}
 \end{minipage}
 \caption{Derivative of the facelift approximation for different values of $\epsilon_2$ for the third case \label{fig:dfacelift3}}
\end{figure}

\subsection{Results in higher dimension}
We extend the $\varphi$ function  given by  \eqref{eq:varphi2} in higher dimension by
\beq
\varphi(x)=   \frac{1}{d}\sum_{i=1}^d 4 [(x_i - 0.8)^{+} - (x_i-1)^{+}] +  (x_i-1.2)^{+}, \quad  x \in \R^d\;.
\label{eq:varphi2nD}
\enq
As before the facelift can be calculated analytically  for $d \hat d  \le 4$ as  $$\varphi^{A,\hat d}(x) = \frac{1}{d}\sum_{i=1}^d \varphi^A_{\hat d}(x_i)$$ where $\varphi^A_{\hat d}$ is given by equation \eqref{eq:varphiA1D}.\\
We test the accuracy of the facelift calculated $NN^{\theta_{m,\eps}^{*,k}}$
in different dimension by plotting
$$  \E[ (NN^{\theta_{m,\eps}^{*,k}} - \varphi^A)^2(\xi)]$$ with respect to $k$ for $\epsilon_2= \frac{1}{4000}$ for different values of $d$ and $\hat d$.
\begin{Remark}
Taking a very small value permits to get better results in high dimension but increases the number of iterations for easier cases.
\end{Remark}
On Figure \ref{fig:dfaceliftError}, we plot the error due to the algorithm with respect to number of  iterations for different  dimensions. Iterations are stopped below 10 when errors starts increasing meaning that the solution estimated is below the true one. In real application, a check on the $L_2$ difference between the estimation and the function to facelift is used to stop the iterations. 
\begin{figure}[H]
\begin{minipage}[b]{0.49\linewidth}
  \centering
 \includegraphics[width=\textwidth]{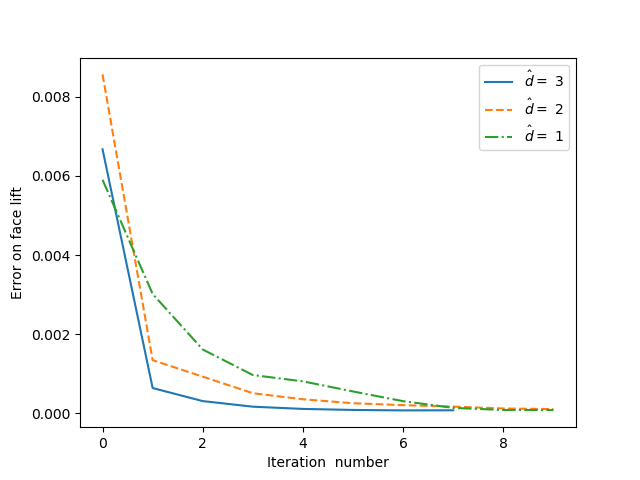}
 \caption*{$1D$}
 \end{minipage}
 \centering
 \begin{minipage}[b]{0.49\linewidth}
  \centering
 \includegraphics[width=\textwidth]{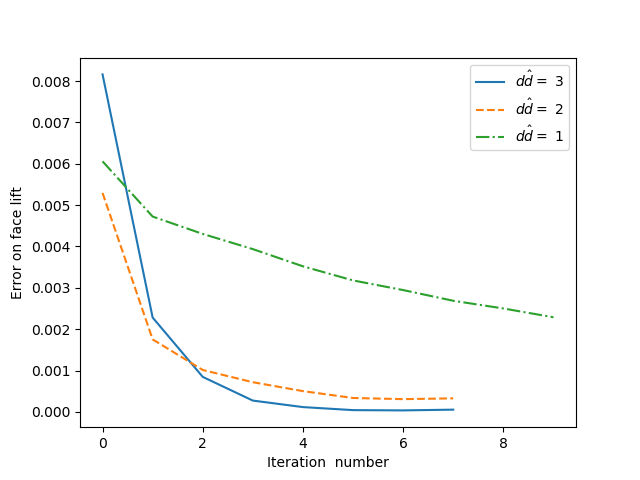}
 \caption*{$2D$}
 \end{minipage}
 \begin{minipage}[b]{0.49\linewidth}
  \centering
 \includegraphics[width=\textwidth]{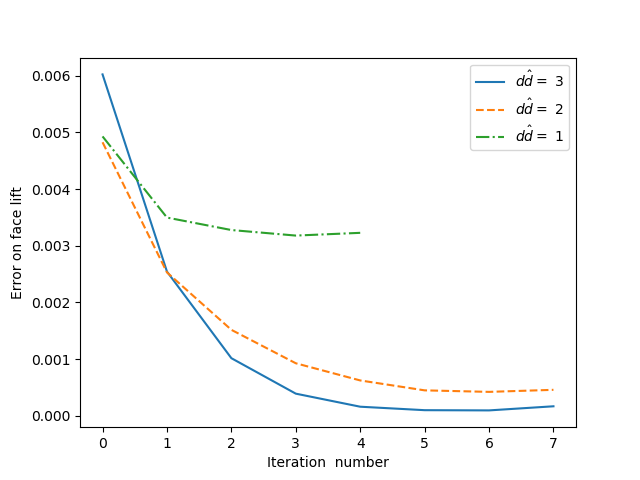}
 \caption*{$4D$}
 \end{minipage}
 \centering
 \begin{minipage}[b]{0.49\linewidth}
  \centering
 \includegraphics[width=\textwidth]{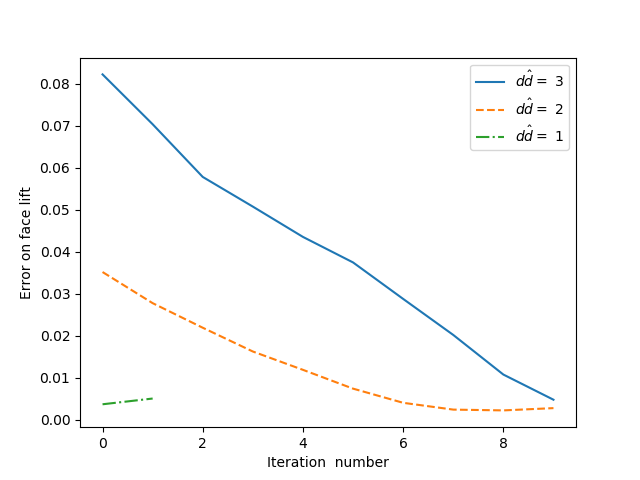}
 \caption*{$10D$}
 \end{minipage}
 \caption{Error with respect to the number of iterations \label{fig:dfaceliftError}}
\end{figure}
As expected, the convergence in dimension 10D is harder to achieve and hard constraints (small $\hat d$) are difficult to solve.
\subsubsection{Solving the BSDE with constraints}
\label{sec:BSDEConst}
In this  section we propose to solve the problem of option pricing with differential interest rates \cite{bergman1995option} adding a constraint on the number of shares held  in the portfolio. 
The  forward process is given by the Black Scholes model
\beqs
 X^{t,x}_s = x + \int_t^s \mu X_u^{t,x} du + \int_t^s \sigma X_u^{t,x}  dB_u \;,  \quad s \in [t,T] .
\enqs
The driver is given by
\beqs
f(t,x,y,z)= - r y -  z.( \sigma^{-1}(\mu-r) x) - (R-r)( y - z.(\sigma^{-1} x)).
\enqs
As the facelift is calculated by a neural network, it seems natural to solve the transition problem between two time steps by the same methodology.
Currently two effective methods have been developed to solve this problem \cite{HPW19} and \cite{beck2019deep}.
It turns out that the method given in \cite{HPW19} is more accurate than the method given in \cite{beck2019deep}. Then we apply the method given in \cite{HPW19} as described in the previous section to our problem.
We decide to apply the constraint after each resolution so we take $n_{k}=1$ in the implemented algorithm.\\
The parameters are taken as follows: we keep as for the facelift calculation two hidden layers with 200 neurons. For the activation function we keep the tanh function used in \cite{HPW19}.
The size of mini batch is taken equal to $1000$, and we check the convergence every $100$ epoch iterations. When reduction of the loss is not effective enough we reduce the learning rate with the methodology explained in \cite{chan2019machine}. Total number of iterations is limited to $50000$ for each time step.
Numerical test show that the number of neurons could be lower  and the activation function could be a ReLU or ELU : taking  50 and 100 neurons gives very similar results for activation functions listed above.\\
In one dimension, we give the results obtained for the second payoff function  used  in Section \ref{sec:flCase2}.
We take $T=1$, $r=0.05$, $\mu=0.07$, $\sigma =0.3$, $x=1$ for the initial asset value. The convex set is a ball of radius $\hat d$, $\epsilon_2$ taken equal to $\frac{1}{50}$ and the number of iterations $K$ in algorithm \ref{algo1} equal to $2$.
We give results obtained for different value of $R$. Taking $R=r$, we get a semi analytical value by taking the expectation of the facelift payoff of the process under the risk neutral measure as explained in \cite{BCS98}. This expectation is calculated by taking $1e7$ trajectories.\\
When $R \neq r$, no solution is available for this non linear problem.
In Tables \ref{tab:optionFL1:1},\ref{tab:optionFL1:2},  we give the results obtained with $20$ time steps for different values of $\hat d$. We give the average of $5$ calculations and the standard deviation of the results.
Notice that  without constraints and $R=r$, the analytical solution is $0.558$.
\begin{table}[H]
\begin{minipage}{.5\linewidth}
\caption*{$\hat d = 3$}
\centering
\begin{tabular}{|c|c|c|c|}  \hline
$ R$ &  0.05 &  0.07 & 0.09 \\ \hline
Analytical & 0.591   &      &     \\ \hline
Numerical &  0.598    & 0.612     & 0.627     \\ \hline
Std       &  0.004    & 0.002     &  0.0008     \\ \hline
\end{tabular}
 \end{minipage}%
 \begin{minipage}{.5\linewidth}
\centering
 \caption*{$\hat d = 2$}
\begin{tabular}{|c|c|c|c|}  \hline
 $R$ &  0.05 &  0.07 & 0.09 \\ \hline
 Analytical & 0.648   &      &     \\ \hline
 Numerical &  0.653   & 0.661     & 0.669     \\ \hline
 Std       &  0.001   &  0.002    &  0.003      \\ \hline
 \end{tabular}
\end{minipage} 
\caption{\label{tab:optionFL1:1} Results obtained by neural network algorithm  for payoff  \ref{eq:varphi2} using $20$ time steps ($n=21$)}
\end{table}
 \begin{table}[H]
 \centering
\begin{tabular}{|c|c|c|c|}  \hline
 $R$ &  0.05 &  0.07 & 0.09 \\ \hline
 Analytical & 0.736    &      &     \\ \hline
 Numerical &  0.742    & 0.743     &  0.744    \\ \hline
 Std       &  0.003    &  0.002    &  0.0009      \\ \hline
 \end{tabular}
 \caption{\label{tab:optionFL1:2} Results obtained by neural network algorithm  for payoff  \ref{eq:varphi2} using $20$ time steps ($n=21$),  $\hat d = 1.$}
 \end{table}
 \subsubsection{Results in higher dimension}
 In this section we take $R=r$ such that we get an analytical solution and we use the previous algorithm with the payoff  \eqref{eq:varphi2nD}.
\begin{table}[H]
\caption{\label{tab:optionFLnD1} Results obtained by neural network algorithm  for payoff  \ref{eq:varphi2nD} using $20$ time steps ($n=21$) with $\epsilon_2=  \frac{1}{50}$}
\centering
\begin{tabular}{|c|c|c|c|c|c|c|c|c|c|}  \hline
$d$  &  \multicolumn{3}{c|} {2}   & \multicolumn{3}{c|} {4}   & \multicolumn{3}{c|} {6}  \\ \hline
$ d \hat d$  & 3 & 2 & 1   &  3 & 2 & 1   & 3 & 2 & 1  \\ \hline
Ref & 0.591 & 0.648 & 0.736 & 0.591 & 0.648 & 0.736 & 0.591 & 0.648 & 0.736  \\ \hline
Num &  0.592  & 0.644  &  0.739   &  0.591    & 0.637 & 0.722 & 0.591    &  0.631     &  0.707  \\ \hline
Std &  0.002  &  0.002    &  0.001  &  0.002     &  0.003    &  0.003 & 0.001 &  0.002    &  0.002   \\ \hline
\end{tabular}
\end{table}
\begin{table}[H]
\caption{\label{tab:optionFLnD2} Results obtained by neural network algorithm  for payoff  \ref{eq:varphi2nD} using $20$ time steps ($n=21$) with $\epsilon_2=  \frac{1}{250}$}
\centering
\begin{tabular}{|c|c|c|c|c|c|c|c|c|c|}  \hline
$d$  &  \multicolumn{3}{c|} {2}   & \multicolumn{3}{c|} {4}   & \multicolumn{3}{c|} {6}  \\ \hline
$ d \hat d$  & 3 & 2 & 1   &  3 & 2 & 1   & 3 & 2 & 1  \\ \hline
Ref & 0.591 & 0.648 & 0.736 & 0.591 & 0.648 & 0.736 & 0.591 & 0.648 & 0.736  \\ \hline
Num &    0.598  & 0.6550  &  0.760 & 0.602    & 0.653     & 0.749 & 0.607    & 0.654    &   0.749  \\ \hline
Std       &   0.005   &  0.003    & 0.006  & 0.002  & 0.001  & 0.013    &  0.003      &  0.002    &   0.008  \\ \hline
\end{tabular}
\end{table}
As noticed before, since the constraints gets tighter, the results are not as good. Taking a very small $\eps$ gives  results with a higher standard deviation.
\appendix

\section{Regularity estimates on solutions to parabolic semi-linear PDEs}
We recall in this appendix an existence and uniqueness results for viscosity solution  to semi-linear PDEs.    We also give a regularity property with an explicit form for the Lipschitz and H\"older constants.  Although, this regularity is classical in PDE theory, we choose to provide such a result as we did not find any explicit mention of the dependence of the regularity coefficient in the literature.

We fix $\underline t,\overline t\in [0,T]$ and we consider a PDE of the form
\begin{equation}\label{PDE-appendix}\left\{
\begin{array}{l}
-\partial_t  w(t,x)-\Lc  w(t,x) \\
-h\big(t,x, w(t,x),\sigma(t,x)^\top D w(t,x)\big)  =  0\;,~~  (t,x)\in [\underline t,\overline t)\times\R^d\\
 w(\overline t,x)  =  m(x)\;,~~ x\in\R^d\\
\end{array}\right.
\end{equation}
We make the following assumption on the coefficients $m$ and $h$.

\vspace{2mm}

\ni  \textbf{(H$h,m$)}\begin{enumerate}[(i)]
\item The function $m$ is bounded: there exists a constant $M_m$ such that
\beqs
|m(x)| \leq M_m
\enqs
for all $x\in\R^d$.
\item The function $h$ is continuous and satisfies the following growth property: there exists a constant $M_h$ such that
\beqs
|h(t,x,y,z))| & \leq & M_h\big(1+|y|+|z|\big)
\enqs
for all $t\in[0,T]$, $x\in\R^d$, $y\in\R$ and $z\in\R^d$.
\item The functions $h$ and $m$ are Lipschitz continuous in their space variables uniformly in their time variable: there exist two constants $L_h$ and $L_m$ such that
\beqs
|m(x)-m(x')| & \leq &
 L_m|x-x'| \\
|h(t,x,y,z)-h(t,x',y',z')| & \leq &
 L_h\big(|x-x'|+|y-y'|+|z-z'|\big) 
\enqs
for all $t\in[0,T]$, $x,x'\in\R^d$,  $y,y'\in\R$ and $z,z'\in\R^d$.
\end{enumerate}

\vspace{2mm}

\begin{Proposition}\label{prop-reg-space-EDPSL}
Suppose  \textbf{(H$b,\sigma$)} and \textbf{(H$h,m$)} hold.  The PDE \reff{PDE-appendix} admits a unique viscosity solution $w$ with polynomial growth: there exist an integer $p\geq1$ and a constant $C$ such that
\beqs
|w(t,x)| & \leq & C(1+|x|^p)\;,\quad (t,x)\in[\underline t, \overline t]\times \R^d.
\enqs
Moreover, $w$ satisfies the following space regularity property
\beqs
|w(t,x)-w(t,x')| & \leq &  \\
e^{(2L_{b,\sigma}+L_{b,\sigma}^2+(L_h\vee 2)^2)(\overline t-\underline t)}(1+(\overline t - \underline t))^{1\over 2}\big( L_m^2+(\overline t - \underline t)(L_h\vee 2)^2\big)^{1\over 2}|x-x'| & &
\enqs
for all $t\in [\underline t, \overline t]$ and $x,x'\in\R^d$.
\end{Proposition}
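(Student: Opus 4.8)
The plan is to derive both assertions from the Feynman--Kac representation of \reff{PDE-appendix} by a Lipschitz BSDE and then to run the classical $L^2$ estimates while keeping explicit track of every constant. For $(t,x)\in[\underline t,\overline t]\times\R^d$, introduce the BSDE on $[t,\overline t]$
\beqs
\Yc^{t,x}_s & = & m(X^{t,x}_{\overline t})+\int_s^{\overline t}h(r,X^{t,x}_r,\Yc^{t,x}_r,\Zc^{t,x}_r)dr-\int_s^{\overline t}\Zc^{t,x}_r dB_r\;,\qquad s\in[t,\overline t]\;.
\enqs
By \textbf{(H$h,m$)} the driver is globally Lipschitz, so Theorem 1.1 in \cite{pardoux1998backward} gives a unique solution $(\Yc^{t,x},\Zc^{t,x})\in\mathbf{S}^2_{[t,\overline t]}\times\mathbf{H}^2_{[t,\overline t]}$; Theorem 2.2 in \cite{pardoux1998backward} identifies $w(t,x):=\Yc^{t,x}_t$ as a continuous viscosity solution of \reff{PDE-appendix}, and Theorem 5.1 in \cite{PPR97} shows it is the unique viscosity solution of polynomial growth, the polynomial bound itself following from the linear growth in \textbf{(H$h,m$)}(ii) together with the moment estimates on $X^{t,x}$, exactly as in the proof of Proposition \ref{prop-pen-fct}. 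This settles the existence, uniqueness and growth part of the statement.

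For the regularity estimate, fix $t\in[\underline t,\overline t]$ and $x,x'\in\R^d$ and write $\delta X=X^{t,x}-X^{t,x'}$, $\delta\Yc=\Yc^{t,x}-\Yc^{t,x'}$, $\delta\Zc=\Zc^{t,x}-\Zc^{t,x'}$. Since $\Yc^{t,x}_t$ is deterministic, $|w(t,x)-w(t,x')|^2=\E[|\delta\Yc_t|^2]$, so it suffices to bound $\E[|\delta\Yc_t|^2]$. Applying It\^o's formula to $|\delta X_s|^2$, using $|b(r,X^{t,x}_r)-b(r,X^{t,x'}_r)|\le L_{b,\sigma}|\delta X_r|$ and the analogous bound for $\sigma$, and taking expectations gives $\E|\delta X_s|^2\le|x-x'|^2+(2L_{b,\sigma}+L_{b,\sigma}^2)\int_t^s\E|\delta X_r|^2dr$, whence by Gronwall's lemma $\E|\delta X_s|^2\le e^{(2L_{b,\sigma}+L_{b,\sigma}^2)(s-t)}|x-x'|^2$ for $s\in[t,\overline t]$; combining this with a Burkholder--Davis--Gundy estimate for $\sup_{r\in[t,\overline t]}|\delta X_r|^2$ is what produces the prefactor of the form $1+(\overline t-\underline t)$.

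Next, apply It\^o's formula to $|\delta\Yc_s|^2$ on $[s,\overline t]$. Using $|h(r,X^{t,x}_r,\Yc^{t,x}_r,\Zc^{t,x}_r)-h(r,X^{t,x'}_r,\Yc^{t,x'}_r,\Zc^{t,x'}_r)|\le L_h(|\delta X_r|+|\delta\Yc_r|+|\delta\Zc_r|)$ and Young's inequality, the driver contribution $2\delta\Yc_r\cdot(\cdots)$ is dominated by a term of the form $\alpha|\delta\Yc_r|^2+\beta|\delta X_r|^2+|\delta\Zc_r|^2$, the last piece cancelling the $\int_s^{\overline t}|\delta\Zc_r|^2dr$ produced by It\^o; replacing $L_h$ by $L_h\vee2$ is what lets the first-order-in-$\delta\Yc$ term $2L_h|\delta\Yc_r|^2$ coming from the $y$-dependence of $h$ be absorbed into the quadratic term, so that $(L_h\vee2)^2$ rather than $L_h$ governs the growth. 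Taking expectations, applying Gronwall's lemma to $\E|\delta\Yc_s|^2$, substituting $\E|m(X^{t,x}_{\overline t})-m(X^{t,x'}_{\overline t})|^2\le L_m^2\,\E|\delta X_{\overline t}|^2$ and the forward estimate, evaluating at $s=t$, taking square roots, and bounding $\overline t-t\le\overline t-\underline t$ then yields the announced inequality.

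The only genuinely delicate point is the bookkeeping of constants in the last two steps: one must choose the Young parameters so that the coefficient of $|\delta\Zc_r|^2$ on the right-hand side is exactly $1$, so that it is absorbed by $\int_s^{\overline t}|\delta\Zc_r|^2dr$ on the left without loss, and so that the residual coefficient of $|\delta\Yc_r|^2$, once fed into Gronwall's lemma together with the forward exponent $2L_{b,\sigma}+L_{b,\sigma}^2$, collapses to $2L_{b,\sigma}+L_{b,\sigma}^2+(L_h\vee2)^2$ and the source term collapses to the factor $L_m^2+(\overline t-\underline t)(L_h\vee2)^2$. Everything else is routine.
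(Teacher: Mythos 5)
Your proof is correct and follows essentially the same route as the paper's: the Feynman--Kac representation via a Lipschitz BSDE with the same citations for existence, uniqueness and polynomial growth, followed by the It\^o--Young--Gronwall computation with identical constant bookkeeping (including the observation $\tfrac{x^2}{4}+x+1\le x^2$ for $x\ge 2$ that produces $L_h\vee 2$). One minor remark: in the paper the factor $1+(\overline t-\underline t)$ arises from the cross terms in the time-shifted forward estimate \reff{estim-reg-diff2} (which also covers $t\neq t'$), not from a Burkholder--Davis--Gundy supremum bound; since that factor is $\geq 1$, your pointwise Gronwall estimate on $\E|\delta X_s|^2$ already implies the stated inequality, so this mislabeling is harmless.
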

We first need the following lemma.
\begin{Lemma}
 Under \textbf{(H$b,\sigma$)} we have the following estimate
\beq\label{estim-reg-diff2}
\sup_{s\in[t\vee t',\overline t]}\E\Big[|X^{t,x}_s-X^{t',x'}_s|^2\Big] & \leq & \\\nonumber
e^{(2L_{b,\sigma}+L_{b,\sigma}^2)(\overline t-\underline t)}(1+(\overline t - \underline t))\Big( |x-x'|+M_{b,\sigma}\sqrt{|t-t'|}\Big)^2 & &
\enq
for $t,t'\in[\underline t,\overline t]$ and $x,x'\in \R^d$.
 \end{Lemma}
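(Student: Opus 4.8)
The plan is to reduce \reff{estim-reg-diff2} to two standard ingredients: the $L^2$-stability of the diffusion flow over $[t\vee t',\overline t]$, which produces the exponential factor with its precise constant, and the control of the displacement of $X^{t,x}$ over the short interval between $t$ and $t'$, which produces the $M_{b,\sigma}\sqrt{|t-t'|}$ term. Without loss of generality I assume $t\le t'$, so that $t\vee t'=t'$ and the supremum is taken over $s\in[t',\overline t]$; I also abbreviate $\delta_s:=X^{t,x}_s-X^{t',x'}_s$.

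For the first ingredient, I write, for $s\in[t',\overline t]$,
\[
\delta_s=\delta_{t'}+\int_{t'}^s\big(b(r,X^{t,x}_r)-b(r,X^{t',x'}_r)\big)dr+\int_{t'}^s\big(\sigma(r,X^{t,x}_r)-\sigma(r,X^{t',x'}_r)\big)dB_r,
\]
apply It\^o's formula to $|\delta_s|^2$, and take expectations; after a routine localization the stochastic integral is a martingale, so it drops out. Bounding the drift contribution by $2\delta_r\cdot(b(r,X^{t,x}_r)-b(r,X^{t',x'}_r))\le 2L_{b,\sigma}|\delta_r|^2$ and the quadratic variation term by $|\sigma(r,X^{t,x}_r)-\sigma(r,X^{t',x'}_r)|^2\le L_{b,\sigma}^2|\delta_r|^2$ thanks to \textbf{(H$b,\sigma$)}(iii), and then applying Gronwall's lemma, I obtain
\[
\E\big[|\delta_s|^2\big]\le e^{(2L_{b,\sigma}+L_{b,\sigma}^2)(s-t')}\,\E\big[|X^{t,x}_{t'}-x'|^2\big]\le e^{(2L_{b,\sigma}+L_{b,\sigma}^2)(\overline t-\underline t)}\,\E\big[|X^{t,x}_{t'}-x'|^2\big]
\]
for every $s\in[t',\overline t]$. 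This is exactly where the exponential factor and the constant $2L_{b,\sigma}+L_{b,\sigma}^2$ come from.

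It then remains to prove $\E[|X^{t,x}_{t'}-x'|^2]\le(1+(\overline t-\underline t))(|x-x'|+M_{b,\sigma}\sqrt{|t-t'|})^2$. Writing $X^{t,x}_{t'}-x'=(x-x')+\int_t^{t'}b(r,X^{t,x}_r)dr+\int_t^{t'}\sigma(r,X^{t,x}_r)dB_r$, I estimate the three pieces using only the uniform bound $M_{b,\sigma}$ from \textbf{(H$b,\sigma$)}(ii): by the It\^o isometry the stochastic integral has $L^2$-norm at most $M_{b,\sigma}\sqrt{|t-t'|}$, while the drift integral is handled by the Cauchy--Schwarz inequality; the cross terms are controlled by noting that $b(r,X^{t,x}_r)$ is $\mathcal{F}_r$-adapted. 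Combining these and using $|t-t'|\le\overline t-\underline t$, the drift contribution and the mixed terms are absorbed into the factor $1+(\overline t-\underline t)$ multiplying $(|x-x'|+M_{b,\sigma}\sqrt{|t-t'|})^2$.

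The genuinely delicate point is this final bookkeeping: recovering the \emph{stated} form of the constant, rather than merely some constant, requires choosing the weights in Young's inequality appropriately when splitting the drift and cross terms so that everything collapses into $(1+(\overline t-\underline t))(|x-x'|+M_{b,\sigma}\sqrt{|t-t'|})^2$. All the remaining steps are the routine SDE estimate already used for \reff{estim-reg-diff}, now carried out while keeping track of constants.
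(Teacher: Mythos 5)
Your proof is correct and follows essentially the same route as the paper's: Itô's formula plus Gronwall on $[t\vee t',\overline t]$ to get the factor $e^{(2L_{b,\sigma}+L_{b,\sigma}^2)(\overline t-\underline t)}$, and then a direct $L^2$ estimate of the displacement $X^{t\wedge t',\cdot}$ over $[t\wedge t',t\vee t']$ using the bound $M_{b,\sigma}$ on $b$ and $\sigma$ together with the Itô isometry. The "final bookkeeping" you defer is carried out in the paper by the elementary inequality $|x-x'|^2+M^2_{b,\sigma}|t-t'|^2+M^2_{b,\sigma}|t-t'|+2M_{b,\sigma}|x-x'||t-t'|\le\big(|x-x'|+M_{b,\sigma}\sqrt{|t-t'|}\big)^2(1+(\overline t-\underline t))$, so no Young-inequality tuning is actually needed.
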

 \begin{proof} Fix $t,t'\in[\underline t,\overline t]$  such that $t'\leq t$ and $x,x'\in \R^d$. From It\^o's formula and \textbf{(H$b,\sigma$)} we have
 \beqs
\E\Big[|X^{t,x}_s-X^{t',x'}_s|^2\Big]  & \leq & \E\Big[|X^{t,x}_t-X^{t',x'}_t|^2\Big] + (2L_{b,\sigma}+L_{b,\sigma}^2)\int_t^s\E\Big[|X^{t,x}_u-X^{t',x'}_u|^2\Big] du
 \enqs
 for $s\in [t,\overline t]$. By Gronwall's Lemma we get
 \beqs
 \sup_{s\in[t,\overline t]}\E\Big[|X^{t,x}_s-X^{t',x'}_s|^2\Big] & \leq & \E\Big[|X^{t,x}_t-X^{t',x'}_t|^2\Big]e^{(2L_{b,\sigma}+L_{b,\sigma}^2)(\overline t-\underline t)}\;.
 \enqs
 Moreover, we have 
\beqs
\E\Big[|X^{t,x}_t-X^{t',x'}_t|^2\Big] & = & \E\Big[|x-x'-\int_{t'}^tb(s,X^{t',x'}_s)ds-\int_{t'}^t\sigma(s,X^{t',x'}_s)dB_s|^2\Big]\\
 & \leq & |x-x'|^2+M^2_{b,\sigma}|t-t'|^2+M^2_{b,\sigma}|t-t'|+2 M_{b,\sigma}|x-x'||t-t'|\\
 & \leq & \Big( |x-x'|+M_{b,\sigma}\sqrt{|t-t'|}\Big)^2(1+(\overline t - \underline t)
 ))\;.
\enqs 
Which give the result. \end{proof}
 
\begin{proof}[Proof of Proposition \ref{prop-reg-space-EDPSL}]
For $(t,x)\in[\underline t, \overline t]\times \R^d$, we introduce the following BSDE: find $(\Yc^{t,x},\Zc^{t,x})\in \mathbf{S}^2_{[t,\overline t]}\times \mathbf{H}^2_{[t,\overline t]}$ such that
\beqs
\Yc^{t,x}_u & = & m(X^{t,x}_{\overline t})+\int_u^{\overline t}h(s,X^{t,x}_s, \Yc^{t,x}_s,\Zc^{t,x}_s)ds-\int_u^{\overline t}\Zc_s^{t,x} dB_s\;,~~u\in[t,\overline t]\;.
\enqs
From Theorem 1.1 in \cite{pardoux1998backward}, we get the existence and uniqueness of the solution to this BSDE for all $(t,x)\in[\underline t, \overline t]\times \R^d$.
From Theorem 2.2 in  \cite{pardoux1998backward} and Theorem 5.1 in \cite{PPR97}, the function $w$ defined by
\beqs
 w(t,x)  & = & \Yc^{t,x}_t \;,\quad (t,x)\in[\underline t, \overline t]\times \R^d\;,
\enqs
 is continuous and is the unique viscosity solution to \reff{PDE-appendix} with polynomial growth. 
 
 We now turn to the regularity estimate. 
 We first check the regularity w.r.t. the variable $x$.

 Fix $t\in[\underline t, \overline t]$ and $x,x' \in\R^d$. By It\^o's formula we have
 \beqs
 |\Yc^{t,x}_s-\Yc^{t,x'}_{s}|^2
  & = & |m(X^{t,x}_{\overline t})-m(X^{t,x'}_{\overline t})|^2\\
   & & +\int_{s}^{\overline t}\big( h(u,X^{t,x}_{u},\Yc^{t,x}_u,\Zc^{t,x}_u)-h(u,X^{t,x'}_{u},\Yc^{t,x'}_u,\Zc^{t,x'}_u) \big)(\Yc^{t,x}_u-\Yc^{t,x'}_{u})du\\
    &  & - \int_{s}^{\overline t}|\Zc^{t,x}_u-\Zc^{t,x'}_u|^2du-\int_{s}^{\overline t}(\Yc^{t,x}_u-\Yc^{t,x'}_u)(\Zc^{t,x}_u-\Zc^{t,x'}_u).dB_u
 \enqs 
for $s\in[t,\overline t]$. Using Lipschitz properties of $h$ and $m$ and Young ineqality we get
\beqs
 \E[|\Yc^{t,x}_s-\Yc^{t,x'}_{s}|^2] & \leq & L_m^2\E[|X^{t,x}_{\overline t}-X^{t,x'}_{\overline t}|^2]+L_h^2\int_{s}^{\overline t}\E[|X^{t,x}_{u}-X^{t,x'}_{u}|^2]du
\\
 & & + ({L_h^2\over 4}+L_h+1)\int_{s}^{\overline t} \E[|\Yc^{t,x}_u-\Yc^{t,x'}_{u}|^2] du\;,\quad s\in[t,\overline t]\;.
\enqs
Since $({x^2\over 4}+x+1)\leq x^2$ for $x\geq 2$ we get
\beqs
 \E[|\Yc^{t,x}_s-\Yc^{t,x'}_{s}|^2] & \leq & L_m^2\E[|X^{t,x}_{\overline t}-X^{t,x'}_{\overline t}|^2]+(L_h\vee 2)^2\int_{s}^{\overline t}\E[|X^{t,x}_{u}-X^{t,x'}_{u}|^2]du
\\
 & & + (L_h\vee 2)^2\int_{s}^{\overline t} \E[|\Yc^{t,x}_u-\Yc^{t,x'}_{u}|^2] du\;,\quad s\in[t,\overline t]\;.
\enqs
Then using \reff{estim-reg-diff2}, we get
\beqs
\E[|\Yc^{t,x}_s-\Yc^{t,x'}_{s}|^2]  & \leq &e^{(2L_{b,\sigma}+L_{b,\sigma}^2)(\overline t-\underline t)}(1+(\overline t - \underline t)) \big( L_m^2+(\overline t - \underline t)(L_h\vee 2)^2\big)|x-x'|^2 \\
 & & + (L_h\vee 2)^2\int_{s}^{\overline t} \E[|\Yc^{t,x}_u-\Yc^{t,x'}_{u}|^2] du\;,\quad s\in[t,\overline t]\;.
\enqs
Using Gronwall's Lemma we get
\beqs
\E[|\Yc^{t,x}_t-\Yc^{t,x'}_{t}|^2]  & \leq & e^{(2L_{b,\sigma}+L_{b,\sigma}^2+(L_h\vee 2)^2)(\overline t-\underline t)}(1+(\overline t - \underline t))\big( L_m^2+(\overline t - \underline t)(L_h\vee 2)^2\big) |x-x'|^2\;. 
\enqs
Therefore, we get
\beqs
|w(t,x)-w(t,x')| & \leq & e^{(2L_{b,\sigma}+L_{b,\sigma}^2+(L_h\vee 2)^2)(\overline t-\underline t)}(1+(\overline t - \underline t))^{1\over 2}\big( L_m^2+(\overline t - \underline t)(L_h\vee 2)^2\big)^{1\over 2} |x-x'|\;. 
\enqs
 \end{proof}
 In this last result we prove that under our assumptions the $Z$ component of a solution to a BSDE is bounded.
We recall that $(\Yc^{t,x},\Zc^{t,x})\in \mathbf{S}^2_{[t,\overline t]}\times \mathbf{H}^2_{[t,\overline t]}$ denotes the solution to 
\beqs
\Yc^{t,x}_u & = & m(X^{t,x}_{\overline t})+\int_u^{\overline t}h(s,X^{t,x}_s, \Yc^{t,x}_s,\Zc^{t,x}_s)ds-\int_u^{\overline t}\Zc_s^{t,x} dB_s\;,~~u\in[t,\overline t]\;,
\enqs
for $(t,x)\in [\underline t,\overline t]\times\R^d$.
\begin{Proposition}\label{propZborne}
Under \textbf{(H$b,\sigma$)} and \textbf{(H$h,m$)}, the process $\Zc^{t,x}$ satisfies
\beqs
|\Zc^{t,x}| & \leq & M_{b,\sigma}e^{(2L_{b,\sigma}+L_{b,\sigma}^2+(L_h\vee 2)^2)(\overline t-\underline t)}(1+(\overline t - \underline t))^{1\over 2}\big( L_m^2+(\overline t - \underline t)(L_h\vee 2)^2\big)^{1\over 2}  \enqs
$d\P\otimes dt ~a.e.$  on $\Omega\times  [t,\overline t]$.

\end{Proposition}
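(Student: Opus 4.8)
\textit{Step 1: the smooth case.} The plan is to reduce to smooth coefficients, where $\Zc^{t,x}$ admits a gradient representation, and then to pass to the limit. So assume first that $b,\sigma$ are of class $C^\infty$ with bounded derivatives of all orders and that $h,m$ are of class $C^\infty$. Then the value function $w$ of Proposition \ref{prop-reg-space-EDPSL} belongs to $C^{1,2}([\underline t,\overline t)\times\R^d)$ and the BSDE defining $(\Yc^{t,x},\Zc^{t,x})$ admits the nonlinear Feynman--Kac representation
\beqs
\Zc^{t,x}_s & = & \sigma(s,X^{t,x}_s)^\top Dw(s,X^{t,x}_s)\;,\qquad s\in[t,\overline t]\;,\quad \P\text{-a.s.}\;,
\enqs
see e.g.~the Markovian theory in \cite{pardoux1998backward} and \cite{EKPQ97}. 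Since $w(s,\cdot)$ is Lipschitz continuous with the constant provided by Proposition \ref{prop-reg-space-EDPSL}, we have $|Dw(s,x)|\leq e^{(2L_{b,\sigma}+L_{b,\sigma}^2+(L_h\vee 2)^2)(\overline t-\underline t)}(1+(\overline t - \underline t))^{1\over 2}\big( L_m^2+(\overline t - \underline t)(L_h\vee 2)^2\big)^{1\over 2}$ for every $(s,x)\in[\underline t,\overline t)\times\R^d$; combining this with $|\sigma|\leq M_{b,\sigma}$ from \textbf{(H$b,\sigma$)} gives the claimed bound in this smooth setting.

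\textit{Step 2: mollification.} In the general case, for $n\geq1$ let $b^n,\sigma^n$ (resp.~$h^n,m^n$) be obtained by convolving $b,\sigma$ in the space variable (resp.~$h,m$ in the non-time variables) with mollifiers shrinking to the Dirac mass. These coefficients are smooth, they still fulfil the assumptions of Proposition \ref{prop-reg-space-EDPSL} with Lipschitz constants no larger than $L_{b,\sigma},L_h,L_m$ and with bound/growth constants converging to $M_{b,\sigma},M_h,M_m$, and they converge to $b,\sigma,h,m$ uniformly on compact sets. Denote by $(X^{n,t,x},\Yc^{n,t,x},\Zc^{n,t,x})$ the corresponding objects. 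Applying Step 1 to them, and using that the Lipschitz constants entering Proposition \ref{prop-reg-space-EDPSL} have not increased, we obtain for every $n\geq1$
\beqs
|\Zc^{n,t,x}_s| & \leq & M_{b,\sigma}\,e^{(2L_{b,\sigma}+L_{b,\sigma}^2+(L_h\vee 2)^2)(\overline t-\underline t)}(1+(\overline t - \underline t))^{1\over 2}\big( L_m^2+(\overline t - \underline t)(L_h\vee 2)^2\big)^{1\over 2}
\enqs
$d\P\otimes dt$-a.e.~on $\Omega\times[t,\overline t]$.

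\textit{Step 3: passage to the limit.} By the classical stability estimates for Lipschitz BSDEs, together with the $L^2$-convergence $X^{n,t,x}\to X^{t,x}$ coming from standard SDE stability, one gets $\Yc^{n,t,x}\to\Yc^{t,x}$ in $\mathbf{S}^2_{[t,\overline t]}$ and $\Zc^{n,t,x}\to\Zc^{t,x}$ in $\mathbf{H}^2_{[t,\overline t]}$; extracting a subsequence along which $\Zc^{n,t,x}\to\Zc^{t,x}$ for $d\P\otimes dt$-almost every $(\omega,s)\in\Omega\times[t,\overline t]$ and letting $n\to+\infty$ in the bound of Step 2 gives the result. The delicate point is Step 1: one must ensure that for sufficiently regular coefficients the value function is genuinely $C^{1,2}$, so that the identity $\Zc^{t,x}_s=\sigma(s,X^{t,x}_s)^\top Dw(s,X^{t,x}_s)$ is licit, and one must check that the mollification does not inflate any of the Lipschitz constants appearing in Proposition \ref{prop-reg-space-EDPSL}, so that the bound of Step 2 is genuinely uniform in $n$.
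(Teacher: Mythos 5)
Your proof is correct and follows essentially the same route as the paper: mollify the coefficients (without increasing the Lipschitz/bound constants), use the representation $\Zc = \sigma^\top Dw$ for the regularized problem, bound $Dw$ by the Lipschitz constant from Proposition \ref{prop-reg-space-EDPSL} and $\sigma$ by $M_{b,\sigma}$, then pass to the limit via $L^2$-stability of Lipschitz BSDEs and an a.e.-convergent subsequence. The "delicate point" you flag is handled in the paper by invoking Theorem 3.2 of Pardoux--Peng (1991) for the classical solvability of the regularized PDE, exactly as you suggest.
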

\begin{proof}
By a mollification argument, we can find regular functions $b_n$ and $\sigma_n$ satisfying \textbf{(H$b,\sigma$)}  with same constants as $b$, $\sigma$,  $h_n$ and $m_n$ satisfying \textbf{(H$h,m$)} with same constants as $h$ and $m$ for $n\geq 1$ such that
\beq\label{approx-reg-m-h}
(b_n,\sigma_n,h_n,m_n) & \xrightarrow[n\rightarrow+\infty]{} & (b,\sigma,h,m)\;,
\enq
uniformly on compact sets.
We fix now $(t,x)\in [\underline t,\overline t]\times\R^d$ and we denote by
$(X^{t,x,n},\Yc^{t,x,n},\Zc^{t,x,n})\in \mathbf{S}^2_{[t,\overline t]}\times\mathbf{S}^2_{[t,\overline t]}\times \mathbf{H}^2_{[t,\overline t]}$ the solution to 
\beqs
 X^{t,x,n}_u & = & x+\int_t^ub_n(s,X^{t,x,n}_s)ds+\int_t^u\sigma_n(s,X^{t,x,n}_s)dB_s\;,\quad u\in[t,\bar t],
\\
\Yc^{t,x,n}_u & = & m_n (X^{t,x,n}_{\overline t})+\int_u^{\overline t}h_n (s,X^{t,x,n}_s, \Yc^{t,x,n}_s,\Zc^{t,x,n}_s)ds-\int_u^{\overline t}\Zc_s^{t,x,n} dB_s\;,~~u\in[t,\overline t]\;.
\enqs
From \reff{approx-reg-m-h} we get
\beq\label{approxZconv}
{\|Y^{t,x}-Y^{t,x,n}\|}_{\mathbf{S}^2_{[t,\bar t]}}+{\|Z^{t,x}-Z^{t,x,n}\|}_{\mathbf{H}^2_{[t,\bar t]}} & \xrightarrow[n\rightarrow+\infty]{} & 0\;.
\enq
 From Theorem 3.2 in \cite{PP91}, we have 
\beqs
Y^{t,x,n}_s & = & w_n(s, X^{t,x,n})\quad s\in[t,\bar t]\;,
\enqs 
 where $w_n$ is a regular solution to 
 \begin{equation*}\left\{
\begin{array}{l}
-\partial_t  w_n-\Lc  w_n -h_n\big(., w_n,\sigma_n^\top D w_n\big)  =  0\;,~~ \mbox{ on } [\underline t,\overline t)\times\R^d\;,\\
 w_n(\overline t,.)  =  m_n\;,~~ \mbox{ on }\R^d\;.\\
\end{array}\right.
\end{equation*}
From the uniqueness of solutions to Lipschitz BSDEs   we get by applying It\^o's formula
 \beqs
 Z^{t,x,n}_s &= & (\sigma_n^\top Dw_n)(s,X^{t,x}_s)\;,\quad s\in[t,\overline t]\;.
 \enqs
Since $\sigma_n$, $m_n$ and $h_n$ satisfy \textbf{(H$h,m$)}, we get from Proposition \ref{prop-reg-space-EDPSL}
\beqs
\sup_{[\underline t, \overline t]\times\R^d}|Dw_n| & \leq & e^{(2L_{b,\sigma}+L_{b,\sigma}^2+(L_h\vee 2)^2)(\overline t-\underline t)}(1+(\overline t - \underline t))^{1\over 2}\big( L_m^2+(\overline t - \underline t)(L_h\vee 2)^2\big)^{1\over 2}\;.
\enqs
Therefore, we have
\beqs
|\Zc^{t,x,n}_s| & \leq & M_{b,\sigma}e^{(2L_{b,\sigma}+L_{b,\sigma}^2+(L_h\vee 2)^2)(\overline t-\underline t)}(1+(\overline t - \underline t))^{1\over 2}\big( L_m^2+(\overline t - \underline t)(L_h\vee 2)^2\big)^{1\over 2}  \enqs
$d\P\otimes dt ~a.e.$  on $\Omega\times  [t,\overline t]$.
We then conclude using \reff{approxZconv}. \end{proof}
\begin{Proposition}\label{prop-reg-time-EDPSL}
Under \textbf{(H$b,\sigma$)} and \textbf{(H$h,m$)} the unique viscosity solution with linear growth $w$  \reff{PDE-appendix} satisfies the following time regularity property
\beqs
|w(t,x)-w(t',x)| & \leq & \\
 e^{(3L_{b,\sigma}+2L_{b,\sigma}^2+(L_h\vee 2)^2)(\overline t-\underline t)}(1+(\overline t - \underline t))^{3\over 2}\big( L_m^2+(\overline t-\underline t)(L_h\vee 2)^2\big)^{1\over 2}M_{b,\sigma}\sqrt{t-t'} & &\\
 +  M_h\big(M_m+ M_h(\overline t-\underline t)\big)e^{M_h(\overline t-\underline t)}(t'-t)  
 & &\\
 + M_hM_{b,\sigma}e^{(2L_{b,\sigma}+L_{b,\sigma}^2+(L_h\vee 2)^2)(\overline t-\underline t)}(1+(\overline t - \underline t))^{1\over 2}\big( L_m^2+(\overline t - \underline t)(L_h\vee 2)^2\big)^{1\over 2}  (t'-t) & &
\enqs
for all $t,t'\in [\underline t, \overline t]$ and $x\in\R^d$.
\end{Proposition}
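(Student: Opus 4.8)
The plan is to use the BSDE representation $w(\cdot,\cdot)=\Yc^{\cdot,\cdot}_{\,\cdot}$ together with the stochastic flow property, so as to reduce the time regularity of $w$ to the spatial regularity already obtained in Proposition~\ref{prop-reg-space-EDPSL}, the diffusion estimate \reff{estim-reg-diff2}, and the a priori bounds on $(\Yc,\Zc)$. Without loss of generality assume $\underline t\le t\le t'\le\overline t$. Since $\Yc^{t,x}_t$ and $\Yc^{t',x}_{t'}$ are deterministic, taking expectations in the BSDE defining $(\Yc^{t,x},\Zc^{t,x})$ between times $t$ and $t'$, and using that, by uniqueness of the Lipschitz BSDE and the flow property $X^{t,x}_s=X^{t',X^{t,x}_{t'}}_s$ for $s\ge t'$, one has $\Yc^{t,x}_{t'}=w(t',X^{t,x}_{t'})$ $\P$-a.s., I would obtain the identity
\[
w(t,x)-w(t',x)\;=\;\E\big[w(t',X^{t,x}_{t'})-w(t',x)\big]\;+\;\E\Big[\int_t^{t'}h\big(s,X^{t,x}_s,\Yc^{t,x}_s,\Zc^{t,x}_s\big)\,ds\Big].
\]

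For the first term I would apply the spatial Lipschitz estimate of Proposition~\ref{prop-reg-space-EDPSL} to $w(t',\cdot)$, and then bound $\E|X^{t,x}_{t'}-x|$ by Cauchy--Schwarz together with \reff{estim-reg-diff2} applied with $x'=x$ (noting $X^{t',x}_{t'}=x$), which gives an estimate of the form $\E|X^{t,x}_{t'}-x|\le C_{b,\sigma}\,M_{b,\sigma}\sqrt{t'-t}$ with $C_{b,\sigma}$ depending only on $L_{b,\sigma}$ and $\overline t-\underline t$. Multiplying the two bounds yields the first line of the claimed inequality (the $\sqrt{t-t'}$ term).

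For the second term I would use the growth assumption \textbf{(H$h,m$)}(ii) to write $\E|h(s,X^{t,x}_s,\Yc^{t,x}_s,\Zc^{t,x}_s)|\le M_h\big(1+\E|\Yc^{t,x}_s|+\E|\Zc^{t,x}_s|\big)$. The $\Zc$-contribution is controlled directly by the uniform bound of Proposition~\ref{propZborne}, producing the last line of the claimed estimate. For the $\Yc$-contribution I would establish the standard a priori $L^\infty$ bound by taking conditional expectations in the BSDE, using $|m|\le M_m$, the linear growth of $h$, the bound on $\Zc$, and a backward Gronwall lemma, to get $\esssup_{s\in[t,\overline t]}|\Yc^{t,x}_s|\le\big(M_m+M_h(\overline t-\underline t)\big)e^{M_h(\overline t-\underline t)}$ up to the $\Zc$-term already accounted for; integrating over $[t,t']$ then gives the middle line. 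Collecting the three contributions yields the announced inequality.

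The Gronwall-type a priori estimates and the bookkeeping of the explicit constants are routine; the only step requiring some care is the identification $\Yc^{t,x}_{t'}=w(t',X^{t,x}_{t'})$, i.e.\ the Markov/flow property of the BSDE, which follows from uniqueness of solutions to Lipschitz BSDEs (Theorem~1.1 in \cite{pardoux1998backward}) combined with the flow property of $X$. Apart from that, the proof is a direct combination of Propositions~\ref{prop-reg-space-EDPSL} and \ref{propZborne} with elementary moment estimates for the forward diffusion.
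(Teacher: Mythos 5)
Your proposal is correct and follows essentially the same route as the paper: the same decomposition of $w(t,x)-w(t',x)$ via the Markov/flow property of the BSDE, the spatial Lipschitz estimate of Proposition~\ref{prop-reg-space-EDPSL} combined with \reff{estim-reg-diff2} for the $\sqrt{|t-t'|}$ term, the bound of Proposition~\ref{propZborne} for the $\Zc$-contribution, and a Gronwall a priori bound on $\Yc$ for the middle term. The only (cosmetic) difference is that your a priori bound on $\Yc$ is stated in $L^\infty$ form, which in fact reproduces the constant $M_h\big(M_m+M_h(\overline t-\underline t)\big)e^{M_h(\overline t-\underline t)}$ of the statement more directly than the $L^2$ bound used in the paper's proof.
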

\begin{proof}
We take the same notations as in the proof of Proposition \ref{prop-reg-space-EDPSL}. We fix $t,t'\in[\underline t , \overline t]$ such that $t'\leq t$ and $x\in \R^d$. We have
 \beqs
  |w(t,x)-w(t',x)| & = & |\Yc^{t,x}_t-\Yc^{t',x}_{t'}|\\
   & = & \Big|\Yc^{t,x}_t-\E\Big[\Yc^{t,X^{t',x}_t}_{t}+\int_{t'}^tf(s,X^{t',x}_s,\Yc^{t',x}_s,\Zc^{t',x}_s)ds\Big]\Big|\\
   & \leq & \E\Big[|\Yc^{t,x}_t-\Yc^{t,X^{t',x}_t}_{t}|  \Big] + M_h\int_{t'}^{t}(1+\E[|\Yc^{t',x}_s|]+\E[|\Zc^{t',x}_s|])ds
   \enqs
By a classical argument using \textbf{(H$h,m$)}, Young's inequality and  Grownwall's Lemma we have
\beqs
\sup_{s\in [\underline t, \overline t]}\E[|\Yc^{t',x}_s|^2] & \leq & M_m^2+ e^{4(M_h+M_h^2)(\overline t-\underline t)}\;.
\enqs
Then using Proposition \ref{propZborne} we have
\beqs
\E[|\Zc^{t',x}_s|] & \leq & M_{b,\sigma}e^{(2L_{b,\sigma}+L_{b,\sigma}^2+(L_h\vee 2)^2)(\overline t-\underline t)}(1+(\overline t - \underline t))^{1\over 2}\big( L_m^2+(\overline t - \underline t)(L_h\vee 2)^2\big)^{1\over 2}  
\enqs 
for $s\in [t',t]$.
From the regularity w.r.t. the variable $x$ given in Proposition \ref{prop-reg-space-EDPSL}
we get
\beqs
 |w(t,x)-w(t',x)| & \leq &\\
  e^{(2L_{b,\sigma}+L_{b,\sigma}^2+(L_h\vee 2)^2)(\overline t-\underline t)}(1+(\overline t - \underline t))^{1\over 2}\big( L_m^2+(\overline t-\underline t)(L_h\vee 2)^2\big)^{1\over 2} \E[|X^{t,x}_t-X^{t',x}_t|]&  & \\ 
 +  M_h\Big(M_m^2+ e^{4(M_h+M_h^2)(\overline t-\underline t)}+1\Big)(t'-t)   & &\\
 + M_hM_{b,\sigma}e^{(2L_{b,\sigma}+L_{b,\sigma}^2+(L_h\vee 2)^2)(\overline t-\underline t)}(1+(\overline t - \underline t))^{1\over 2}\big( L_m^2+(\overline t - \underline t)(L_h\vee 2)^2\big)^{1\over 2}  (t'-t)\;.  & &
 \enqs
From \reff{estim-reg-diff2} we get
\beqs
 |w(t,x)-w(t',x)| & \leq & \\
  e^{(3L_{b,\sigma}+2L_{b,\sigma}^2+(L_h\vee 2)^2)(\overline t-\underline t)}(1+(\overline t - \underline t))\big( L_m^2+(\overline t-\underline t)(L_h\vee 2)^2\big)^{1\over 2}M_{b,\sigma}\sqrt{t-t'}   & & \\
+  M_h\big(M_m^2+ e^{4(M_h+M_h^2)(\overline t-\underline t)}+1\big)(t'-t) & &\\
 + M_hM_{b,\sigma}e^{(2L_{b,\sigma}+L_{b,\sigma}^2+(L_h\vee 2)^2)(\overline t-\underline t)}(1+(\overline t - \underline t))^{1\over 2}\big( L_m^2+(\overline t - \underline t)(L_h\vee 2)^2\big)^{1\over 2}  (t'-t)\;.   & & 
\enqs

\end{proof}
 \bibliographystyle{plain}
\bibliography{BibKLW20}
\end{document}